\documentclass[11pt]{article}
\date{}
\usepackage[utf8]{inputenc} 
\usepackage[T1]{fontenc}    
\usepackage[colorlinks=true,linkcolor=blue,allcolors=blue]{hyperref}       
\usepackage{url}            
\usepackage{booktabs}       
\usepackage{amsfonts}       
\usepackage{nicefrac}       
\usepackage{microtype,nicefrac}      
\usepackage{xspace}
\usepackage{natbib}
\usepackage[toc]{appendix}
\usepackage{minitoc}
\usepackage{subcaption}
\usepackage{physics}
\usepackage{comment}
\usepackage{amsmath,amsthm}
\DeclareMathOperator*{\argmin}{argmin}
\DeclareMathOperator*{\argmax}{argmax}

\usepackage{graphicx}
\usepackage{rotating}

\usepackage{amssymb}
\usepackage{autobreak}
\usepackage{mathtools}
\usepackage[dvipsnames]{xcolor}
\usepackage{bbm}
\usepackage[ruled,vlined, linesnumbered]{algorithm2e}
\usepackage{algorithmic}
\usepackage[parfill]{parskip}
\usepackage{thmtools}
\usepackage[toc]{appendix}
\usepackage{minitoc}

\makeatletter

\title{Online Min-Max Optimization: From Individual Regrets\\to Cumulative Saddle Points}

\author{%
Abhijeet Vyas\\
  Purdue University\\
  \texttt{vyas26@purdue.edu} 
\and
Brian Bullins\\
  Purdue University\\
  \texttt{bbullins@purdue.edu} \\
}

\begin{document}

\newcommand{\diag}{\operatorname{diag}}
\newcommand{\innp}[1]{\left\langle #1 \right\rangle}
\newcommand{\bdot}[1]{\mathbf{\dot{ #1 }}}
\newcommand{\OPT}{\operatorname{OPT}}
\newcommand{\mA}{\mathbf{A}}
\newcommand{\ones}{\mathds{1}}
\newcommand{\zeros}{\textbf{0}}
\newcommand{\vx}{\mathbf{x}}
\newcommand{\tF}{\Tilde{F}}
\newcommand{\cF}{\mathcal{F}}
\newcommand{\vp}{\mathbf{p}}
\newcommand{\cx}{\mathcal{X}}
\newcommand{\cy}{\mathcal{Y}}
\newcommand{\cu}{\mathcal{U}}
\newcommand{\cf}{\mathcal{F}}
\newcommand{\cfb}{\bar{\mathcal{F}}}
\newcommand{\cc}{\mathcal{C}}
\newcommand{\cl}{\mathcal{K}}
\newcommand{\cz}{\mathcal{Z}}
\newcommand{\vxh}{\mathbf{\hat{x}}}
\newcommand{\vyh}{\mathbf{\hat{y}}}
\newcommand{\vzh}{\mathbf{\hat{z}}}
\newcommand{\vy}{\mathbf{y}}
\newcommand{\vz}{\mathbf{z}}
\newcommand{\vv}{\mathbf{v}}
\newcommand{\ve}{\mathbf{e}}
\newcommand{\vw}{\mathbf{w}}
\newcommand{\vvh}{\mathbf{\hat{v}}}
\newcommand{\vg}{\mathbf{g}}
\newcommand{\vub}{\bar{\mathbf{u}}}
\newcommand{\vvb}{\bar{\mathbf{v}}}
\newcommand{\vuh}{\hat{\mathbf{u}}}
\newcommand{\veta}{\bm{\eta}}
\newcommand{\vetab}{\bar{\bm{\eta}}}
\newcommand{\vetah}{\bm{\hat{\eta}}}
\newcommand{\defeq}{\stackrel{\mathrm{\scriptscriptstyle def}}{=}}
\newcommand{\etal}{\textit{et al}.}
\newcommand{\tnabla}{\widetilde{\nabla}}
\newcommand{\tE}{\widetilde{E}}
\newcommand{\rr}{\mathbb{R}}
\newcommand{\ee}{\mathbb{E}}

\newcommand{\littlesum}{\mathop{\textstyle\sum}}

\newcommand{\adgt}{\textsc{adgt}}

\makeatletter
\def\mathcolor#1#{\@mathcolor{#1}}
\def\@mathcolor#1#2#3{%
  \protect\leavevmode
  \begingroup
    \color#1{#2}#3%
  \endgroup
}
\makeatother

\newcommand*{\sepfbox}[1]{%
  \begingroup
    \sbox0{\fbox{#1}}%
    \setlength{\fboxrule}{0pt}%
    \fbox{\unhbox0}%
  \endgroup
}
\newcommand*{\vsepfbox}[1]{%
  \begingroup
    \sbox0{\fbox{#1}}%
    \setlength{\fboxrule}{0pt}%
    \mbox{\kern-\fboxsep\fbox{\unhbox0}\kern-\fboxsep}%
  \endgroup
}

\newtheorem{theorem}{Theorem}[section]
\numberwithin{theorem}{section}
\newtheorem{corollary}[theorem]{Corollary}
\newtheorem{conjecture}{Conjecture}
\newtheorem{lemma}[theorem]{Lemma}
\newtheorem{proposition}[theorem]{Proposition}
\newtheorem{claim}[theorem]{Claim}
\newtheorem{fact}[theorem]{Fact}
 \newtheorem{definition}[theorem]{Definition}
\newtheorem{finalremark}[theorem]{Final Remark}
 \newtheorem{remark}[theorem]{Remark}
 \newtheorem{example}[theorem]{Example}
\newtheorem{observation}[theorem]{Observation}
\newtheorem{assumption}{Assumption}

\makeatletter
\newcommand{\subalign}[1]{%
  \vcenter{%
    \Let@ \restore@math@cr \default@tag
    \baselineskip\fontdimen10 \scriptfont\tw@
    \advance\baselineskip\fontdimen12 \scriptfont\tw@
    \lineskip\thr@@\fontdimen8 \scriptfont\thr@@
    \lineskiplimit\lineskip
    \ialign{\hfil$\m@th\scriptstyle##$&$\m@th\scriptstyle{}##$\hfil\crcr
      #1\crcr
    }%
  }%
}
\makeatother

\newcommand{\sne}{$\text{SNE-Reg}_T$}
\newcommand{\sdg}{$\text{SDual-Gap}_T$}
\newcommand{\ec}{$\text{min-max EC}$}
\newcommand{\ommns}{\text{OMMNS}}
\newcommand{\dyne}{$\text{DNE-Reg}_T$}
\newcommand{\dgap}{$\text{Dual-Gap}_T$}
\newcommand{\dsp}{$\text{DSP-Reg}_T$}
\newcommand{\dreg}{$\text{D-Regret}_T$}
\newcommand{\sareg}{$\text{SA-Regret}_T$}
\newcommand{\sasp}{$\text{SASP-Regret}_T$}
\newcommand{\sreg}{$\text{S-Regret}_T$}

\newcommand\xx{\boldsymbol{\mathit{x}}}
\newcommand\yy{\boldsymbol{\mathit{y}}}
\newcommand\zz{\boldsymbol{\mathit{z}}}
\newcommand{\R}{\mathbb{R}}

\newcommand\calB{\mathcal{B}}
\newcommand\calX{\mathcal{X}}
\newcommand\calY{\mathcal{Y}}
\newcommand\calZ{\mathcal{Z}}

\newcommand{\pa}[1]{\left(#1\right)}
\newcommand{\ang}[1]{\left\langle #1\right\rangle}

\doparttoc 
\faketableofcontents 

\maketitle

\begin{abstract}
We propose and study an online version of min-max optimization based on cumulative saddle points under a variety of performance measures beyond convex-concave settings. After first observing the incompatibility of (static) Nash equilibrium (SNE-Reg$_T$) with individual regrets even for strongly convex-strongly concave functions, we propose an alternate \emph{static} duality gap (\sdg) inspired by the online convex optimization (OCO) framework. We provide algorithms that, using a reduction to classic OCO problems, achieve bounds for \sdg~and a novel \emph{dynamic} saddle point regret (\dsp), which we suggest naturally represents a min-max version of the dynamic regret in OCO. We derive our bounds for \sdg~and \dsp~under strong convexity-strong concavity and a min-max notion of exponential concavity (min-max EC), and in addition we establish a class of functions satisfying \ec~that captures a two-player variant of the classic portfolio selection problem. Finally, for a dynamic notion of regret compatible with individual regrets, we derive bounds under a two-sided Polyak-\L{}ojasiewicz (PL) condition.
\end{abstract}
\section{Introduction}

Online convex optimization (OCO) is a foundational framework that has been extensively studied over the past several decades for sequential decision-making tasks \citep{zinkevich2003online, hazan2016introduction}. In this setting, we consider the static regret after $T$ rounds, i.e.,
\begin{equation}\label{eqn:sreg}
    \text{S-Regret}_T = \sum_{t=1}^T g_t(z_t) - \min_{z\in \cz} \sum_{t=1}^T g_t(z)
\end{equation}
where $\{g_t\}_{t=1}^T$ is a sequence of convex functions for a convex decision set $\mathcal{Z}$ and $\{z_t\}_{t=1}^T$ are the actions of the player. $\text{S-Regret}_T$ has served as a standard performance measure, with well-established tight bounds of $\Theta(\sqrt{T})$ for general convex functions and $\Theta(\log(T))$ for strongly convex functions~\citep{abernethy2008optimal}. Meanwhile, for \emph{dynamic} regret, which accounts for changing environments, numerous algorithms have been developed for convex \citep{besbes2015non, cesa2012mirror, zhang2018adaptive, hsieh2021adaptive} and strongly convex functions \citep{mokhtari2016online, zhang2018dynamic,zhao2021improved}. 

While the OCO framework generalizes the offline minimization problem $\min_{z\in \cz} f(z)$, the min-max optimization problem, relevant to a multitude of fields such as adversarial machine learning, multi-agent reinforcement learning, robust optimization and risk assessment is defined as
$$\min_{x\in \cx}\max_{y\in \cy} f(x,y)$$ for convex decision sets $\cx \subseteq \mathbb{R}^m$, $\cy \subseteq \mathbb{R}^n$, and where we let $d := m+n$ denote the overall problem dimension. Building on the OCO framework and min-max optimization, the emerging field of \emph{learning in time-varying zero-sum games} has gained significant attention, particularly in addressing bilinear and convex-concave settings. The key objective of this setting is that each player aims to minimize their individual regret, i.e.,
\begin{align*}
    \text{Reg}^1_T = \sum_{t=1}^T f_t(x_t,y_t)-\min_x \sum_{t=1}^T f_t(x,y_t),~~~~~\text{Reg}^2_T = \max_y \sum_{t=1}^T f_t(x_t,y)-\sum_{t=1}^T f_t(x_t,y_t),
\end{align*}
which is the static regret\footnote{Other works (e.g.,~\cite{meng2024proximal}) have considered objectives based on a dynamic version of the individual regrets.}  of each player taking the other's actions into account. 

Early studies focus on bilinear settings \citep{rivera2019competing, huangonline, zhang2022no} and show that static notions of performance are inadequate for capturing the complexities of online min-max problems. In particular, \cite{zhang2022no} show that the individual regrets of each player are not bounded by the static Nash equilibrium regret (\sne), which was studied by \cite{rivera2019competing}, and that no algorithm can achieve sub-linear bound for the individual regrets and \sne~simultaneously.

To get around this difficulty, \cite{zhang2022no} introduce alternative \emph{dynamic} notions of regret, namely the cumulative duality gap (\dgap) and the dynamic Nash Equilibrium regret (\dyne), which are compatible with the individual regrets.

\subsection{Our contributions}
While \cite{meng2024proximal} extend this study to more general convex-concave functions
using an online version of the proximal point method (PPM), we show (Example \ref{ex:dyne}) that even when the \dyne~is sublinear, it is possible that the average action pairs of the players never approach the \emph{cumulative} saddle point, i.e., $(x',y')$ where $$x' =\argmin_x \max_y \sum_{t=1}^T f_t(x,y),~~~~~y' = \argmax_y \min_x\sum_{t=1}^T f_t(x,y).$$ This observation suggests a fundamental distinction between minimizing the individual regrets and finding the cumulative saddle point, whereby the latter has been far less explored. In this work, we bridge this gap by providing new algorithms designed specifically to address the latter objective. We term the study dealing with this \textit{cumulative} saddle-point, online min-max optimization (OMMO).

In order to study the cumulative saddle-point we introduce two novel notions of regret, \sdg\\and~\dsp. To define these we first introduce the \emph{static gap function} $g'_t(x,y) =  f_t(x,y')- f_t(x',y)$, where the point $(x',y')$ is as defined above. In terms of $\{g'_t\}_{t=1}^T$ we have
$$\text{SDual-Gap}_T\defeq \sum_{t=1}^T g_t'(x_t,y_t)~~~\text{and}~~~\text{DSP-Reg}_T \defeq \sum_{t=1}^T g_t'(x_t,y_t)-\sum_{t=1}^T \min_{x,y} g_t'(x,y).$$

\paragraph{Static performance measures in OMMO.}
 The \sdg~can be expressed as the static regret (Eq.~\eqref{eqn:sreg}) of the functions $\{g'_t\}_{t=1}^T$ since $\min_{(x,y)\in \cx\times\cy} \sum_{t=1}^T g'_t(x,y)=0$. When all arriving functions are identical, \sdg~reduces to the standard gap function in the offline min-max setting, and it is closely related to the gap function studied in \cite{jordan2024adaptive}. We show (Theorem \ref{thm:OGDA}) that online mirror descent (OGDA) achieves an upper bound of $O(\log T)$ on \sdg~in the strongly convex-strongly concave setting, and we further provide an online min-max Newton step algorithm (\ommns) that achieves a static duality gap of $O(d\log T)$ under the \ec~assumption (Theorem \ref{thm:staticons}), with extensions to a more general online variational inequality (VI) setting (Theorem \ref{thm:final}). For the \ec~condition we show through Example \ref{ex:log} that it naturally captures an adversarial version of portfolio selection. Notably, the \sdg~is minimized by OGDA and OMMNS without the knowledge of $\{g'_t\}_{t=1}^T$ (since the point $z'$ is not known). Furthermore, we show that the time average of the players actions, $(\frac{\sum_{t=1}^T x_t}{T},\frac{\sum_{t=1}^T y_t}{T})$ converges to $z'$ in the $\ell_2$ squared-norm distance at a rate of $O(\frac{\log T}{T})$ and $O(\frac{d\log T}{T})$ for OGDA and OMMNS, respectively.
Finally, we motivate the need for alternative notions of regret for approximating cumulative saddle points by showing in Example \ref{ex:dyne} that there exists a sequence of functions $\{f_t\}_{t=1}^T$ such that the average strategies of the players may maintain a distance of $\Theta
(1)$ from the cumulative saddle-point, even when \dyne~is $O(1)$.

 \paragraph{Dynamic performance measures in OMMO.}
The~\dsp~is equivalent to the dynamic regret in the OCO setting with the functions $\{g'_t\}_{t=1}^T$. For the \dsp~we show by a reduction to OCO that OGDA and \ommns, when used as base learners in the sleeping experts framework \citep{daniely2015strongly}, achieve a dynamic saddle point regret of $O( \max\{\log T,\sqrt{TV_T\log T}\})$ and $O(d \max\{\log T,\sqrt{TV_T\log T}\})$ (Theorem \ref{thm:dynecsm-combined}) for strongly convex-strongly concave and \ec~\\functions, respectively.\footnote{Throughout, $C_T,U_T,V_T$ refer to measures of uncertainty, which we further detail in Section \ref{sec:timevarying}.} Furthermore, we show (Theorem \ref{thm:dydyup}) under the separability assumption, that \dsp~upper bounds the Dual-Gap$_T$, a key measure of performance studied under time varying zero-sum games.

\paragraph{Time varying zero-sum games.} In this setting aimed at minimizing the individual regrets, we obtain a bound of $O(U_T)$ (Theorem \ref{thm:agda2pl}) on the duality gap (\dgap) for the online AGDA algorithm under a two-sided PL condition. Extending the result of \cite{cardoso2019competing}, we further show (Theorem \ref{thm:imp}) that the incompatibility of \sne~extends to strongly convex-strongly concave and thus to \ec ~and two sided-PL class of functions.

\section{Preliminaries}
In this section we provide the key definitions of the framework, starting with global assumptions we rely on throughout the paper.

\begin{assumption}[Lipschitz continuity]\label{asmp:1}
    For convex sets $\cx$ and $\cy$, we consider functions $f:\cx\times \cy \rightarrow \mathbb{R}$ that are Lipschitz continuous with parameter $L_0$, i.e., for any two points $(x_a,y_a),(x_b,y_b) \in \cx \times \cy$, $\|f(x_a,y_a)-f(x_b,y_b)\| \leq L_0\|(x_a,y_a)-(x_b,y_b)\|$.
\end{assumption}

\begin{definition}[Saddle point]
    A saddle point of a function $f:\cx\times \cy \rightarrow \mathbb{R}$ is a point $(x^*,y^*)$ that satisfies
    $$f(x,y^*) \geq f(x^*,y^*) \geq f(x^*,y)~\forall~(x,y)\in \cx\times\cy.$$
\end{definition} 

\subsection{The online min-max setting}
The basic online framework is as follows. For $t=1,\dots,T$:

\begin{itemize}
    \item Players 1 and 2 choose actions $x_t \in \cx \subseteq \mathbb{R}^m$ and $y_t \in \cy \subseteq \mathbb{R}^n$, respectively. 

    \item The environment chooses a function $f_t(x,y):\cx\times\cy \rightarrow \mathbb{R}$. The players suffer losses of $f_t(x_t,y_t)$ and $-f_t(x_t,y_t)$, respectively, satisfying Assumption \ref{asmp:1}.

    \item The players observe the sub-gradient vectors $g_1 \in \partial_x h_1(x_t)$ and $g_2 \in \partial_y h_2(y_t)$, respectively, where $h_1(x) = f_t(x,y_t)$ and $h_2(y) = f_t(x_t,y)$.
\end{itemize}
Following the definition of \sreg, the individual regrets of each player are defined by incorporating the other player's action into the function and then comparing against the best comparator in hindsight, i.e., player 1 suffers a regret of $\text{Reg}^1_T = \sum_{t=1}^T  f_t^y(x_t)-\min_x \sum_{t=1}^T  f_t^y(x)$ where $f_t^y(x) = f_t(x,y_t)$ and player 2 suffers a regret of $\text{Reg}^2_T = \sum_{t=1}^T  f_t^x(y)-\min_y \sum_{t=1}^T  f_t^x(y)$ where $f_t^x(y) = -f(x_t,y)$. A dynamic version of these regrets defined as $\sum_{t=1}^T  f_t^y(x_t)-\sum_{t=1}^T  \min_x f_t^y(x)$ (and analogously for player 2) was considered in \cite{meng2024proximal}. The cumulative duality gap bounds the sum of the individual regrets (for both the static and dynamic notions of individual regrets).

\section{OMMO for Functions with Lower Regularity}\label{sec:dynreg}

We begin by presenting the two lower regularity conditions we consider for the OMMO setting, namely strong convexity-strong concavity and \ec.

\begin{definition}[Strong convexity-strong concavity]\label{def:strcon}
    A function $f$ is $\lambda$-strongly convex-strongly concave if for any point $(x_c,y_c)\in \cx\times\cy$ we have, $f(x_b,y_c) \geq f(x_a,y_c)+\nabla_x f(x_a,y_c)+\frac{\lambda}{2} \|x_b-x_b\|^2~\forall~x_a,x_b\in \cx$ and $f(x_c,y_a) \geq f(x_c,y_b)-\nabla_y f(x_c,y_a)+\frac{\lambda}{2} \|y_b-y_a\|^2~\forall~y_a,y_b\in \cy$
\end{definition}

Note that $\lambda=0$ corresponds to the convex-concave function class. In order to define \ec,~we first present the definition of the exponentially concave function class \citep{cesa2006prediction,hazan2016introduction}, which appears naturally in the OCO setting of portfolio selection.

\begin{definition}[Exponentially Concavity (EC)]\label{def:expc}
    A function $f(x)$ is $\alpha$-exponentially concave if $h(x) = e^{-\alpha f(x)}$ is concave.
\end{definition}
We now introduce the following generalization of exponential concavity to min-max settings.
\begin{definition}[\ec]
    A function $f$ satisfies \ec~with parameter $\alpha$ if $g_{y_c}(x) = f(x,y_c)$ is $\alpha$-exponentially concave for all $y_c \in \mathcal{Y}$ and 
    $g_{x_c}(y) = -f(x_c,y)$ is $\alpha$-exponentially concave for all $x_c \in \mathcal{X}$.
\end{definition}

Inspired by previous work on universal portfolio selection~\citep{cover1991universal,hazan2016introduction}, we motivate this definition by considering the following class of min-max EC functions, which, as we next discuss, naturally capture a variant of portfolio selection \emph{subject to adversarial manipulation of the stock price updates}.

\begin{restatable}{example}{logexample}
\label{ex:log}
The function \( f: \mathbb{R}_+^n \times \mathbb{R}_+^n \to \mathbb{R} \), 
\[
f(x,y) = -\ln \big(x^\top A (1/y)\big)
\]
for diagonal matrices $A \in \mathbb{R}_+^{n\times n}$ is \ec~with parameter $\alpha = 1$. 

\end{restatable}

\paragraph{Two-player portfolio selection.}

As a scenario inspired by portfolio selection \citep{cover1991universal,hazan2016introduction} and Example \ref{ex:log}, consider $A_t$ to be a diagonal matrix of prices of stocks, bonds or commodities, and let $\cy$ represent a player that can manipulate the prices of these stocks. Then,
\[
r_t = A_t \cdot \left(\frac{1}{y_t}\right)
\]
represents the ratio of the price of the manipulated stocks, 
\[
r_t(i) = \frac{\text{price of Stock}(i)\text{ at time } t+1}{\text{price of Stock}(i)\text{ at time } t}.
\]
The player $\cx$ first creates a portfolio based on these stocks and then $A_t$ is chosen by the environment with their wealth $W_t$ following
\[
W_{t+1} = W_t \cdot r_t^\top x_t,
\]
as in the case of portfolio selection (see, e.g., Section 4.1.2 in~\citep{hazan2016introduction}). The cumulative saddle point of $\sum_{t=1}^T x A_t(\frac{1}{y})$ therefore represents the fixed rebalanced portfolio for the $\mathcal{X}$ player and fixed manipulation for the $\cy$ player that are best in hindsight. As we will see in Section \ref{app:findingsaddle}, under our algorithm \ommns, the average actions of the players converge to the cumulative saddle-point in the \ec~setting.

\subsection{Static performance}\label{subsec:static}
In this section we consider the rates on static duality gap achieved by algorithms such as OMMNS and OGDA.
We now show that the OGDA~and \ommns~algorithms achieve a rate of $O(\log(T))$ on \sdg~in the strongly convex-strongly concave and \ec~setting.
\begin{algorithm}
\caption{Online Gradient Descent Ascent (OGDA)}
\label{alg:ogda}
\KwIn{$(x_0,y_0), (u_0^x,u_0^y) \in \mathcal{X} \times \mathcal{Y}$}
\textbf{Initialize:} $A_0=0$\\
\For{$t = 1$ \textbf{to} $T$}{
    Play $(x_t,y_t)$, observe cost $f_t(x_t,y_t)$ and $F(x_t,y_t)$\\
    Regularity update: $A_t = tI$\\
    $(u_{t+1}^x,u_{t+1}^y) = (x_t,y_t) - \frac{1}{\gamma} A_t^{-1} F(x_t,y_t)$\\
    $(x_{t+1},y_{t+1}) = \argmin_{(x,y) \in \cx\times\cy} \|(u_{t+1}^x,u_{t+1}^y) - (x,y)\|_{A_t}^2$\
}
\end{algorithm}
\begin{restatable}{theorem}{OGDATheorem}
\label{thm:OGDA}
Online gradient descent ascent (Algorithm \ref{alg:ogda}), when run on $\lambda$ strongly convex-strongly concave functions $f_t$ over domain $\cx\times \cy \subseteq \mathbb{R}^d$ with maximum operator norm $L_0$, generate action-pairs $\{x_t,y_t\}_{t=1}^T$ such that \sdg$\leq\frac{L_0^2}{\lambda}\log T $.  
\end{restatable}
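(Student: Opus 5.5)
We bound \sdg~by comparing each $g'_t$ at the iterate to its value at the cumulative saddle point $z'=(x',y')$, and then run the standard logarithmic-regret argument for online gradient descent with step sizes $\frac{1}{\gamma t}$, taking $\gamma=\lambda$. Write $z_t=(x_t,y_t)$ and let $F_t(z)=(\nabla_x f_t(x,y),-\nabla_y f_t(x,y))$ be the monotone operator the algorithm queries.

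\textbf{Step 1 (per-round gap bound).} Strong convexity of $f_t(\cdot,y_t)$ at $x_t$ gives
\[
f_t(x',y_t)\ge f_t(x_t,y_t)+\langle \nabla_x f_t(x_t,y_t),x'-x_t\rangle+\tfrac{\lambda}{2}\|x'-x_t\|^2 .
\]
Strong concavity of $f_t(x_t,\cdot)$ at $y_t$ gives the symmetric inequality for $f_t(x_t,y')$. Subtracting the two, and using that $f_t(x_t,y')-f_t(x',y_t)$ dominates $f_t(x_t,y_t)-f_t(x',y_t)+f_t(x_t,y')-f_t(x_t,y_t)$ (the two expressions are equal), we get
\[
g'_t(z_t)=f_t(x_t,y')-f_t(x',y_t)\le \langle F_t(z_t),z_t-z'\rangle-\tfrac{\lambda}{2}\|z_t-z'\|^2 .
\]
So \sdg~is bounded by a linearized regret against the fixed comparator $z'$, minus a quadratic term. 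This is exactly the OCO strong-convexity inequality.

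\textbf{Step 2 (one-step OGD analysis).} With $A_t=tI$, the update is projected gradient descent with step $\eta_t=\frac{1}{\gamma t}$. Nonexpansiveness of the Euclidean projection gives
\[
\|z_{t+1}-z'\|^2\le\|z_t-z'\|^2-2\eta_t\langle F_t(z_t),z_t-z'\rangle+\eta_t^2\|F_t(z_t)\|^2 .
\]
Rearranging,
\[
\langle F_t(z_t),z_t-z'\rangle\le \frac{\|z_t-z'\|^2-\|z_{t+1}-z'\|^2}{2\eta_t}+\frac{\eta_t}{2}L_0^2 ,
\]
where $\|F_t\|\le L_0$ by Assumption \ref{asmp:1}.

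\textbf{Step 3 (telescoping).} Summing over $t$ and combining with Step 1, the coefficient of $\|z_t-z'\|^2$ is
\[
\tfrac{1}{2}\Big(\tfrac{1}{\eta_t}-\tfrac{1}{\eta_{t-1}}-\lambda\Big)=\tfrac{1}{2}(\gamma-\lambda),
\]
which is $0$ for $\gamma=\lambda$, with the $t=1$ term handled by $1/\eta_0=0$. What remains is
\[
\sum_t \frac{L_0^2}{2\lambda t}\le \frac{L_0^2}{2\lambda}(1+\log T),
\]
which is of the form $\frac{L_0^2}{\lambda}\log T$ up to the usual constant/additive slack (matching Hazan's Theorem 3.3). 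Note that $z'$ never needs to be known: it only appears as the comparator in the analysis.

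\textbf{Main obstacle.} The delicate part is Step 1. The gap $g'_t$ pairs $x_t$ with $y'$ and $x'$ with $y_t$, so the linearization must be done around $(x_t,y_t)$ separately in each coordinate and then recombined. One must also check that the strong-convexity terms add to $\frac{\lambda}{2}\|z_t-z'\|^2$ in the joint norm, since $\|x\|^2+\|y\|^2$ is exactly the squared joint norm. A secondary point is matching the stated constant exactly: I would either absorb the $+1$ into the bound (assuming $T\ge 3$) or note that it is the standard slack.
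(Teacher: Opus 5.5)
Your proposal is correct and follows essentially the paper's argument: Step 1 is exactly Theorem~\ref{thm:strstr} applied with $z_b=z_t$, $z_a=z'$. Steps 2--3 are Lemma~\ref{lem:lra} specialized to $A_t=tI$, so the $A_t$-norm projection is Euclidean with step $\frac{1}{\gamma t}$; taking $\gamma=\lambda$ gives the same $\frac{L_0^2}{2\lambda}(1+\log T)\le\frac{L_0^2}{\lambda}\log T$ for $T\ge 3$. Your use of $1/\eta_0=0$ (i.e.\ $A_0=0$) to cancel the $t=1$ quadratic term is cleaner than the paper's write-up, which carries an extra $\frac{\gamma}{2}(z_a-z')^\top A_1(z_a-z')$ term and then drops it without comment.
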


While for the strongly convex-strongly concave case it is sufficient to increment $A_t$ by $I$, for the \ec~case we increment $A_t$  by the outer product $F(x,y)F(x,y)^\top$ in order to guide the \ommns~ algorithm towards the cumulative saddle point.

\begin{algorithm}
\caption{Online Min-Max Newton Step (OMMNS)}
\label{alg:lraalg2}
\KwIn{$(x_0,y_0), (u_0^x,u_0^y) \in \mathcal{X} \times \mathcal{Y}$}
\textbf{Initialize:} $A_0=I$\\
\For{$t = 1$ \textbf{to} $T$}{
    Play $(x_t,y_t)$, observe cost $f_t(x_t,y_t)$ and $F(x_t,y_t)$\\
    Regularity update: $A_t = A_{t-1} + F_t(x_t,y_t)F_t(x_t,y_t)^\top$\\
    $(u_{t+1}^x,u_{t+1}^y) = (x_t,y_t) - \frac{1}{\gamma} A_t^{-1} F(x_t,y_t)$\\
    $(x_{t+1},y_{t+1}) = \argmin_{(x,y) \in \cx\times\cy} \|(u_{t+1}^x,u_{t+1}^y) - (x,y)\|_{A_t}^2$\
}
\end{algorithm}

\begin{restatable}{theorem}{staticonsTheorem}
\label{thm:staticons}
Online min-max Newton step (Algorithm \ref{alg:lraalg2}), when run on \ec~functions $f_t$ defined over domain $\cx\times\cy \subseteq \mathbb{R}^d$ with the diameter of the domain  $\max_{(x_1,y_1),(x_2,y_2)\in \cx\times\cy }\{\|(x_1,y_1)-(x_2,y_2)\|\} \leq D$ and maximum operator norm $L_0$, generate action-pairs $\{x_t,y_t\}_{t=1}^T$ such that \sdg$\leq2d(\frac{1}{\alpha} + L_0D)\log T$.
\end{restatable}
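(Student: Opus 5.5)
The plan is to mimic the Online Newton Step analysis of \citet{hazan2016introduction}, using the operator $F_t(x,y) = (\nabla_x f_t(x,y), -\nabla_y f_t(x,y))$ and the comparator $z' = (x',y')$ as the fixed point. Write $z_t=(x_t,y_t)$.

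\textbf{Step 1 (min-max EC gives a quadratic lower bound).} Exponential concavity of $x\mapsto f_t(x,y_t)$ with $\gamma \le \frac12\min\{\alpha, \frac{1}{4L_0D}\}$ yields the standard ONS inequality
\[
f_t(x',y_t) \ge f_t(x_t,y_t) + \nabla_x f_t(z_t)^\top (x'-x_t) + \tfrac{\gamma}{2}\big(\nabla_x f_t(z_t)^\top(x'-x_t)\big)^2 .
\]
The same inequality holds for $-f_t(x_t,\cdot)$ with comparator $y'$. Summing the two bounds gives
\[
f_t(x_t,y') - f_t(x',y_t) \le F_t(z_t)^\top(z_t-z') - \tfrac{\gamma}{2}\Big[(a_t)^2+(b_t)^2\Big],
\]
where $a_t,b_t$ are the $x$- and $y$-components of $F_t(z_t)^\top(z_t-z')$. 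The left-hand side above is exactly $g'_t(z_t)$.

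\textbf{Step 2 (the main obstacle: decoupling the quadratic term).} The algorithm uses $A_t$ built from the joint outer product $F_tF_t^\top$. The ONS telescoping argument therefore needs $-\frac{\gamma}{2}\big(F_t^\top(z_t-z')\big)^2$, but Step 1 gives $-\frac{\gamma}{2}(a_t^2+b_t^2)$. These are related by $(a_t+b_t)^2\le 2(a_t^2+b_t^2)$. I would use this to get $-\frac{\gamma}{2}(a_t^2+b_t^2)\le -\frac{\gamma}{4}(F_t^\top(z_t-z'))^2$, and then run the analysis with $\gamma/2$ in place of $\gamma$. This costs a constant factor, which is absorbed into the final constant $2d(\frac1\alpha+L_0D)$. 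The EC-to-quadratic lemma also needs $|\gamma\,\nabla^\top(x-x_t)|\le \frac14$, which holds since $\gamma\le\frac{1}{4L_0D}$.

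\textbf{Step 3 (ONS telescoping).} The update is $u_{t+1}=z_t-\frac1\gamma A_t^{-1}F_t$ followed by the $A_t$-norm projection. By the Pythagorean property of this projection,
\[
\|z_{t+1}-z'\|_{A_t}^2 \le \|z_t-z'\|_{A_t}^2 - \tfrac{2}{\gamma}F_t^\top(z_t-z') + \tfrac{1}{\gamma^2}F_t^\top A_t^{-1}F_t .
\]
This rearranges to $2F_t^\top(z_t-z') \le \frac1\gamma F_t^\top A_t^{-1}F_t + \gamma(\|z_t-z'\|_{A_t}^2-\|z_{t+1}-z'\|_{A_t}^2)$. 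Summing over $t$ and using $A_t-A_{t-1}=F_tF_t^\top$, the distance terms telescope. What remains is $\gamma\sum_t (F_t^\top(z_t-z'))^2$ plus the initial term $\gamma\|z_1-z'\|_{I}^2\le\gamma D^2$. The quadratic sum cancels against the negative quadratic term from Step 2, since the coefficients match for the chosen $\gamma$.

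\textbf{Step 4 (log-determinant bound).} It remains to bound $\sum_t F_t^\top A_t^{-1}F_t \le \log\frac{|A_T|}{|A_0|}\le d\log(TL_0^2+1)$, which is the standard ONS lemma since $\|F_t\|\le L_0$. Combining the steps gives
\[
\text{SDual-Gap}_T \le \tfrac{d}{\gamma}\log(TL_0^2+1)+\gamma D^2 .
\]
Choosing $\frac1\gamma$ of order $\frac1\alpha+L_0D$ and simplifying (after the usual rescaling of $A_0$ to $\epsilon I$) yields $2d(\frac1\alpha+L_0D)\log T$. Throughout, $\sum_t g'_t(z_t)$ is controlled without knowing $z'$, because $z'$ only appears as the analysis comparator.
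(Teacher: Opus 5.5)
Your route is the paper's route: the projection/telescoping inequality of Lemma~\ref{lem:lra}, the per-player exp-concavity bound of Lemma~\ref{lem:ecec2} with $z_a=z'$, $z_b=z_t$, and the log-determinant bound. Your Step~2 is actually more careful than the paper's proof. After invoking Lemma~\ref{lem:ecec2}, the paper simply sets $M_t(z_t)=F(z_t)F(z_t)^\top$. That lemma, however, delivers the block-diagonal matrix (your $a_t^2+b_t^2$), while OMMNS increments $A_t$ by the rank-one $F_tF_t^\top$ (your $(a_t+b_t)^2$). The inequality $(a+b)^2\le 2(a^2+b^2)$ appears in the paper only in Theorem~\ref{thm:gprime-combined}. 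Halving the step parameter, as you do, is what makes the cancellation in Step~3 legitimate.

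The gap is the claim that the constant-factor loss is ``absorbed into'' $2d(\frac{1}{\alpha}+L_0D)$. The statement has an explicit constant, and your choices do not produce it. In Step~1 you take $\gamma_0\le\frac12\min\{\alpha,\frac{1}{4L_0D}\}$ and run the algorithm with $\gamma=\gamma_0/2$. The telescoping then gives $\text{SDual-Gap}_T\le\frac{1}{2\gamma}\sum_t F_t^\top A_t^{-1}F_t+\frac{\gamma}{2}\|z_1-z'\|_{A_0}^2$; your final display is twice this. Here $\frac{1}{2\gamma}=\frac{1}{\gamma_0}=2\max\{\frac1\alpha,4L_0D\}$, which exceeds $2(\frac1\alpha+L_0D)$ as soon as $L_0D>\frac{1}{3\alpha}$. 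So as written you prove only $O\big(d(\frac1\alpha+L_0D)\log T\big)$.

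To recover the stated constant, use that $-\ln(1-u)\ge u+\frac{u^2}{4}$ holds on all of $[-1,1)$. Hence the per-player quadratic bound already holds with $\gamma_0=\frac12\min\{\alpha,\frac{1}{L_0D}\}$. Then run OMMNS with $\gamma=\gamma_0/2$ and $A_0=\epsilon I$, $\epsilon=\frac{1}{\gamma^2D^2}$. This rescaling is genuinely needed: with $A_0=I$, as in Algorithm~\ref{alg:lraalg2}, the log-determinant term is $d\log(1+TL_0^2)$ rather than $d\log T$. Since $\gamma L_0D\le\frac14$, you get $\text{SDual-Gap}_T\le\frac{1}{\gamma_0}\big(d\log(1+\frac{T}{16})+1\big)\le 2(\frac1\alpha+L_0D)\,d\log T$ for $T\ge 4$.
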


From the definition of the static duality gap, \sdg$= \sum_{t=1}^T g_t'(x_t,y_t)$, where $g_t'(x_t,y_t) = f_t(x_t,y')-f_t(x',y_t)$. Recall that in the OCO setting, the static regret is defined as the difference in the performance between the algorithms actions and the optimal comparator in hindsight, i.e., for a series of functions $g_t$ we have that for any action pair $z=(x,y)\in \cx\times\cy$, the static regret is defined as $\text{S-Regret}_T \defeq \sum_{t=1}^T g_t(z_t) -\min_{z\in\cz} \sum_{t=1}^T g_t(z)$ in some domain $\cz$. If we consider the incoming functions to be $g_t(z) = g'_t(x,y)$ (where $z$ is the action pair $(x,y)$ and $\cz = \cx\times\cy$), then we find that the functions $g'_t(z)$ are convex in $z$ and that $\argmin_z \sum_{t=1}^T g'_t(z) = z'$, where $g'_t(z') = 0$. Thus, we have \sdg$= \sum_{t=1}^T g'_t(x_t,y_t) -\min_{(x,y)\in\cx\times\cy} \sum_{t=1}^T g'_t(x,y)$ and is analogous to \sreg. The proofs of Theorems \ref{thm:OGDA} and \ref{thm:staticons} are provided in Appendix \ref{app:static}. 

\subsubsection{Convergence to the cumulative saddle points}\label{app:findingsaddle}

In addition to obtaining small static duality gap for the strongly convex-strongly concave and \ec~settings, we may further show that the average of the players' action pairs over time, i.e. $\frac{1}{T}\sum_{t=1}^T z_t = \frac{1}{T}\sum_{t=1}^T (x_t,y_t)$, converges to the \emph{cumulative saddle-point}. 

\begin{corollary}
    For Algorithms OGDA and OMMNS, we have that:

    \begin{itemize}
        \item If the incoming functions, $\{f_t\}_{t=1}^T$ are $\lambda$-strongly convex-strongly concave, then the players actions $\{(x_t,y_t)\}_{t=1}^T$ generated by the OGDA algorithm satisfy $O\!\left(
\frac{\log T}{T}\right)\ge\lambda \left\|\frac{1}{T}\sum_{t=1}^T z_t - z'\right\|_2^2.$
\item If the incoming functions, $\{f_t\}_{t=1}^T$ are $\alpha$-\ec, then the players actions $\{(x_t,y_t)\}_{t=1}^T$ generated by the \ommns~algorithm satisfy $O\!\left(
\frac{d\log T}{T}\right)\ge\gamma \left\|\frac{1}{T}\sum_{t=1}^T z_t - z'\right\|_2^2,$ where $\gamma= \frac{1}{2}\min\{\frac{1}{L_0D},\alpha\}$ and $D$ is such that $\max_{(x_1,y_1),(x_2,y_2)\in \cx\times\cy }\{\|(x_1,y_1)-(x_2,y_2)\|\} \leq D$.
        \end{itemize}
\end{corollary}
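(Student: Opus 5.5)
The plan is to combine the static duality gap bounds of Theorems \ref{thm:OGDA} and \ref{thm:staticons} with a quadratic lower bound on the static gap function $g'_t$ around $z'$, and then apply Jensen's inequality. Write $\bar z_T = \frac{1}{T}\sum_{t=1}^T z_t$ and $G(z) = \sum_{t=1}^T g'_t(z)$. The argument rests on three facts:
\begin{itemize}
\item $G$ is convex in $z=(x,y)$, because $g'_t(x,y) = f_t(x,y') - f_t(x',y)$ is convex in $x$ plus convex in $y$.
\item $G(z')=0$, and $z'$ minimizes $G$.
\item \sdg$ = \sum_t g'_t(z_t)$.
\end{itemize}
By Jensen, $G(\bar z_T) \le \frac{1}{T}\sum_t G(z_t)$. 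This does not directly give \sdg, since $G$ sums over all $t$ while \sdg~evaluates $g'_t$ only at $z_t$. So I will instead lower bound each $g'_t(z_t)$ individually and sum.

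\textbf{Strongly convex-strongly concave case.} The key pointwise inequality is
\[
g'_t(z) \ge \langle \nabla g'_t(z'), z - z'\rangle + \tfrac{\lambda}{2}\|z-z'\|^2,
\]
which follows because $x\mapsto f_t(x,y')$ is $\lambda$-strongly convex and $y\mapsto -f_t(x',y)$ is $\lambda$-strongly convex, and $g'_t(z')=0$.

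Summing over $t$ at $z=z_t$ gives
\[
\text{\sdg} \ge \sum_t \langle \nabla g'_t(z'), z_t - z'\rangle + \tfrac{\lambda}{2}\sum_t \|z_t-z'\|^2 .
\]
The linear term is the obstacle, since the gradients vary with $t$. I would control it through the saddle-point optimality of $z'$: the first-order condition states $\langle \sum_t \nabla g'_t(z'), z-z'\rangle \ge 0$ for all $z$. I would then apply the OGDA analysis itself. The proof of Theorem \ref{thm:OGDA} bounds $\sum_t \langle F_t(z_t), z_t - z'\rangle - \frac{\lambda}{2}\sum_t\|z_t-z'\|^2$ by $\frac{L_0^2}{2\lambda}\log T$ when run with step $1/(\lambda t)$. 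Re-running that argument while keeping the full strong-monotonicity term gives
\[
\sum_t \langle F_t(z_t), z_t-z'\rangle \ge \sum_t g'_t(z_t) + \tfrac{\lambda}{2}\sum_t\|z_t-z'\|^2 .
\]
In this form the comparison is against $z'$ with strong convexity, and no linear term appears. Using a step size a constant factor smaller (or absorbing constants into the $O$) then leaves
\[
\tfrac{\lambda}{4}\sum_t \|z_t-z'\|^2 \le O\!\left(\tfrac{L_0^2}{\lambda}\log T\right),
\]
using $\sum_t g'_t(z_t) \ge 0$. The nonnegativity of the summed $g'_t(z_t)$ is itself what needs care, and I expect it to be the main obstacle. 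If it fails, I fall back on bounding $\sum_t g'_t(z_t)$ below by the linear term, which is handled by first-order optimality after replacing $z_t$ by $\bar z_T$. Finally, Jensen on the convex map $\|\cdot - z'\|^2$ gives
\[
\|\bar z_T - z'\|^2 \le \frac{1}{T}\sum_t\|z_t-z'\|^2 ,
\]
which yields the $O\!\left(\frac{\log T}{T}\right)$ bound after multiplying by $\lambda$.

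\textbf{Min-max EC case.} The argument is identical, with exp-concavity replacing strong convexity. The standard lemma (Hazan, Lemma 4.3) states that an $\alpha$-exp-concave function with gradient bounded by $L_0$ on a set of diameter $D$ satisfies
\[
h(z) \ge h(w) + \langle\nabla h(w), z-w\rangle + \tfrac{\gamma}{2}\langle \nabla h(w), z-w\rangle^2, \qquad \gamma = \tfrac{1}{2}\min\{\tfrac{1}{L_0D},\alpha\}.
\]
This produces the quadratic term $\frac{\gamma}{2}\sum_t \langle F_t(z_t), z_t-z'\rangle^2$ in the OMMNS analysis of Theorem \ref{thm:staticons}. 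Converting it into $\|z_t-z'\|^2$ requires a curvature lower bound. The regularizer $A_t \succeq I$ in OMMNS supplies the $\gamma\|z-z'\|^2$ slack: the initial $A_0=I$ contributes $\frac{\gamma}{2}\|z_1-z'\|^2$-type terms through the potential $\|z_{t}-z'\|_{A_t}^2$, which dominates $\|z_t - z'\|^2$. Telescoping that potential, combined with the $2d(\frac1\alpha+L_0D)\log T$ bound and then Jensen, gives $\gamma\|\bar z_T-z'\|^2 \le O\!\left(\frac{d\log T}{T}\right)$. The delicate step is again justifying that the per-round gap terms do not go negative enough to cancel the distance terms; I would handle it exactly as in the strongly convex case.
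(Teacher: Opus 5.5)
Your proposal has a genuine gap, and it is the step you flagged yourself. The primary route needs $\sum_{t=1}^T g'_t(z_t)\ge 0$, and nothing available implies this. Since $z'$ minimizes $G=\sum_t g'_t$, you get $G(z)\ge G(z')=0$ for every \emph{fixed} $z$. But $\sum_t g'_t(z_t)$ evaluates each $g'_t$ at a different point: it is a regret against the hindsight comparator, and it can be negative.

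Your fallback, replacing $z_t$ by $\bar z_T$ inside $\sum_t\langle\nabla g'_t(z'),z_t-z'\rangle$, fails for the reason you already identified. The gradient $\nabla g'_t(z')=F_t(z')$ varies with $t$, so the first-order condition $\langle\sum_t F_t(z'),z-z'\rangle\ge0$ for fixed $z$ says nothing about $\sum_t\langle F_t(z'),z_t-z'\rangle$. The paper's proof makes exactly this move: it invokes first-order optimality at a fixed $z$ and then drops the linear term evaluated at the moving $z_t$. So the paper does not supply the missing idea either.

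Your EC sketch has a further defect. After telescoping $\frac{\gamma}{2}\|z_t-z'\|^2_{A_t}$, the identity part of $A_0=I$ survives only in the endpoint terms at $t=1$ and $t=T+1$, never as a per-round $\|z_t-z'\|^2$. The per-round curvature from min-max EC has rank at most two. The paper's EC argument instead assumes $\min_t\sigma_{\min}(F(z_t)F(z_t)^\top)>0$, which is impossible for a rank-one matrix once $d\ge 2$.

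In fact no argument can close the gap, because the first claim is false. Take the following instance:
\begin{itemize}
\item $\mathcal{X}=\mathcal{Y}=[-1,1]$ and $T$ even.
\item $f_t(x,y)=\frac12(x-a_t)^2-\frac12 y^2$, so $\lambda=1$.
\item $a_t=1$ for $t\le T/2$ and $a_t=-1$ for $t>T/2$, so that $z'=(0,0)$.
\end{itemize}
Run OGDA with $A_t=tI$ and $\gamma=\lambda$, i.e.\ step $1/t$; the projection is never active. This gives $y_t=0$ for $t\ge 2$ and $x_{t+1}=\frac1t\sum_{s\le t}a_s$. Hence $x_t=1$ for $2\le t\le T/2+1$ and $x_t\ge 0$ for all $t\ge 2$. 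Therefore $\bar x_T\ge\frac12-\frac1T$ and $\|\bar z_T-z'\|^2=\Theta(1)$, even though Theorem~\ref{thm:OGDA} still bounds the static duality gap by $O(\log T)$.

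Your step-size variant fares no better. To leave $\frac{\lambda}{4}\sum_t\|z_t-z'\|^2$ you need the larger step $2/(\lambda t)$, not a smaller one. On this sequence that step still gives $x_t=1$ for $2\le t\le T/2+1$. Your inequality $\sum_t g'_t(z_t)+\frac{\lambda}{4}\sum_t\|z_t-z'\|^2\le O(\log T)$ then forces $\sum_t g'_t(z_t)=-\Omega(T)$, so the nonnegativity you need fails badly.

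The argument, yours or the paper's, is valid only under an extra hypothesis that neutralizes the linear term. One example is $F_t(z')$ independent of $t$ (as when $f_t\equiv f$). Then, writing $F$ for the common operator, $\sum_t\langle F(z'),z_t-z'\rangle=T\langle F(z'),\bar z_T-z'\rangle\ge 0$.
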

\begin{proof}
    
\textbf{Strongly convex-strongly concave setting.} Since $g_t'(z)$ are strongly convex in $z$, when the functions $f_t(x,y)$ are strongly convex-strongly concave we have that,
\[
\frac{1}{T}\sum_{t=1}^T g_t'(z_t)
- \frac{1}{T}\sum_{t=1}^T g_t'(z')
- \frac{1}{T}\sum_{t=1}^T 
\left\langle g_t'(z'),\, z_t - z' \right\rangle
\ge
\sum_{t=1}^T \lambda \frac{\|z_t - z'\|_2^2}{T}
\ge
\lambda \left\|\frac{1}{T}\sum_{t=1}^T z_t - z'\right\|_2^2.
\]

where the last inequality follows from Jensen's inequality on the convex function $\|z-z'\|_2^2$. From the bound on \sdg~we have,
\[
O\!\left(
\frac{\log T}{T}\right)
-
\frac{1}{T}\sum_{t=1}^T
\left\langle \nabla_z g_t'(z'),\, z_t - z' \right\rangle
\ge
\lambda \left\|\frac{1}{T}\sum_{t=1}^T z_t - z'\right\|_2^2.
\]

Finally since $z' = \argmin_{\mathcal{Z}} \sum_{t=1}^T g_t'(z)$ we have that $\sum_{t=1}^T  \left\langle \nabla_z g_t'(z'),\, z - z' \right\rangle \geq 0,\forall~z\in\mathcal{Z}$ which implies,
\[
O\!\left(
\frac{\log T}{T}\right)\ge\lambda \left\|\frac{1}{T}\sum_{t=1}^T z_t - z'\right\|_2^2,
\]

and demonstrates that $\frac{1}{T}\sum_{t=1}^T z_t$ approaches $z'$ in the $\ell_2$-norm.

\textbf{Min-max EC~setting.}
 Since $g_t'(z)$ are exponentially concave when the functions $f_t(x,y)$ are \ec, we have that,
\[
\frac{1}{T}\sum_{t=1}^T g_t'(z_t)
- \frac{1}{T}\sum_{t=1}^T g_t'(z')
- \frac{1}{T}\sum_{t=1}^T 
\left\langle \nabla_z g_t'(z'),\, z_t - z' \right\rangle
\ge
\sum_{t=1}^T \gamma \frac{\|z_t - z'\|_{F(z_t)F(z_t)^\top}^2}{T}
\]
where the last inequality follows from Jensen's inequality on the convex function $\|z-z'\|_2^2$. From the bound on \sdg, we have
\[
O\!\left(
\frac{\log T}{T}\right)
-
\frac{1}{T}\sum_{t=1}^T
\left\langle \nabla_z g_t'(z'),\, z_t - z' \right\rangle
\ge
\sum_{t=1}^T \gamma \frac{\|z_t - z'\|_{F(z_t)F(z_t)^\top}^2}{T}
\]
Since $z' = \argmin_{\mathcal{Z}} \sum_{t=1}^T g_t'(z)$ we have that $\sum_{t=1}^T \left\langle  \nabla_z g_t'(z'),\, z - z' \right\rangle \geq 0,\forall~z\in\mathcal{Z}$ which implies,
\[
O\!\left(
\frac{\log T}{T}\right)\geq \frac{\gamma}{T}\sum_{t=1}^T
\|z - z_t\|_{F(z_t)F(z_t)^\top}^2 \geq \left(
\frac{\gamma}{T}\sum_{t=1}^T \|z' - z_t\|_2^2
\right)
\cdot
\min_{1 \le t \le T}
\sigma_{\min}\!\left(F(z_t)F(z_t)^\top\right).
\]
Thus, as long as $\min_{1 \le t \le T}
\sigma_{\min}\!\left(F(z_t)F(z_t)^\top\right)> 0,$
we have that
$
O\left(
\frac{\log T}{T}\right)\geq \left(
\frac{\gamma}{T}\sum_{t=1}^T \|z' - z_t\|_2^2
\right)
\cdot
\min_{1 \le t \le T}
\sigma_{\min}\!\left(F(z_t)F(z_t)^\top\right)$
which implies,
$$O\!\left(
\frac{\log T}{T}\right)\ge \left\|\frac{1}{T}\sum_{t=1}^T z_t - z'\right\|_2^2,$$
and thus we have that $\frac{1}{T}\sum_{t=1}^T z_t$ converges to $z'$ in the $\ell_2$-norm.
\end{proof}

As a consequence of this convergence in the case of our adversarial portfolio selection example (which satisfies \ec), we thus have that the actions of $\cx$ and $\cy$ converge to the best fixed rebalanced portfolio under the best fixed manipulation factors in hindsight.

\subsubsection{\sdg~vs. \dyne}

The \dyne~was proposed by \cite{zhang2022no} as a regret measure to guarantee small individual regrets after it was shown that small \sne does not guarantee the same by \cite{cardoso2019competing}. We now show through an example that we can have constant (and thus sublinear in $T$) \dyne~even when the average strategies of the players do not approach the cumulative saddle-point. In contrast, we have seen in, highlighting the importance of \sdg, since we are guaranteed that the average of the players actions converges to the cumulative saddle-point if \sdg~is sublinear. We begin by defining \dyne.

Interestingly, a drawback to \dyne~is that---in contrast to the \sdg~case---it is possible for a sequence of players' actions to obtain small \dyne, while the average does \emph{not} converge to the cumulative saddle point.

\begin{definition}[Dynamic Nash Equilibrium Regret, $\text{DNE-Reg}_T$]
The \emph{dynamic Nash equilibrium regret} is defined as
\begin{equation}
    \text{DNE-Reg}_T 
    \;\coloneqq\;
    \Biggl|
    \sum_{t=1}^{T} f_t(x_t, y_t)
    \;-\;
    \sum_{t=1}^{T} 
    \min_{x} \max_{y} f_t(x, y)
    \Biggr|.
\end{equation}
\end{definition}

The following example demonstrates the distinction between the individual regret objectives and that of the cumulative saddle-point whereby \dyne~, a regret notion designed in order to obtain low individual regrets can be small (e.g., $O(1)$) even when the time-average of the players strategies does not approach the cumulative saddle-point.
\begin{example}\label{ex:dyne}
    Consider the sequence of functions $f_t(x,y) = (x-x^*_t)^2-(y-y^*_t)^2$ where $x_t^* = x_t+\frac{(-1)^t+1}{2}(2\mathbf{1}_{\{x_t < 0\}}-1)$ and $y_t^* =  y_t-\frac{(-1)^t-1}{2}(2\mathbf{1}_{\{y_t < 0\}}-1)$.
    The cumulative saddle point of the functions is $(\frac{\sum x_t^*}{T},\frac{\sum y_t^*}{T})$ and its distance from the average of the strategies is $\left(\frac{\sum_t (x_t^* - x_t)}{T}, \frac{\sum_t (y_t^* - y_t)}{T}\right)
= \left(\frac{\sum_{t \in \text{even}[1,T]} \big(2\mathbf{1}_{\{x_t < 0\}} - 1\big)}{T},
\frac{\sum_{t \in \text{odd}[1,T]} \big(2\mathbf{1}_{\{y_t < 0\}} - 1\big)}{T}\right)$
 which may be $(\Theta(1),\Theta(1))$ even when \dyne$\leq1$. 
\end{example}
The calculations for the above example are detailed in Appendix \ref{app:dyne}.
\subsection{Dynamic regret}
From our previous discussion, we know that the algorithms OGDA and \ommns ~achieve low \sdg~and, in turn, achieve low \sreg~for the correct mapping (without the knowledge of the cumulative saddle point $z'=(x',y')$). Building on this, we now provide an algorithm to obtain low \emph{dynamic regret} (\dsp),~which we recall is the dynamic version of the \sdg~discussed in the previous section. 

The notion of dynamic regret in OCO literature, for a sequence of functions $\{g_t\}_{t=1}^T$, is defined as \dreg$\coloneqq \sum_{t=1}^T g_t(z_t)-\sum_{t=1}^T \min_{z\in\cz} g_t(z)$ for some domain $\cz$. Our notion of dynamic regret for the online min-max optimization setting, the dynamic saddle point regret \dsp, is inspired by \dreg~and is analogous to \dsp~with the functions $g_t(z) = g'_t(z) = f_t(x',y)-f(x,y')$ (as before $z=(x,y)$ and $\cz = \cx\times\cy$). Based on these observations, we arrive at the following key results.
\begin{theorem}[Informal]
\label{thm:dynecsm-combined}
\leavevmode
\begin{itemize}
    \item If $f_t$ are strongly convex-strongly concave, then there is an algorithm that, upon using OGDA as its base learner, provides for
    $$\text{DSP-Reg}_T \leq O(\max\{\log T,\sqrt{TV_T\log T}\}).$$
    \item If $f_t$ are \ec, then there is an algorithm that, using \ommns~as its base learner, provides for
    $$
    \text{DSP-Reg}_T \leq O(d \max\{\log T,\sqrt{TV_T\log T}\}).
    $$
\end{itemize}

Here, $V_T = \sum_{t=1}^T \max_{(x,y)\in \cx\times\cy} |g_t'(x,y)-g_{t-1}'(x,y)|$.
\end{theorem}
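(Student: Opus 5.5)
The plan is to reduce to the known OCO dynamic-regret machinery for strongly convex and exp-concave losses, following the adaptive-regret (sleeping experts) approach of \citet{daniely2015strongly} and the dynamic-regret-via-adaptive-regret argument of \citet{zhang2018dynamic}. The functions $g_t(z)=g'_t(x,y)=f_t(x,y')-f_t(x',y)$ are convex in $z=(x,y)$. They are $\lambda$-strongly convex when the $f_t$ are strongly convex-strongly concave, and exp-concave with parameter $\gamma=\tfrac12\min\{\frac{1}{L_0D},\alpha\}$ in the sense used for \ommns~when the $f_t$ are \ec. The key observation is that $\nabla_z g'_t(z)=(\nabla_x f_t(x,y'),-\nabla_y f_t(x',y))$. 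This is not the observed operator $F_t(x_t,y_t)$. However, the proofs of Theorems \ref{thm:OGDA} and \ref{thm:staticons} already show that OGDA/\ommns~control $\sum_{t\in I} g'_t(z_t)-\sum_{t\in I}g'_t(z)$ using only $F_t(z_t)$. So the first step is to restate those proofs on an arbitrary interval $I=[r,s]$ and an arbitrary comparator $z\in\cz$, rather than only $z'$. The goal is an interval regret of $O(\frac{L_0^2}{\lambda}\log|I|)$ for OGDA and $O(d(\frac1\alpha+L_0D)\log|I|)$ for \ommns.

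There is one subtlety in this step. The static proofs use $\sum_t g'_t(z')=0$, or monotonicity of $F_t$ with the gap structure: $g'_t(z_t)\le \langle F_t(z_t),z_t-z'\rangle-$(curvature term). For a general comparator $z=(x,y)$ I would instead use the per-round inequality
\[
f_t(x_t,y)-f_t(x,y_t)\le\langle F_t(z_t),z_t-z\rangle-\text{(curvature)}.
\]
Convexity in $x$ and concavity in $y$ give this for the gap with comparator $z$. The gap function $f_t(x_t,\cdot)-f_t(\cdot,y_t)$ dominates $g'_t$-differences only for the appropriate pair, so I would define the interval comparator benchmark as the interval saddle point $z'_I$ and bound $\sum_{t\in I}g'_t(z_t)-\sum_{t\in I}g'_t(z'_I)$.

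The second step is the meta-algorithm. Run a geometric-covering family of base learners, one instance of OGDA (respectively \ommns) started at each interval of the covering. Combine them with a sleeping-experts meta learner over exp-concave or strongly convex surrogate losses, as in \citet{daniely2015strongly} and \citet{zhang2018dynamic}. The meta learner's surrogate loss is built from the linearization $\langle F_t(z_t),z\rangle$ plus the curvature term, so it needs only the observed $F_t$. This yields strongly adaptive regret $O(\log T)$ on every interval, or $O(d\log T)$ in the \ec~case, with the meta regret adding only $O(\log T)$.

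The third step converts adaptive regret to dynamic regret by a standard restart argument. Partition $[1,T]$ into $T/B$ blocks of length $B$. On each block $I_k$, pick $z_{I_k}$ to be the minimizer of $g'_{r_k}$ at the block start. Then
\[
\sum_{t\in I_k}\bigl(g'_t(z_{I_k})-\min_z g'_t(z)\bigr)\le 2B\sum_{t\in I_k}\max_z|g'_t-g'_{t-1}|.
\]
Summing over blocks gives
\[
\text{DSP-Reg}_T\le \frac{T}{B}\,O(\log T)+2BV_T,
\]
or the same with $d\log T$ in the \ec~case. Choosing $B=\sqrt{T\log T/V_T}$, capped at $T$, gives $O(\max\{\log T,\sqrt{TV_T\log T}\})$, and the analogous bound with a factor $d$.

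The main obstacle is the first step: certifying that interval regret against an arbitrary comparator, measured in the unobservable $g'_t$, is controlled by the algorithm run on $F_t$. The static proof leans on $z'$ being the global cumulative saddle point. What I would try first is to bound the interval regret against any $z$ by the interval duality gap $\sum_{t\in I}[f_t(x_t,y)-f_t(x,y_t)]$, which the first-order argument does control for all $(x,y)$. I would then relate $g'_t(z_t)-g'_t(z)$ to this gap. If that relation fails pointwise, I would redefine $V_T$-type drift relative to the gap functions, which the informal statement leaves room for.
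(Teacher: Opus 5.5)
Your architecture matches the paper's MMFLH route through Theorems \ref{finalm1}, \ref{finalm2} and Theorem 3 of \citet{zhang2018dynamic}: geometric-covering base learners, a sleeping-experts meta-learner, then blocks with drift $2B\sum_{t\in I_k}\max_z|g'_t-g'_{t-1}|$ and $B=\sqrt{T\log T/V_T}$. However, the step you flag as the main obstacle is not closed, and your $F_t$-only design gives no way to close it.

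\textbf{Why step 1 fails.} Step 3 needs, on every block, $\sum_{t\in I_k}g'_t(z_t)-\sum_{t\in I_k}g'_t(z_{I_k})=O(\log T)$. Here $z_{I_k}=\argmin g'_{r_k}$, and $g'_t$ is anchored at the \emph{global} $z'$. Your first-order inequality with the observed $F_t(z_t)$ certifies only the interval gap $\sum_{t\in I}[f_t(x_t,y)-f_t(x,y_t)]$. This holds both in the base-learner analysis and in your linearized surrogate losses for the meta-learner. The interval gap coincides with the $g'$-regret only when the comparator equals the anchor. For scalar $f_t(x,y)=\frac{\lambda}{2}x^2-\frac{\lambda}{2}y^2+c_txy$ one gets
\[
g'_t(z_t)-g'_t(z)-\bigl[f_t(x_t,y)-f_t(x,y_t)\bigr]=c_t\bigl[(x_t-x)(y'-y)-(x'-x)(y_t-y)\bigr],
\]
which has no sign and does not telescope. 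Retreating to the interval saddle point $z'_I$ does not help. $z'_I$ is neither $z_{I_k}$ nor $\argmin_z\sum_{t\in I}g'_t(z)=(\argmin_x\sum_{t\in I}f_t(x,y'),\argmax_y\sum_{t\in I}f_t(x',y))$, so a bound against it does not feed step 3. Your fallback of redefining the drift proves a different theorem.

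\textbf{The missing ingredient.} The paper hands the cumulative saddle point $(x',y')$ to the meta-algorithm (Theorem \ref{thm:dynecsm-combined1}). It explicitly leaves the $z'$-free problem open in its conclusion, which is in effect what your $F_t$-only construction attempts. Once $z'$ is known, $g'_t$ is an observable loss. MMFLH then weights experts by the true losses $g'_t(z_t^j)=f_t(x_t^j,y')-f_t(x',y_t^j)$. Its meta-regret follows from exp-concavity of $g'_t$ (Theorem \ref{thm:gprime-combined}, and via Theorem \ref{thm:smt2combined} in the strongly convex-strongly concave case).

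\textbf{Getting interval regret against every comparator.} Drive each base learner with $\nabla g'_t(z_t^j)=(\nabla_x f_t(x_t^j,y'),-\nabla_y f_t(x',y_t^j))$. Equivalently, run OGDA or OMMNS on $\tilde f_t(x,y)=f_t(x,y')+f_t(x',y)$. This $\tilde f_t$ inherits strong convexity-strong concavity, or min-max EC, from $f_t$. It also satisfies $\tilde f_t(x_t,y)-\tilde f_t(x,y_t)=g'_t(z_t)-g'_t(z)$ for all $z$. So the static proofs of Theorems \ref{thm:OGDA} and \ref{thm:staticons}, restarted on an interval, give exactly the bound that Theorems \ref{finalm1} and \ref{finalm2} invoke ``for any $z$''. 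With that in place, your step 3 goes through unchanged.
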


In order to establish these results, we adapt machinery developed for obtaining low \dreg~in OCO to obtain low \dsp. In particular, we  use a sleeping-experts framework \citep{freund1997using,hazan2009efficient,daniely2015strongly,jun2017improved,zhang2017strongly}. Following the IFLH framework of \cite{zhang2018dynamic}, we use either \ommns~or OGDA as the base learner (algorithms that obtain small \sdg) in a min-max version of the framework and bound \dsp~via \sasp. We provide this MMFLH framework in the appendix for completion.\footnote{\cite{zhang2022no} use two different meta-learning frameworks for each player for bounds on \dgap.} The \sasp~is defined analogous to  \sareg$(\tau)\defeq \max_{[s,s+\tau-1]\subseteq [T]}\big(\sum_{t=s}^{s+\tau-1}g_t(z_t)$\\  $-\min_{z\in \cx\times\cy}\sum_{t=s}^{s+\tau-1} g_t(z)\big)$  with $g_t$ substituted with $g'_t$, and the action pairs $(x_t,y_t),(x,y)$ denoted by $z_t,z\in \cx\times\cy$ . The proof of Theorem \ref{thm:dynecsm-combined} is provided in Appendix~\ref{app:dynamic}. It utilize the fact that the functions $g_t$ are exponentially concave and strongly convex when the functions $f_t$ are  exponentially concave and strongly convex-strongly concave, respectively, and that the base learners, \ommns~and OGDA achieve logarithmic static-regret, i.e., \sdg.

Finally, we show that under the separability assumption, the \dgap~is bounded from above by \dsp. We first define the cumulative duality gap, \dgap~in terms of \textit{dynamic gap functions} $g^*_t(x,y) = f_t(x,y_t^*(x))-f_t(x_t^*(y),y)$, where $x_t^*(y) \coloneqq \argmin_{x\in \mathcal{X}} f_t(x,y)$~\text{and} $y_t^*(x) \coloneqq \argmax_{y\in \mathcal{Y}} f_t(x,y)$.

\begin{definition}[Cumulative Duality Gap, $\text{Dual-Gap}_T$]\label{def:dualgap}
The \emph{cumulative duality gap} of the sequence $\{(x_t, y_t)\}_{t=1}^T$ is defined as the sum of the dynamic gap functions:
\begin{equation}
    \text{Dual-Gap}_T 
    \;\coloneqq\;
    \sum_{t=1}^{T} g_t^*(x_t, y_t)
    \;=\;
    \sum_{t=1}^{T}
    \Big[
        f_t\!\big(x_t,\, y_t^*(x_t)\big)
        \;-\;
        f_t\!\big(x_t^*(y_t),\, y_t\big)
    \Big].
\end{equation}
\end{definition}

The \dyne~and the individual regrets (both static and dynamic versions) are upper bounded by \dgap. The fact that \dyne~$\leq$ \dgap~was proven for the bilinear case in \cite{zhang2022no}, and we show in Appendix \ref{app:sec2} that this holds for the more general convex-concave case. Note also that the sum of individual regrets in the static sense is less than the sum in the dynamic sense (which is equal to \dgap). Thus~\dgap~provides a direct upper bound on the individual regrets and serves as a representative notion of regret in the time-varying zero-sum games setting.

We now present our theorem, the proof of which is presented in Appendix \ref{app:rels}.

\begin{theorem}[Informal]
\label{thm:dydyup}
For functions $f_t$ that are convex-concave and can be represented as $f(x,y) = h_1(x)-h_2(y)$, where $h_1(x)$ and $h_2(y)$ are convex in $x$ and $y$, respectively, we have that~\dgap~$\leq$~\dsp.
\end{theorem}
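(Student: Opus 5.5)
Under separability $f_t(x,y)=h_{1,t}(x)-h_{2,t}(y)$, the best responses decouple: $x_t^*(y)=\argmin_{x} h_{1,t}(x)=:x_t^*$ and $y_t^*(x)=\argmin_{y} h_{2,t}(y)=:y_t^*$, neither depending on the other player's action. The plan is to write both gap functions explicitly in terms of $h_{1,t},h_{2,t}$ and compare them term by term for each $t$.

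\textbf{The dynamic gap function.} Substituting the decoupled best responses gives
\[
g_t^*(x,y)=f_t(x,y_t^*)-f_t(x_t^*,y)=h_{1,t}(x)-h_{1,t}(x_t^*)+h_{2,t}(y)-h_{2,t}(y_t^*).
\]

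\textbf{The static gap function.} Similarly,
\[
g_t'(x,y)=f_t(x,y')-f_t(x',y)=h_{1,t}(x)-h_{1,t}(x')+h_{2,t}(y)-h_{2,t}(y').
\]
Both functions share the same $(x,y)$-dependent part, $h_{1,t}(x)+h_{2,t}(y)$, and differ only by a constant in $(x,y)$.

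\textbf{Computing the minimum of $g_t'$.} Since $g_t'$ separates in $x$ and $y$, minimizing jointly over $\cx\times\cy$ decouples, so the minimizer is $(x_t^*,y_t^*)$ and
\[
\min_{x,y} g_t'(x,y)=h_{1,t}(x_t^*)-h_{1,t}(x')+h_{2,t}(y_t^*)-h_{2,t}(y').
\]
Subtracting this from $g_t'(x_t,y_t)$ cancels the $x',y'$ terms, giving
\[
g_t'(x_t,y_t)-\min_{x,y} g_t'(x,y)=g_t^*(x_t,y_t).
\]

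\textbf{Summing over $t$.} Summing the last identity over $t=1,\dots,T$ yields $\text{DSP-Reg}_T=\text{Dual-Gap}_T$, which in particular gives the claimed inequality. The equality even holds with any fixed reference point $(x',y')$ in place of the cumulative saddle point.

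\textbf{Remaining care points.} The only step needing care is existence of the minimizers $x_t^*$ and $y_t^*$. This follows from compactness of the domain (implicit in the bounded-diameter assumption) together with continuity from Assumption \ref{asmp:1}. Under strict or strong convexity the minimizers are also unique, so the best-response maps in Definition \ref{def:dualgap} are well defined; otherwise the argument goes through with any measurable selection, since only the minimum values enter.

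The main conceptual point is recognizing that separability makes the best responses constant. After that, everything reduces to algebra, and there is no real obstacle.
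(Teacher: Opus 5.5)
Your proof is correct and follows essentially the same route as the paper. Like the paper, you expand $g_t'$, its minimum $g_t'(x_t^*(y'),y_t^*(x'))$, and $g_t^*$ in terms of $h_{1,t},h_{2,t}$, and then use that under separability the best responses do not depend on the other player's action; your bookkeeping is cleaner than the paper's, which reports only a nonnegative residual, whereas the terms in fact cancel exactly and give $\text{DSP-Reg}_T=\text{Dual-Gap}_T$ for any fixed reference point, as you observe.
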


\subsection{Time-varying variational inequality objective under lower regularity}

Our techniques also lend themselves naturally to a setting of 
time-varying variational inequalities (VIs), as discussed by~\cite{anagnostides2024convergence}. In particular, we consider the setting where the operator $F$ satisfies a certain lower regularity similar to strong-monotonicity. In this setting, we receive an operator $F_t$ at each time step $t$ and the aim is to bound $\sum_{t=1}^T \ang{F_t(z),z_t-z}$, for all $z \in \cz$, which we view as an analogue of cumulative regret for the sequence of operators $\{F_t\}_{t=1}^T$. We propose an algorithm (Algorithm \ref{alg:mainalgvi}) that achieves a rate of $O(d\log T)$ on this online VI objective under the novel lower regularity condition on the operator $F$, inspired by the \ec~condition defined as follows.

\begin{definition}\label{def:lowrankmatrix}
    For a split $s = (s_1,s_2,..,s_m)$ with $m\leq d$ the variable matrix $M(z)$ is defined as,
    $M_s(z) = \begin{bmatrix}
        F_{s_1}(z_{s_1})F_{s_1}(z_{s_1})^\top & 0  & \dots &0\\
        0 & F_{s_2}(z_{s_2})F_{s_2}(z_{s_2})^\top  & \dots &0\\
                        &  &\ddots\\
        0 & 0  &\dots & F_{s_m}(z_{s_m})F_{s_m}(z_{s_m})^\top
    \end{bmatrix}$
    where $z = (z_{s_1},z_{s_2},\dots,z_{s_m})$ and $F(z) = (F_{s_1}(z_{s_1}),F_{s_2}(z_{s_2}),\dots, F_{s_m}(z_{s_m}))$ is an operator mapping $\mathbb{R}^d$ to $\mathbb{R}^d$  where $\mathbb{R}^d = \mathbb{R}^{s_1} \times\mathbb{R}^{s_2} \dots
    \times\mathbb{R}^{s_m}$.
\end{definition}
\begin{definition}[Low-rank monotonicity]\label{def:vilr}
We say $F$ is \emph{low-rank monotone} with respect to split $s \in \mathbb{N}^m$ if, for all $z_a, z_b \in \cz$,
    $$ \ang{F(z_b)-F(z_a),z_b-z_a} \geq \gamma \|z_b-z_a\|_{M_s(z_b)}^2,$$
    where  $M_s(z)$ is the block diagonal matrix as defined in Definition~\ref{def:lowrankmatrix}.
\end{definition}
The above condition reduces to the structure of multi-agent online games in \cite{jordan2024adaptive} if $F_{s_i}(z)$ are gradients of functions $f_i : \mathbb{R}^{s_i}\rightarrow \mathbb{R}^{s_i} $ i.e., $F_{s_i}(u) = \nabla_u f_{s_i}(u)$. In addition, the number of splits determine the rank of the matrix $M_s(z)$ and in turn determines the computation cost of performing the update. 

\begin{theorem}[Informal]
\label{thm:final}
The low-rank Newton's method algorithm (Algorithm \ref{alg:mainalgvi}) obtains a regret of $O(\log T)$ on the online VI objective:
\begin{equation}\label{onlineVI}
    \sum_{t=1}^T\ang{F_t(z),z_t-z} \leq O(d\log T)~~~~~~~~~\forall~z\in \cz\subseteq \mathbb{R}^d,
\end{equation}
for operators satisfying the condition in Definition \ref{def:vilr} over a domain with bounded diameter. 
\end{theorem}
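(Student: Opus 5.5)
We adapt the Online Newton Step analysis to the VI objective. The algorithm (Algorithm \ref{alg:mainalgvi}) is assumed to mirror \ommns{}: $A_0=\epsilon I$, $A_t = A_{t-1} + M_s(z_t)$ with $M_s$ the block-diagonal matrix of Definition~\ref{def:lowrankmatrix}, the step $u_{t+1} = z_t - \frac{1}{\gamma}A_t^{-1}F_t(z_t)$, and the projection $z_{t+1} = \argmin_{z\in\cz}\|u_{t+1}-z\|_{A_t}^2$.

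\textbf{Step 1: pass to a quadratic surrogate.} Fix a comparator $z\in\cz$. Applying low-rank monotonicity (Definition~\ref{def:vilr}) with $z_b=z_t$ and $z_a=z$ gives
\[
\ang{F_t(z),z_t-z} \le \ang{F_t(z_t),z_t-z} - \gamma\|z_t-z\|_{M_s(z_t)}^2 .
\]
So it suffices to bound $\sum_t \big(\ang{F_t(z_t),z_t-z} - \gamma\|z_t-z\|^2_{M_s(z_t)}\big)$. This is the same quantity that appears in the ONS analysis once exp-concavity has been used, with $M_s(z_t)$ playing the role of $\nabla\nabla^\top$.

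\textbf{Step 2: projected Newton step.} By the generalized Pythagorean inequality for the $A_t$-norm projection,
\[
\|z_{t+1}-z\|_{A_t}^2 \le \|z_t - z\|_{A_t}^2 - \tfrac{2}{\gamma}\ang{F_t(z_t),z_t-z} + \tfrac{1}{\gamma^2}\|F_t(z_t)\|_{A_t^{-1}}^2 .
\]
Rearranging gives an upper bound on $\ang{F_t(z_t),z_t-z}$. Summing over $t$ and using $A_t - A_{t-1} = M_s(z_t)$, the telescoping leaves
\[
\tfrac{\gamma}{2}\sum_t \|z_t-z\|^2_{M_s(z_t)} + \tfrac{\gamma}{2}\epsilon D^2 + \tfrac{1}{2\gamma}\sum_t \|F_t(z_t)\|^2_{A_t^{-1}} .
\]
The first term is dominated by the $-\gamma\sum_t\|z_t-z\|^2_{M_s}$ from Step~1. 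Note that the factor $\gamma/2$ versus $\gamma$ leaves exactly the slack needed, provided the step size in the algorithm is tuned to $\gamma$ as in Definition~\ref{def:vilr}; if it is not, rescale the step to $\gamma/2$.

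\textbf{Step 3: the log-determinant term.} This is where I expect the main obstacle. The increment $M_s(z_t)$ is block diagonal, so it is not $F_tF_t^\top$, and the usual elliptical-potential identity does not apply directly. My plan is to exploit the block structure. First I would choose $A_0$ to be block diagonal, so that every $A_t$ is block diagonal with blocks $A_t^{(i)} = \epsilon I + \sum_{\tau\le t} F_{s_i}F_{s_i}^\top$. Then
\[
\|F_t\|^2_{A_t^{-1}} = \sum_i \|F_{s_i}(z_{t,s_i})\|^2_{(A_t^{(i)})^{-1}} .
\]
Each block is a genuine rank-one update. Applying the ONS lemma $v^\top A_t^{-1}v \le \log\frac{|A_t|}{|A_{t-1}|}$ blockwise and summing gives
\[
\sum_i \log\frac{|A_T^{(i)}|}{|A_0^{(i)}|} \le \sum_i s_i\log\!\Big(1+\frac{TL_0^2}{\epsilon}\Big) = d\log\!\Big(1+\frac{TL_0^2}{\epsilon}\Big),
\]
using $\sum_i s_i = d$ and $\|F\|\le L_0$.

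Choosing $\epsilon = 1/(\gamma^2D^2)$, or any constant, then yields $O\big((\tfrac{1}{\gamma}+\gamma D^2)\,d\log T\big)$ uniformly in $z$, which is the claimed $O(d\log T)$ bound. If the sign bookkeeping in Step~2 fails because the monotonicity inequality places $M_s$ at $z_b$ rather than at $z_t$, I would instead apply the definition with the roles swapped so that the matrix is evaluated at the played point $z_t$, since that is the point the algorithm accumulates into $A_t$.
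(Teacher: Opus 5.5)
Your proposal is correct and follows the paper's proof of Theorem~\ref{thm:final1} in all its steps: the low-rank monotonicity surrogate taken at the played point $z_t$, ONS-style telescoping in which $A_t - A_{t-1} = M_s(z_t)$ absorbs the quadratic term, and the blockwise log-determinant bound $\sum_i s_i \log(1 + TL_0^2/\epsilon) = d\log(1+TL_0^2/\epsilon)$ with $\epsilon = 1/(\gamma^2 D^2)$. The one cosmetic difference is that you derive the per-step inequality from the Pythagorean property of the $A_t$-norm projection (as in the paper's Lemma~\ref{lem:lra}), while the paper's VI proof starts from the variational characterization of the step and uses the three-point identity with the completing-the-square bound of Lemma~\ref{lem:inversebound}; the two are equivalent.
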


Additional details and the proof of Theorem \ref{thm:final} is provided in Appendix \ref{app:vi}.
As a corollary we have that in the offline setting, i.e. $F_t = F$ for all $t$, the online VI optimization achieves a bound of $O(\frac{d\log(T)}{T})$ on the VI objective for operators satisfying Definition \ref{def:vilr}, as opposed to a bound of $O(\frac{1}{\sqrt{T}})$ in the standard bounded operator-norm setting \citep{nemirovski2004prox}. This potentially faster rate is due to the lower-regularity assumption and how the algorithm may leverage this structure.

\section{Time-Varying Zero-Sum Games}\label{sec:timevarying}

For this section, we drop the convex-concave assumption on the functions and assume that a certain two-sided PL inequality holds along with smoothness of the functions. Under the two-sided PL inequality and smoothness, the AGDA algorithm achieves linear convergence to a saddle point (\cite{yang2020global}). The two-sided PL condition is akin to a centered notion of strong monotonicity, and thus linear rates analogous to strong monotonicity are possible. Examples of the two-sided PL condition are provided in Appendix \ref{app:2sidedPL}.

\begin{definition} [Two-sided PL condition]\label{def:2sidedePL}
A continuously differentiable function $f(x,y)$ satisfies the two-sided PL condition if there exist constants $\mu_1, \mu_2 > 0$ such that, $\|\nabla_x f(x,y)\|^2 \geq 2\mu_1[f(x,y) - \min_x f(x,y)], \forall x,y,$ and $\|\nabla_y f(x,y)\|^2 \geq 2\mu_2[\max_y f(x,y) - f(x,y)], \forall x,y$.
\end{definition}
Exponential convergence of the alternating gradient descent ascent algorithm under this setting was shown in \cite{yang2020global}. Examples satisfying the two-sided PL condition are provided in the Appendix \ref{app:2sidedPL}.
We build upon these results and use them in the online min-max setting in the next section. Furthermore for this section, we will assume that the functions are differentiable and have bounded smoothness. 
\begin{definition}
  A function $f$ is $L_1$-smooth if exists a positive constant $L_1 > 0$ such that
\[
\max\left\{\|\nabla_y f(x_a, y_a) - \nabla_y f(x_b, y_b)\|, \|\nabla_x f(x_a, y_a) - \nabla_x f(x_b, y_b)\|\right\} 
\leq  L_1 \left(\|x_a - x_b\| + \|y_a - y_b\|\right),
\]
holds for all $x_a, x_b \in \cx$ and $y_a, y_b \in \cy$.
\end{definition}

While for the convex-concave setting considered in the previous section, it is guaranteed that a saddle-point exists within $\cx\times\cy$, it is not so in the two-sided PL setting. In order to properly specify our two-sided PL condition, we also rely on the following assumption of the existence of a saddle-point, also presented by \cite{yang2020global}.
\begin{assumption}[Existence of saddle point]\label{asmp:2}
    There exists a saddle-point for all functions $f_t$. We also assume that for all $f_t$, for any fixed $y, \min_{x\in\cx} f(x,y)$ has a nonempty solution set and a optimal value, and for any fixed x, $max_{y\in \cy }f(x, y)$ has a nonempty solution set and a finite optimal value.
\end{assumption}

Inspired by the linear convergence of Alternating Gradient Descent Ascent (AGDA) in the two-sided PL setting (and thus its suitability in online learning), we design the following online AGDA algorithm. 

\begin{algorithm}[h]
\caption{Online Alternating Gradient Descent Ascent}
\label{alg:mainalg1}
\KwIn{$(x_0, y_0) \in \mathcal{X} \times \mathcal{Y}$, $\tau_1$, $\tau_2$}
\For{$t = 0$ \textbf{to} $T$}{
    Play action $(x_t,y_t)$ and set $(x_{t,0},y_{t,0}) = (x_t,y_t)$\\
    Observe $f_t$ and suffer losses $f_t(x_t,y_t),-f_t(x_t,y_t)$.\\
    Set $K_t = \max\{1,\lceil\log_\rho \frac{\max_y f_t(x_t,y)-\min_x f(x,y_t)}{4\beta (\max_y f(x_t,y)-\min_x\max_y f(x,y)+\frac{\max_y f(x_t,y)-f(x_t,y_t)}{10})}\rceil\}$\\
    \For{$k = 0$ \textbf{to} $K_t-1$}{
        $x_{t,k+1} = x_{t,k} - \tau_1 \nabla_x f(x_{t,k}, y_{t,k})$\\
        $y_{t,k+1} = y_{t,k} + \tau_2 \nabla_y f(x_{t,k+1}, y_{t,k})$
    }
    $z_{t+1} = (x_{t,K}, y_{t,K})$\;
}
\end{algorithm}

We now provide a bound on the cumulative duality gap (Definition~\ref{def:dualgap}) using the online AGDA algorithm for functions satisfying the two-sided PL condition. As \dgap~is a dynamic measure, the bounds on it are presented in terms of variation in the objective functions $\{f_t\}_{t=1}^T$, referred to as Dynamic Regret with Variation in Utilities (DRVU). In particular, we rely on the following variation measures.

\begin{align*}
C_T 
&= \sum_{t=1}^T 
\Big(
    \|x^*_{t+1}(y_{t+1}) - x^*_t(y_t)\|
    + 
    \|y^*_{t+1}(x_{t+1}) - y^*_t(x_t)\|
\Big), 
\\[4pt]
\Delta_T 
&= \sum_{t=1}^T 
\|(x_{t+1}, y_{t+1}) - (x_t, y_t)\|,
\qquad
V_T'
= \sum_{t=1}^T 
\max_{(x, y) \in \mathcal{X} \times \mathcal{Y}}
\big| f_t(x, y) - f_{t-1}(x, y) \big|,
\\[4pt]
C_T' 
&= \sum_{t=1}^T 
\Big(
    \|x^*_{t+1}(y_{t+1}) - x^*_t(y_{t+1})\|
    + 
    \|y^*_{t+1}(x_{t+1}) - y^*_t(x_{t+1})\|
\Big).
\end{align*}

In the above expressions, we let $x_t^*(y) \defeq \argmin_x f_t(x,y)$ and $y_t^*(x) \defeq \argmax_y f_t(x,y)$. We therefore arrive at the following theorem.
\begin{restatable}{theorem}{agdaplTheorem}
\label{thm:agda2pl}
The sequence of actions $\{(x_{t,K}, y_{t,K})\}_{t=1}^T$ generated by Online AGDA (Algorithm \ref{alg:mainalg1}) satisfies \dgap$\leq U_T+2(g^*_1(x_1,y_1)-g^*_T(x_{T+1},y_{T+1}))$ where $U_T = \max_{(x,y)\in \cx\times\cy} \sum_{t=1}^T$\\$  g^*_{t+1}(x,y) -g^*_t(x,y)$, and $g^*_t(\cdot,\cdot)$ are the dynamic gap functions.  Furthermore, we have that
$\text{Dual-Gap}_T \leq O(\min \{U_T,\{V_T'+ \Delta_T+C_T\},\{V_T'+C_T'\}\}).$
\end{restatable}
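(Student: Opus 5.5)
The plan is to exploit the linear convergence of AGDA under the two-sided PL condition, in the form of \cite{yang2020global}, on the potential $P_t(x,y) = [\max_y f_t(x,y) - \min_x\max_y f_t] + \beta[\max_y f_t(x,y) - f_t(x,y)]$. With step sizes $\tau_1,\tau_2$ chosen as in that work, each inner iteration on the fixed function $f_t$ contracts $P_t$ by a factor $\rho<1$. The first step is to note that $P_t$ and the dynamic gap $g^*_t(x,y) = \max_y f_t(x,y) - \min_x f_t(x,y)$ are equivalent up to constants. One direction, $P_t \le c\, g^*_t$, is immediate. For the other direction, write $g^*_t$ as the sum of $\max_y f_t(x,y) - f_t(x,y)$ and $f_t(x,y) - \min_x f_t(x,y)$; the second term is then controlled by $P_t$ using PL together with smoothness, again as in \cite{yang2020global}.

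With this equivalence, the choice of $K_t$ in Algorithm~\ref{alg:mainalg1} (the $\log_\rho$ of the ratio of $g^*_t(z_t)$ to the $\beta$-weighted potential) guarantees that after $K_t$ inner steps we have $g^*_t(z_{t+1}) \le \tfrac12 g^*_t(z_t)$. The constants $4\beta$ and $1/10$ are there precisely to absorb the equivalence constants.

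Next I telescope. Writing $g^*_t(z_t) \le 2\,[g^*_t(z_t) - g^*_t(z_{t+1})]$ and summing over $t$, I insert $g^*_{t+1}(z_{t+1})$:
\begin{align*}
\sum_{t=1}^T g^*_t(z_t) \le 2\sum_{t=1}^T \big[g^*_t(z_t) - g^*_{t+1}(z_{t+1})\big] + 2\sum_{t=1}^T \big[g^*_{t+1}(z_{t+1}) - g^*_t(z_{t+1})\big].
\end{align*}
The first sum telescopes to the boundary term $g^*_1(z_1) - g^*_{T+1}(z_{T+1})$, which matches the stated endpoint correction up to the indexing convention. Each summand of the second sum is at most the variation of the gap function at a single point, and I bound the sum by the $\max_{(x,y)}$ defining $U_T$. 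This requires care, since the max is taken outside the sum; I would either interpret $U_T$ pointwise along the trajectory or accept that, and absorb the factor $2$ into the constant.

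For the $O(\cdot)$ refinements I bound each variation term $g^*_{t+1}(z_{t+1}) - g^*_t(z_{t+1})$ in two ways. First, write it as
\[
\big[f_{t+1}(x_{t+1}, y^*_{t+1}(x_{t+1})) - f_t(x_{t+1}, y^*_t(x_{t+1}))\big] - \big[f_{t+1}(x^*_{t+1}(y_{t+1}), y_{t+1}) - f_t(x^*_t(y_{t+1}), y_{t+1})\big].
\]
Adding and subtracting $f_{t+1}$ evaluated at the old best response bounds each bracket by $\max|f_{t+1}-f_t|$ plus $L_0$ times the best-response shift at the fixed point. Summing gives $V_T' + L_0 C_T'$.

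Second, I compare best responses at the different points $z_t$ and $z_{t+1}$ instead. This yields terms $L_0\|z_{t+1}-z_t\|$ and $L_0\|x^*_{t+1}(y_{t+1}) - x^*_t(y_t)\|$, hence the bound $V_T' + \Delta_T + C_T$ via Lipschitzness (Assumption~\ref{asmp:1}). Taking the minimum of the three bounds gives the claim.

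The main obstacle is the first step: making the one-epoch contraction $g^*_t(z_{t+1}) \le \tfrac12 g^*_t(z_t)$ rigorous. This needs the exact constants relating $P_t$ and $g^*_t$ under two-sided PL, and a check that the particular $K_t$ formula achieves the halving. I would first try to reproduce Theorem~3.2 of \cite{yang2020global}, with $\beta = 1/10$ and $\rho = 1-\mu_1\mu_2^2/(c L_1^3)$, and verify the equivalence constants against it.
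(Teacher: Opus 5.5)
Your proposal is correct and follows essentially the same route as the paper: a per-round contraction $g^*_t(z_{t+1})\le c\,g^*_t(z_t)$ from the linear rate of AGDA in \cite{yang2020global}, then telescoping against the variation of $g^*_t$ along the trajectory, then two pointwise bounds on that variation giving the $V_T'+C_T'$ and $V_T'+\Delta_T+C_T$ forms (the paper takes $c=\tfrac14$ and goes through the gradient-norm bound of Lemma~\ref{lem:localcon} rather than a direct potential--gap equivalence). Your reading of $U_T$ with the maximum inside the sum, and the resulting factor $2$ in front of it, match what the paper's proof uses in its step (b), and checking that the literal $K_t$ formula yields the contraction is a constant-level issue that the paper's Lemma~\ref{lem:innloop} also handles only loosely.
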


Note that since all strongly convex-strongly concave functions satisfy the two-sided PL-inequality, the above bound also holds for this setting. For strongly convex-strongly concave functions, \cite{anagnostides2024convergence} bound $\max\{\text{Reg}^1_T,\text{Reg}^2_T\}$ by $O(\sum_{t=1}^T (\|(\hat x_{t+1},\hat y_{t+1})-(\hat x_{t},\hat y_{t})\|_2^2+\max_{(x,y)\in \cx\times\cy}\|F_t(x,y)-F_{t-1}(x,y)\|))$, where $(\hat x_{t},\hat y_{t})$ is the saddle point of $f_t(x,y)$. The proofs of the statements in this section are provided in Appendix \ref{app:sec3n4}.

Finally, we may also consider an alternative \emph{static} notion of Nash equilibrium regret, which has been studied in the context of time-varying zero-sum games~\citep{cardoso2019competing}. We note that \sne~is related to \sdg, and in terms of the cumulative saddle point $(x',y')$, it can be written as $|\sum_{t=1}^T f_t(x_t,y_t)-\sum_{t=1}^T f_t(x',y')|$ for convex-concave functions $\{f_t\}_{t=1}^T$.

\begin{definition}[Static Nash Equilibrium Regret]
The \emph{static Nash equilibrium regret} ($\text{SNE-Reg}_T$) is defined as
\begin{equation}
    \text{SNE-Reg}_T 
    \;:=\;
    \Biggl|
    \sum_{t=1}^{T} f_t(x_t, y_t)
    \;-\;
    \min_{x} \max_{y} \sum_{t=1}^{T} f_t(x, y)
    \Biggr|.
\end{equation}
\end{definition}

A drawback to this notion is that it is \emph{incompatible} with the individual regrets for the bilinear case~\citep{cardoso2019competing}, in the sense that no algorithm can achieve sublinear bounds for the individual regrets and the static Nash equilibrium regret simultaneously.
We thus extend the incompatibility of \sne~with individual regrets to  the case of strongly convex-strongly concave and \ec~functions, as presented in the following theorem.

\begin{theorem}[Informal]
\label{thm:imp}
 When the domain $\cx\times\cy$ is a two-dimensional simplex, there exists a sequence of functions, all either strongly convex-strongly concave or \ec, for which there is no sequence $\{x_t,y_t\}_{t=1}^T$ that satisfies $\textit{SNE-}Reg_T \leq o(T)$, $Reg_T^1 \leq o(T)$, and $Reg_T^2 \leq o(T)$ simultaneously.
\end{theorem}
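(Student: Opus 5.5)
We adapt the two-phase construction of \cite{cardoso2019competing} for the bilinear case, replacing the bilinear payoffs by strongly convex-strongly concave (or \ec) perturbations, while keeping the saddle-point structure essentially unchanged. We parametrize the simplex by $x,y\in[0,1]$ (probability of the first action). We then choose two functions $f^{(1)},f^{(2)}$ of the form
\[
f^{(i)}(x,y)=a_i x y+b_i x+c_i y+\tfrac{\lambda}{2}x^2-\tfrac{\lambda}{2}y^2
\]
with $\lambda>0$ small. Each function is strongly convex-strongly concave. For the \ec~variant we use analogous functions of the form in Example~\ref{ex:log}, or note that bounded-domain strongly convex functions with bounded gradients are exp-concave with $\alpha=\lambda/L_0^2$, so a single family covers both cases. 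The sequence plays $f^{(1)}$ for $t\le T/2$ and $f^{(2)}$ for $t>T/2$. The coefficients are chosen so that the saddle point $z^{(1)}$ of $f^{(1)}$, the saddle point $z^{(2)}$ of $f^{(2)}$, and the cumulative saddle point $z'$ of $f^{(1)}+f^{(2)}$ have distinct values. The key requirement is that $\frac12 f^{(1)}(z^{(1)})+\frac12 f^{(2)}(z^{(2)})\neq f^{(1)}(z')+f^{(2)}(z')$, with a gap $c>0$ independent of $T$.

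The argument proceeds by contradiction. Suppose that $\text{Reg}^1_T,\text{Reg}^2_T=o(T)$. First we show that in each half the players must essentially achieve that half's game value. Player 1's regret against the fixed comparator $x^{(1)}$ (the first-half minimax strategy), combined with player 2's regret against $y^{(1)}$, controls the first-half payoff. However, individual regret is defined over the whole horizon, not per half, so we cannot localize directly. We follow Cardoso et al.\ and use a case split on the first-half payoff $S_1=\sum_{t\le T/2}f^{(1)}(x_t,y_t)$.

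If $S_1$ is well below $\frac T2 v_1$, where $v_1=\min\max f^{(1)}$, then player 2 could have gained by deviating in the first half. The horizon can also be truncated at $T/2$: the adversary may choose $T$ to be either horizon, so sublinear regret must hold at time $T/2$ as well. This forces $S_1\approx \frac T2 v_1$ up to $o(T)$. By strong convexity-strong concavity, it also forces $(x_t,y_t)$ to concentrate near $z^{(1)}$ on average. The same reasoning applies to the second half, since regret at $T$ minus regret-type bounds at $T/2$ give near-optimality against the second game given sublinearity. Hence $\sum_t f_t(x_t,y_t)=\frac T2(v_1+v_2)+o(T)$. Meanwhile, $\min_x\max_y\sum_t f_t=\frac T2\,(f^{(1)}+f^{(2)})(z')$. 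The chosen gap then gives $\text{SNE-Reg}_T\ge cT/2-o(T)$, which is a contradiction.

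The main obstacle is making the per-half value argument rigorous. Static individual regret over $[1,T]$ does not directly bound the regret on each half separately. We therefore rely on the anytime interpretation: the statement's ``sequence of functions'' is adversarially constructed, so we exhibit one of two sequences, of horizon $T/2$ or $T$. A fallback, if only the full horizon is allowed, is the following. Player 1 compares against $x^{(2)}$ over the whole horizon and player 2 compares against $y^{(1)}$, and the coefficients are chosen so that these inequalities sandwich $\sum_t f_t(z_t)$ away from the cumulative value. A secondary obstacle is the explicit computation of the coefficients ensuring $c>0$ with small $\lambda$. This follows from continuity in $\lambda$ starting from the bilinear example of Cardoso et al.\ at $\lambda=0$, where the gap is strictly positive.
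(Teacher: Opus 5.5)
\textbf{The second-half step fails.} You claim that ``the same reasoning applies to the second half'', i.e.\ that $\sum_t f_t(x_t,y_t)=\frac T2(v_1+v_2)+o(T)$. Truncating at $T/2$ does legitimately pin the first half ($S_1=\frac T2 v_1\pm o(T)$, averages near $z^{(1)}$). But individual regret over $[1,T]$ uses one comparator for the whole horizon, so $\mathrm{Reg}_{[1,T]}\le \mathrm{Reg}_{[1,T/2]}+\mathrm{Reg}_{[T/2+1,T]}$, and not conversely. Once first-half play sits at $z^{(1)}$, every comparator $x\neq x^{(1)}$ pays an extra $\frac T2[f^{(1)}(x,y^{(1)})-v_1]\ge 0$ on the first half. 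Each player can therefore be $\Theta(1)$-suboptimal per round in the second game at no cost in full-horizon regret.

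Concretely, take $f^{(1)}=\frac12x^2-\frac12y^2$ and $f^{(2)}=\frac12(x-a)^2-\frac12y^2$ (translated into your $[0,1]$ parametrization if you like). Then $v_1=v_2=0$, $z'=(a/2,0)$ and $(f^{(1)}+f^{(2)})(z')=a^2/4$, so your gap condition holds. Play $(0,0)$ for $t\le T/2$ and $(a(1-1/\sqrt2),0)$ afterwards. At horizon $T/2$ all three measures vanish. At horizon $T$ one checks $\sum_t f_t(z_t)=\min_x\sum_t f_t(x,y_t)=\max_y\sum_t f_t(x_t,y)=\min_x\max_y\sum_t f_t=\frac{Ta^2}{8}$. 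Hence $\text{SNE-Reg}_T=\mathrm{Reg}^1_T=\mathrm{Reg}^2_T=0$, although $\sum_t f_t(z_t)$ is $\Theta(T)$ away from $\frac T2(v_1+v_2)=0$. This play is realized up to $O(1)$ by an online rule (play the current game's saddle point until a change is observed, then switch). So a two-phase instance satisfying your gap condition need not witness the impossibility at all.

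\textbf{The fallback cannot work either.} On any single fixed sequence, the constant play $z_t\equiv z'$ gives $\text{SNE-Reg}_T=\mathrm{Reg}^1_T=\mathrm{Reg}^2_T=0$, since $x'$ minimizes $\sum_t f_t(\cdot,y')$ and $y'$ maximizes $\sum_t f_t(x',\cdot)$. The contradiction must therefore come from two sequences that an online learner cannot tell apart on a common prefix.

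\textbf{The missing idea.} This is the mechanism of the paper's proof (after \cite{cardoso2019competing}): do not pin per-half payoffs at all. Instead, combine $\text{SNE-Reg}_T$ with the $y$-player's full-horizon regret to get $\max_y\sum_t f^j_t(x_t,y)\le\min_x\max_y\sum_t f^j_t(x,y)+o(T)$ for two sequences $j=1,2$. Evaluate this at a vertex $y$, and choose the second-half games so that their contribution at that test action does not depend on the second-half play. The paper then adds the two inequalities so that the first-half terms $\pm 2\sum_{t\le T/2}\rho_t$ cancel.

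\textbf{A clean way to finish within your framework.} Use first half $M=\begin{pmatrix}1&-1\\-1&1\end{pmatrix}$ in both sequences, and second half either $0$ (equivalent to your truncation) or $B=\begin{pmatrix}-1&1\\-1&1\end{pmatrix}$, so that $x^\top By=1-2\beta$ regardless of $x$. The inequalities above then give both $\sum_{t\le T/2}(2\rho_t-1)\le o(T)$ and $\sum_{t\le T/2}(2\rho_t-1)\ge T/2-o(T)$. Now add $\frac\lambda2(\|x\|^2-\|y\|^2)$. On the simplex this moves each per-round function by at most $\lambda/4$, hence each of the three measures by at most $\lambda T/2$. The contradiction survives for $\lambda<1/4$ and gives the strongly convex-strongly concave case, and via $\alpha=\lambda/L_0^2$ the min-max EC case. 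This is your continuity-in-$\lambda$ idea, applied to the whole contradiction rather than to the gap constant.
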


\section{Conclusion}

In this work, we present an alternative approach for online zero-sum games whereby we consider the cumulative saddle-point as the main quantity of interest. 
Specifically, we introduce novel regret notions, \sdg~and \dsp, which are designed around competing against the cumulative saddle point of the sequence of functions, and we further present the algorithms OGDA and \ommns, which obtain sub-linear upper bounds on these regrets. As a consequence of our approach, we show that \sdg~captures a natural two-player version of portfolio selection. We believe this interpretation offers interesting possibilities in terms of modeling phenomenon such as market manipulation, though we leave a more thorough exploration of this direction to future work. In addition, for bounding \dsp, we utilize the min-max follow the leader history (MMFLH), a min-max version of the improved FLH (\cite{zhang2018dynamic}), with OGDA and \ommns~as the base learners, though we leave open the problem of obtaining upper-bounds for \dsp~without knowledge of the cumulative saddle-point. Finally, we extend the incompatibility of \sne~and individual regrets to the strongly convex-strongly concave setting, thereby highlighting their unsuitability under the convex-concave, \ec~and two-sided PL settings.


\bibliography{online}

@article{nemirovski2004prox,
  title={Prox-method with rate of convergence O (1/t) for variational inequalities with Lipschitz continuous monotone operators and smooth convex-concave saddle point problems},
  author={Nemirovski, Arkadi},
  journal={SIAM Journal on Optimization},
  volume={15},
  number={1},
  pages={229--251},
  year={2004},
  publisher={SIAM}
}

@inproceedings{Zhang2019PolicyOP,
  title={Policy Optimization Provably Converges to Nash Equilibria in Zero-Sum Linear Quadratic Games},
  author={K. Zhang and Zhuoran Yang and Tamer Başar},
  booktitle={Neural Information Processing Systems},
  year={2019},
  url={https://api.semanticscholar.org/CorpusID:173991042}
}

@article{Cai2019OnTG,
  title={On the Global Convergence of Imitation Learning: A Case for Linear Quadratic Regulator},
  author={Qi Cai and Mingyi Hong and Yongxin Chen and Zhaoran Wang},
  journal={ArXiv},
  year={2019},
  volume={abs/1901.03674},
  url={https://api.semanticscholar.org/CorpusID:57825744}
}

@inproceedings{Fazel2018GlobalCO,
  title={Global Convergence of Policy Gradient Methods for the Linear Quadratic Regulator},
  author={Maryam Fazel and Rong Ge and Sham M. Kakade and Mehran Mesbahi},
  booktitle={International Conference on Machine Learning},
  year={2018},
  url={https://api.semanticscholar.org/CorpusID:51881649}
}

@inproceedings{huangonline,
  title={Online Min-max Optimization: Nonconvexity, Nonstationarity, and Dynamic Regret},
  author={Huang, Yu and Cheng, Yuan and Liang, Yingbin and Huang, Longbo},
  booktitle={OPT 2022: Optimization for Machine Learning (NeurIPS 2022 Workshop)}
}

@inproceedings{zhang2022no,
  title={No-regret learning in time-varying zero-sum games},
  author={Zhang, Mengxiao and Zhao, Peng and Luo, Haipeng and Zhou, Zhi-Hua},
  booktitle={International Conference on Machine Learning},
  pages={26772--26808},
  year={2022},
  organization={PMLR}
}

@inproceedings{zhao2021improved,
  title={Improved analysis for dynamic regret of strongly convex and smooth functions},
  author={Zhao, Peng and Zhang, Lijun},
  booktitle={Learning for Dynamics and Control},
  pages={48--59},
  year={2021},
  organization={PMLR}
}

@article{meng2024proximal,
  title={Proximal Point Method for Online Saddle Point Problem},
  author={Meng, Qing-xin and Liu, Jian-wei},
  journal={arXiv preprint arXiv:2407.04591},
  year={2024}
}

@article{yang2020global,
  title={Global convergence and variance reduction for a class of nonconvex-nonconcave minimax problems},
  author={Yang, Junchi and Kiyavash, Negar and He, Niao},
  journal={Advances in Neural Information Processing Systems},
  volume={33},
  pages={1153--1165},
  year={2020}
}

@article{rivera2019competing,
  title={Competing Against Equilibria in Zero-Sum Games with Evolving Payoffs},
  author={Rivera Cardoso, Adrian and Abernethy, Jacob and Wang, He and Xu, Huan},
  journal={arXiv e-prints},
  pages={arXiv--1907},
  year={2019}
}

@article{hazan2016introduction,
  title={Introduction to online convex optimization},
  author={Hazan, Elad},
  journal={Foundations and Trends{\textregistered} in Optimization},
  volume={2},
  number={3-4},
  pages={157--325},
  year={2016},
  publisher={Now Publishers, Inc.}
}

@article{jordan2024adaptive,
  title={Adaptive, doubly optimal no-regret learning in strongly monotone and exp-concave games with gradient feedback},
  author={Jordan, Michael and Lin, Tianyi and Zhou, Zhengyuan},
  journal={Operations Research},
  year={2024},
  publisher={INFORMS}
}

@article{zhang2018adaptive,
  title={Adaptive online learning in dynamic environments},
  author={Zhang, Lijun and Lu, Shiyin and Zhou, Zhi-Hua},
  journal={Advances in Neural Information Processing Systems},
  volume={31},
  year={2018}
}

@inproceedings{abernethy2008optimal,
  title={Optimal strategies and minimax lower bounds for online convex games},
  author={Abernethy, Jacob and Bartlett, Peter L and Rakhlin, Alexander and Tewari, Ambuj},
  booktitle={Proceedings of the 21st Annual Conference on Learning Theory},
  pages={414--424},
  year={2008}
}

@inproceedings{hsieh2021adaptive,
  title={Adaptive learning in continuous games: Optimal regret bounds and convergence to nash equilibrium},
  author={Hsieh, Yu-Guan and Antonakopoulos, Kimon and Mertikopoulos, Panayotis},
  booktitle={Conference on Learning Theory},
  pages={2388--2422},
  year={2021},
  organization={PMLR}
}

@article{cesa2012mirror,
  title={Mirror descent meets fixed share (and feels no regret)},
  author={Cesa-Bianchi, Nicolo and Gaillard, Pierre and Lugosi, G{\'a}bor and Stoltz, Gilles},
  journal={Advances in Neural Information Processing Systems},
  volume={25},
  year={2012}
}

@inproceedings{zhang2018dynamic,
  title={Dynamic regret of strongly adaptive methods},
  author={Zhang, Lijun and Yang, Tianbao and Zhou, Zhi-Hua and others},
  booktitle={International Conference on Machine Learning},
  pages={5882--5891},
  year={2018},
  organization={PMLR}
}

@inproceedings{cardoso2019competing,
  title={Competing against nash equilibria in adversarially changing zero-sum games},
  author={Cardoso, Adrian Rivera and Abernethy, Jacob and Wang, He and Xu, Huan},
  booktitle={International Conference on Machine Learning},
  pages={921--930},
  year={2019},
  organization={PMLR}
}

@inproceedings{zinkevich2003online,
  title={Online convex programming and generalized infinitesimal gradient ascent},
  author={Zinkevich, Martin},
  booktitle={Proceedings of the 20th International Conference on Machine Learning},
  pages={928--936},
  year={2003}
}

@inproceedings{mokhtari2016online,
  title={Online optimization in dynamic environments: Improved regret rates for strongly convex problems},
  author={Mokhtari, Aryan and Shahrampour, Shahin and Jadbabaie, Ali and Ribeiro, Alejandro},
  booktitle={2016 IEEE 55th Conference on Decision and Control (CDC)},
  pages={7195--7201},
  year={2016},
  organization={IEEE}
}

@article{besbes2015non,
  title={Non-stationary stochastic optimization},
  author={Besbes, Omar and Gur, Yonatan and Zeevi, Assaf},
  journal={Operations Research},
  volume={63},
  number={5},
  pages={1227--1244},
  year={2015},
  publisher={INFORMS}
}

@article{zhang2017strongly,
  title={Strongly adaptive regret implies optimally dynamic regret},
  author={Zhang, Lijun and Yang, Tianbao and Jin, Rong and Zhou, Zhi-Hua},
  journal={arXiv preprint arXiv:1701.07570},
  year={2017}
}

@inproceedings{daniely2015strongly,
  title={Strongly adaptive online learning},
  author={Daniely, Amit and Gonen, Alon and Shalev-Shwartz, Shai},
  booktitle={International Conference on Machine Learning},
  pages={1405--1411},
  year={2015},
  organization={PMLR}
}

@article{cover1991universal,
  title={Universal portfolios},
  author={Cover, Thomas M},
  journal={Mathematical Finance},
  volume={1},
  number={1},
  pages={1--29},
  year={1991},
  publisher={Wiley Online Library}
}

@article{anagnostides2024convergence,
  title={On the convergence of no-regret learning dynamics in time-varying games},
  author={Anagnostides, Ioannis and Panageas, Ioannis and Farina, Gabriele and Sandholm, Tuomas},
  journal={Advances in Neural Information Processing Systems},
  volume={36},
  year={2024}
}

@book{cesa2006prediction,
  title={Prediction, learning, and games},
  author={Cesa-Bianchi, Nicolo and Lugosi, G{\'a}bor},
  year={2006},
  publisher={Cambridge University Press}
}

@inproceedings{hazan2009efficient,
  title={Efficient learning algorithms for changing environments},
  author={Hazan, Elad and Seshadhri, Comandur},
  booktitle={Proceedings of the 26th Annual International Conference on Machine Learning},
  pages={393--400},
  year={2009}
}

@inproceedings{jun2017improved,
  title={Improved strongly adaptive online learning using coin betting},
  author={Jun, Kwang-Sung and Orabona, Francesco and Wright, Stephen and Willett, Rebecca},
  booktitle={Artificial Intelligence and Statistics},
  pages={943--951},
  year={2017},
  organization={PMLR}
}

@inproceedings{karimi2016linear,
  title={Linear convergence of gradient and proximal-gradient methods under the {P}olyak-{\L}{}ojasiewicz condition},
  author={Karimi, Hamed and Nutini, Julie and Schmidt, Mark},
  booktitle={Machine Learning and Knowledge Discovery in Databases: European Conference, ECML PKDD 2016, Riva del Garda, Italy, September 19-23, 2016, Proceedings, Part I 16},
  pages={795--811},
  year={2016},
  organization={Springer}
}

@inproceedings{freund1997using,
  title={Using and combining predictors that specialize},
  author={Freund, Yoav and Schapire, Robert E and Singer, Yoram and Warmuth, Manfred K},
  booktitle={Proceedings of the twenty-ninth annual ACM Symposium on Theory of Computing},
  pages={334--343},
  year={1997}
}
\bibliographystyle{plainnat}

\newpage

\appendix
\addcontentsline{toc}{section}{Appendix}
\part{Appendix} 
\parttoc 
\section{Additional Lemmas and Theorems}\label{app:supportive}

In this section of the appendix we present additional useful results used later to bound the \sdg~and the \dsp. We start with defining the min-max (sub) operator which will be useful throughout as an intermediary object in the proofs.

\begin{definition}[min-max sub-operators]
    Consider the function $f(x,y)$, a min-max operator is defined as the concatenation of sub-gradients,
    $$F(x_0,y_0) = (g_1,-g_2)$$
     where $g_1\in \delta_{x_0}$, $g_2\in \delta_{y_0}$ and $\delta_{x_0},\delta_{y_0}$ are sub differentials of the function $h_1(x) = f(x,y_0)$ at $x=x_0$ and of  $h_2(y) = f(x_0,y)$ at $y=y_0$ respectively. When the function $f$ is differentiable we have that $F(x_0,y_0) = (\nabla_x f(x_0,y_0),-\nabla_y f(x_0,y_0))$. We will refer to these as simply \textbf{operators} $F$, throughout the paper.
\end{definition}

We now provide bounds on $f(x_a,y_b)-f(x_b,y_a)$ for any general points $(x_a,y_a)$ and $(x_b,y_b)$ in terms of the min-max operator $F(z) = (\nabla_x f(x,y),-\nabla_y f(x,y))$.

\subsection{Relations between the function $f$ and the corresponding operator $F$}

\begin{restatable}{lemma}{obviousLemma}
\label{lem:obvious}
For any convex-concave function \( f \), a min-max sub-operator \( F \), and points \( (x_a, y_a) \in \cx\times\cy, (x_b, y_b) \in \cx\times\cy \), we have  
\begin{equation}\label{eqn:homo}
    \langle F(x_a, y_a), (x_a, y_a) - (x_b, y_b) \rangle \overset{(a)}{\geq} f(x_a, y_b) - f(x_b, y_a) \overset{(b)}{\geq} \langle F(x_b, y_b), (x_a, y_a) - (x_b, y_b) \rangle.
\end{equation}
\end{restatable}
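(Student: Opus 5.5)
The plan is to split the difference $f(x_a,y_b)-f(x_b,y_a)$ by inserting an intermediate value of $f$, and then to bound each piece with the first-order (subgradient) inequality for a convex function in $x$ or a concave function in $y$. I write the operator as $F(x,y)=(g^x,-g^y)$, where $g^x$ is a subgradient of $f(\cdot,y)$ at $x$ and $g^y$ is a supergradient of $f(x,\cdot)$ at $y$, matching the sign convention of the min-max sub-operator definition. Then
\[
\langle F(x,y),(x_a,y_a)-(x_b,y_b)\rangle=\langle g^x,x_a-x_b\rangle-\langle g^y,y_a-y_b\rangle .
\]

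\textbf{Inequality (a).} Insert $f(x_a,y_a)$:
\[
f(x_a,y_b)-f(x_b,y_a)=\big[f(x_a,y_b)-f(x_a,y_a)\big]+\big[f(x_a,y_a)-f(x_b,y_a)\big].
\]
\begin{itemize}
\item For the first bracket, concavity of $f(x_a,\cdot)$ with supergradient $g^y_a$ at $y_a$ gives $f(x_a,y_b)\le f(x_a,y_a)+\langle g^y_a,y_b-y_a\rangle$. So this bracket is at most $-\langle g^y_a,y_a-y_b\rangle$.
\item For the second bracket, convexity of $f(\cdot,y_a)$ gives $f(x_b,y_a)\ge f(x_a,y_a)+\langle g^x_a,x_b-x_a\rangle$. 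So this bracket is at most $\langle g^x_a,x_a-x_b\rangle$.
\end{itemize}
Adding the two bounds gives exactly $\langle F(x_a,y_a),(x_a,y_a)-(x_b,y_b)\rangle$.

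\textbf{Inequality (b).} Insert $f(x_b,y_b)$ instead:
\[
f(x_a,y_b)-f(x_b,y_a)=\big[f(x_a,y_b)-f(x_b,y_b)\big]+\big[f(x_b,y_b)-f(x_b,y_a)\big].
\]
\begin{itemize}
\item Convexity of $f(\cdot,y_b)$ at $x_b$ bounds the first bracket below by $\langle g^x_b,x_a-x_b\rangle$.
\item Concavity of $f(x_b,\cdot)$ at $y_b$ gives $f(x_b,y_a)\le f(x_b,y_b)+\langle g^y_b,y_a-y_b\rangle$. So the second bracket is at least $-\langle g^y_b,y_a-y_b\rangle$.
\end{itemize}
Summing gives $\langle F(x_b,y_b),(x_a,y_a)-(x_b,y_b)\rangle$.

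There is no real obstacle here. The only care needed is bookkeeping. First, the sign convention for $F$'s $y$-component must be applied correctly, since the definition phrases $g_2$ as a "subdifferential" of a concave function; I interpret it as a supergradient of $f(x_0,\cdot)$, equivalently $-g_2\in\partial(-f(x_0,\cdot))$. Second, one must choose the right intermediate point for each direction: $(x_a,y_a)$ for the upper bound and $(x_b,y_b)$ for the lower bound.
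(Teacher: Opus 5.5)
Your proof is correct and is essentially the paper's argument: the paper also adds the first-order convexity inequality in $x$ and the concavity inequality in $y$, both taken at $(x_a,y_a)$ for (a) and at $(x_b,y_b)$ for (b). Your version is slightly more careful, since it explicitly reads the $y$-component of $F$ as a supergradient, whereas the paper writes gradients throughout even though the lemma is stated for sub-operators.
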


\begin{proof}
Since the function $f$ is convex-concave the points $z_a,z_b$ satisfy,
\begin{align}
f(x_b,y_a)-f(x_a,y_a)-\ang{\nabla_x f(x_a,y_a),x_b-x_a}\geq 0\\\label{eqn:ccset6}
f(x_a,y_a)-f(x_a,y_b)-\ang{\nabla_y f(x_a,y_a),y_a-y_b}\geq 0
\end{align}
adding the above two equations and rearranging we obtain the left inequality (a). Furthermore the following also hold,
\begin{align*}
f(x_a,y_b)-f(x_b,y_b)-\ang{\nabla_x f(x_b,y_b),x_ 1-x_b}\geq 0\\ \label{eqn:ccset14}
f(x_b,y_b)-f(x_b,y_a)-\ang{\nabla_y f(x_b,y_b),y_b-y_a}\geq 0
\end{align*}
adding the above two equations and rearranging we obtain the right inequality (b).
\end{proof}

We now provide a similar bound for the \ec~function class.

\begin{restatable}{lemma}{ecTwoLemma}\label{lem:ecec2}

A \ec~function \( f(x,y) \) with parameter $\alpha$ and bounded operator norm \( \max_{z\in \mathcal{Z}} \|F(x,y)\| \leq L_0 \) over a domain with diameter \( \max_{(x_a, y_a),(x_b, y_b)\in \cx\times\cy}\|(x_a, y_a)-(x_b, y_b)\| \leq D \) satisfies the following:
\[
f(x_a,y_b)-f(x_b,y_a) \geq  \langle F(x_b,y_b),(x_a,y_a)-(x_b,y_b) \rangle + \frac{\gamma}{2} ((x_a-x_b,y_a-y_b)^\top M(x_b,y_b)(x_a-x_b,y_a-y_b).
\]
where \( \gamma = \frac{1}{2}\min\{\frac{1}{L_0D},\alpha\} \) and the matrix:
\[
M(x,y) = \begin{bmatrix} 
\nabla_x f(x, y) \nabla_x f(x, y)^\top & 0 \\ 
0 & \nabla_y f(x, y) \nabla_y f(x, y)^\top 
\end{bmatrix}.
\]
\end{restatable}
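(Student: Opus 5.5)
We reduce the claim to the classical one-dimensional exp-concavity lemma from OCO (Lemma 4.3 in \citep{hazan2016introduction}). That lemma says: if $h$ is $\alpha$-exponentially concave on a domain of diameter $D$ with $\|\nabla h\|\le L_0$, then for all $u,w$ in the domain
\[
h(u)\ge h(w)+\langle\nabla h(w),u-w\rangle+\frac{\gamma}{2}\langle\nabla h(w),u-w\rangle^2,
\qquad \gamma=\tfrac12\min\{\tfrac{1}{L_0D},\alpha\}.
\]
We first record its short proof, since we need it with exactly these constants. Since $\gamma\le\alpha$, the function $e^{-2\gamma h}$ is concave, because raising a positive concave function to the power $2\gamma/\alpha\le1$ preserves concavity. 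Concavity gives
\[
e^{-2\gamma h(u)}\le e^{-2\gamma h(w)}\bigl(1-2\gamma\langle\nabla h(w),u-w\rangle\bigr).
\]
Taking logarithms, we then apply $\log(1-s)\le -s-s^2/4$ for $|s|\le 1$. This is legitimate because $|2\gamma\langle\nabla h(w),u-w\rangle|\le 2\gamma L_0D\le1$.

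Next we split the min-max quantity along the intermediate point $(x_b,y_b)$:
\[
f(x_a,y_b)-f(x_b,y_a)=\bigl[f(x_a,y_b)-f(x_b,y_b)\bigr]+\bigl[f(x_b,y_b)-f(x_b,y_a)\bigr].
\]
\begin{itemize}
\item For the first bracket, apply the lemma to $h(x)=f(x,y_b)$, which is $\alpha$-exp-concave by the min-max EC definition, with $u=x_a$ and $w=x_b$. This gives a lower bound of
\[
\langle\nabla_x f(x_b,y_b),x_a-x_b\rangle+\frac{\gamma}{2}(x_a-x_b)^\top\nabla_x f(x_b,y_b)\nabla_x f(x_b,y_b)^\top(x_a-x_b).
\]
\item For the second bracket, apply the lemma to $h(y)=-f(x_b,y)$ with $u=y_a$ and $w=y_b$. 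This gives $-f(x_b,y_a)\ge -f(x_b,y_b)-\langle\nabla_y f(x_b,y_b),y_a-y_b\rangle+\frac{\gamma}{2}\langle\nabla_y f(x_b,y_b),y_a-y_b\rangle^2$, so the second bracket is at least
\[
\langle-\nabla_y f(x_b,y_b),y_a-y_b\rangle+\frac{\gamma}{2}(y_a-y_b)^\top\nabla_y f\nabla_y f^\top(y_a-y_b).
\]
\end{itemize}

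Summing the two bounds, the linear terms combine into $\langle F(x_b,y_b),(x_a,y_a)-(x_b,y_b)\rangle$, because $F=(\nabla_x f,-\nabla_y f)$. The quadratic terms combine into the block-diagonal form with $M(x_b,y_b)$. This is exactly the claimed inequality.

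The main thing to check is that the constants transfer to each partial function. The partial gradients satisfy $\|\nabla_x f\|,\|\nabla_y f\|\le\|F\|\le L_0$. The partial displacements satisfy $\|x_a-x_b\|,\|y_a-y_b\|\le\|(x_a,y_a)-(x_b,y_b)\|\le D$, and the relevant points $(x_a,y_b)$ and $(x_b,y_a)$ lie in the product domain $\cx\times\cy$. Hence $|2\gamma\langle\nabla h,u-w\rangle|\le1$ holds in each application, and the same $\gamma$ works for both. We also need differentiability, or a subgradient version, of the partial functions. Exp-concavity on a convex set gives the concavity inequality with any supergradient of $e^{-\alpha h}$, so we would take $\nabla$ to denote the components of $F$.
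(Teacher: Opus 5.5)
Your proposal is correct and is essentially the paper's proof. Both split $f(x_a,y_b)-f(x_b,y_a)$ through the intermediate point $(x_b,y_b)$, apply the second-order exp-concavity bound (Lemma 4.3 in \cite{hazan2016introduction}, which the paper uses implicitly) to $f(\cdot,y_b)$ and $-f(x_b,\cdot)$, and add the two inequalities; your version is more explicit about why $L_0$, $D$ and $\gamma$ transfer to the partial functions, with one small slip: concavity of $e^{-2\gamma h}$ needs $2\gamma\le\alpha$ (true since $\gamma\le\alpha/2$), not merely $\gamma\le\alpha$, although you state the exponent $2\gamma/\alpha\le 1$ correctly.
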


\begin{proof}
    From the definition \ref{def:expc} of exponentially concave functions we have that $g_{y_c}(x) = f(x,y_c)$ satisfies,
    $$g_{y_c}(x_b)-g_{y_c}(x_a)-\ang{g_{y_c}(x_a),x_b-x_a} \geq \frac{\gamma}{2} (x_a-x_b)^\top\nabla_x g(x_b) g(x_b)^\top(x_a-x_b) $$
which gives,
\begin{equation}\label{eqn:ec21}
    f(x_a,y_b)-f(x_b,y_b)-\ang{\nabla_x f(x_b,y_b),x_a-x_b)} \geq \frac{\gamma}{2} (x_a-x_b)^\top\nabla_x f(x_b,y_b) \nabla_xf(x_b,y_b)^\top(x_a-x_b)
\end{equation}

    and $g_{x_c}(y) = f(x_c,y)$ satisfies,
$$-g_{x_c}(y_b)+g_{x_c}(y_a)+\ang{g_{x_c}(y_a),y_b-y_a} \geq \frac{\gamma}{2} (y_a-y_b)^\top \nabla_y g(y_b) \nabla_y g(y_b)^\top(y_a-y_b)$$
 since $-f(x_c,y)$ is exponentially concave. This gives,
\begin{equation}\label{eqn:ec22}
    f(x_b,y_b)-f(x_b,y_a)-\ang{\nabla_y f(x_b,y_b),y_b-y_a)} \geq \frac{\gamma}{2} (y_a-y_b)^\top \nabla_y f(x_b,y_b) \nabla_y f(x_b,y_b)^\top(y_a-y_b)
\end{equation}
adding the equations Eq.~\eqref{eqn:ec21} and Eq.~\eqref{eqn:ec22} and rearranging we obtain the statement of the lemma.
\end{proof} 

As a corollary we restate the above fact purely in terms of the operator $F(x,y)$ and the matrix $M(x,y)$.

\begin{restatable}{corollary}{opecTwoCorollary}
\label{cor:opec2}  
The operator associated with a \ec~function \( f(x,y) \) with bounded operator \( \max_{(x, y)\in \cx\times\cy} \|F(x,y)\| \leq L_0 \) over a domain with diameter \( \max_{(x_a, y_a),(x_b, y_b)\in \cx\times\cy}\|(x_a, y_a)-(x_b, y_b)\| \leq D \) satisfies $\langle F(x_a, y_a) - F(x_b, y_b), (x_a, y_a) - (x_b, y_b) \rangle \geq \frac{\gamma}{2} ((x_a, y_a) - (x_b, y_b))^\top M(x_b, y_b) ((x_a, y_a) - (x_b, y_b)), \quad \forall (x_a, y_a), (x_b, y_b) \in \cx\times\cy
$ and also,
\[
\frac{\nabla F(x,y) + \nabla F(x,y)^\top}{2} \geq \frac{\gamma}{2} M(x,y), \quad \forall (x, y) \in \cx\times\cy.
\]
\end{restatable}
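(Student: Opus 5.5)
The plan is to derive both claims of Corollary~\ref{cor:opec2} from Lemma~\ref{lem:ecec2}. The first comes from symmetrizing that lemma in the two points; the second is its infinitesimal version along segments.

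\textbf{First inequality.} Apply Lemma~\ref{lem:ecec2} twice. The first application is exactly as stated:
\[
f(x_a,y_b)-f(x_b,y_a)\geq \langle F(x_b,y_b),z_a-z_b\rangle+\tfrac{\gamma}{2}\|z_a-z_b\|^2_{M(z_b)},
\]
where $z_a=(x_a,y_a)$ and $z_b=(x_b,y_b)$. The second bound is the upper inequality (a) of Lemma~\ref{lem:obvious}:
\[
\langle F(z_a),z_a-z_b\rangle \geq f(x_a,y_b)-f(x_b,y_a).
\]
This requires $f$ to be convex-concave, which holds because exponential concavity in each argument implies convexity in $x$ and concavity in $y$ ($e^{-\alpha h}$ concave forces $h$ convex). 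Chaining the two inequalities and subtracting $\langle F(z_b),z_a-z_b\rangle$ gives
\[
\langle F(z_a)-F(z_b),z_a-z_b\rangle\ge \tfrac{\gamma}{2}\|z_a-z_b\|^2_{M(z_b)},
\]
which is the first claim.

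One could sharpen this by also using the lemma with the roles of $a$ and $b$ exchanged, which would add a term $\tfrac{\gamma}{2}\|\cdot\|^2_{M(z_a)}$. That is not needed, since the statement only involves $M(z_b)$.

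\textbf{Second (matrix) inequality.} Assume $F$ is differentiable and fix $z$ in the interior of the domain and a direction $v$. Set $z_b=z$ and $z_a=z+sv$ for small $s>0$, and apply the first inequality:
\[
\langle F(z+sv)-F(z),sv\rangle \ge \tfrac{\gamma}{2}s^2\, v^\top M(z)v.
\]
Divide by $s^2$ and let $s\to 0$. The left side tends to $v^\top\nabla F(z)v = v^\top\tfrac{\nabla F(z)+\nabla F(z)^\top}{2}v$, which yields the PSD ordering $\tfrac{\nabla F+\nabla F^\top}{2}\succeq\tfrac{\gamma}{2}M$.

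Here the choice of which point carries $M$ is harmless: $M$ sits at the base point $z$, which is fixed as $s\to0$. For boundary points, I would extend by continuity of $\nabla F$ and $M$, or take one-sided directions into the feasible set.

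\textbf{Main obstacle.} The delicate part is regularity. The matrix statement presumes $F$ is differentiable, whereas the paper's operators may be sub-operators. I would state the second claim under a differentiability assumption, with $M$ defined through gradients as in Lemma~\ref{lem:ecec2}, and rely on that lemma's constant $\gamma=\tfrac12\min\{\tfrac{1}{L_0D},\alpha\}$ as given.
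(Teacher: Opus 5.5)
Your proof is correct and follows the paper's argument. For the first claim you chain Lemma~\ref{lem:ecec2} with inequality (a) of Lemma~\ref{lem:obvious}; for the matrix claim you divide the resulting inequality along a segment by the squared step length and let the step vanish. Your differences are small improvements: you explicitly justify convex-concavity (which Lemma~\ref{lem:obvious} needs), and you anchor $M$ at the fixed base point rather than at the moving point $z+\tau\delta z$ as the paper does, which removes the need for a continuity-of-$M$ step in the limit.
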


\begin{proof}

    From lemma \ref{lem:ecec2} we have, 
    $$f(x_a,y_b)-f(x_b,y_a) \geq  \ang{F(z_b),z_a-z_b}+\frac{\gamma}{2} (z_a-z_b)^\top M(z_b) (z_a-z_b).$$
    But we have from Eq.~\eqref{eqn:homo} that $$\ang{F(z_a),z_a-z_b} \geq f(x_a,y_b)-f(x_b,y_a).$$
    Thus we have,
    $$\ang{F(z_a),z_a-z_b} \geq \ang{F(z_b),z_a-z_b}+\frac{\gamma}{2} (z_a-z_b)^\top M(z_b) (z_a-z_b),$$
    rearranging we obtain the first statement of the lemma.
    Setting $z_a=z$ and $z_b = z+\tau \delta z$ in the first statement we obtain,
        $$\ang{F(z+\tau \delta z)-F(z),\tau \delta z} \geq \frac{\gamma}{2} (\tau \delta z)^\top F(z+\tau \delta z)F(z+\tau \delta z)^\top (\tau  \delta z) $$
        dividing both sides by $\tau^2$ we obtain,

        $$\lim_{\tau \rightarrow 0}\ang{\frac{F(z+\tau \delta z)-F(z)}{\tau}, \delta z} \geq \lim_{\tau \rightarrow 0} \frac{\gamma}{2}  \delta z^\top F(z+\tau \delta z)F(z+\tau \delta z)^\top  \delta z_t.$$
        
        Since the limits of both the LHS and RHS exist and the limit preserves inequalities, applying the limit $\tau \rightarrow 0$ to both sides we obtain
        
        $$\lim_{\tau \rightarrow 0}\ang{\frac{F(z+\tau \delta z)-F(z)}{\tau}, \delta z} \geq \lim_{\tau \rightarrow 0} \frac{\gamma}{2}  \delta z^\top F(z+\tau \delta z)F(z+\tau \delta z)^\top  \delta z_t.$$

        since the LHS is $\ang{\delta z, \nabla F \delta z}$ we obtain the second statement of the lemma.
\end{proof}

As a final step along the same lines we present a bound on $f(x_a,y_b)-f(x_b,y_a)$ for strongly convex-strongly concave functions.

\begin{restatable}{theorem}{strstrTheorem}
\label{thm:strstr}
A strongly convex-strongly concave function $f(x,y)$ with parameter $\lambda$ satisfies the following:$$f(x_a,y_b)-f(x_b,y_a) \geq \ang{F(x_b,y_b),(x_a,y_a)-(x_b,y_b)}+\frac{\lambda}{2} \|(x_a,y_a)-(x_b,y_b)\|^2,$$
for any $(x_a,y_a),(x_b,y_b)\in \cx\times\cy$.
\end{restatable}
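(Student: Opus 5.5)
The plan is to follow the same two-inequality template used in the proof of Lemma~\ref{lem:ecec2} and the right-hand inequality (b) of Lemma~\ref{lem:obvious}. The only change is that each one-sided convexity inequality is replaced by its strongly convex (respectively strongly concave) counterpart from Definition~\ref{def:strcon}. Both inequalities are expanded around the base point $(x_b,y_b)$, because the claimed lower bound involves $F(x_b,y_b)$.

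\textbf{Step 1.} Fix $y_c=y_b$ and use $\lambda$-strong convexity of $x\mapsto f(x,y_b)$ between $x_b$ and $x_a$:
\begin{equation*}
f(x_a,y_b)-f(x_b,y_b)-\ang{\nabla_x f(x_b,y_b),x_a-x_b}\geq \frac{\lambda}{2}\|x_a-x_b\|^2 .
\end{equation*}

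\textbf{Step 2.} Fix $x_c=x_b$ and use $\lambda$-strong concavity of $y\mapsto f(x_b,y)$, i.e.\ strong convexity of $-f(x_b,\cdot)$, between $y_b$ and $y_a$:
\begin{equation*}
f(x_b,y_b)-f(x_b,y_a)-\ang{\nabla_y f(x_b,y_b),y_b-y_a}\geq \frac{\lambda}{2}\|y_a-y_b\|^2 .
\end{equation*}
This rewrites as $f(x_b,y_b)-f(x_b,y_a)\geq \ang{-\nabla_y f(x_b,y_b),y_a-y_b}+\frac{\lambda}{2}\|y_a-y_b\|^2$.

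\textbf{Step 3.} Add the two inequalities. The terms $\pm f(x_b,y_b)$ cancel, which leaves
\begin{equation*}
f(x_a,y_b)-f(x_b,y_a)\geq \ang{\nabla_x f(x_b,y_b),x_a-x_b}+\ang{-\nabla_y f(x_b,y_b),y_a-y_b}+\frac{\lambda}{2}\big(\|x_a-x_b\|^2+\|y_a-y_b\|^2\big).
\end{equation*}
Since $F(x_b,y_b)=(\nabla_x f(x_b,y_b),-\nabla_y f(x_b,y_b))$, the two inner products combine into $\ang{F(x_b,y_b),(x_a,y_a)-(x_b,y_b)}$. The two squared norms combine into $\|(x_a,y_a)-(x_b,y_b)\|^2$, which is the claim.

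The only real obstacle is bookkeeping. Definition~\ref{def:strcon} as printed has typos: a missing inner product with $x_b-x_a$, and $\|x_b-x_b\|$ in place of $\|x_b-x_a\|$. I would therefore first state the intended first-order characterization explicitly, with the correct direction for the concave part, and check the sign of the $y$-gradient term carefully so that it matches the second component of $F$. In the nonsmooth case the same argument goes through with subgradients in place of gradients, exactly as in Lemma~\ref{lem:obvious}.
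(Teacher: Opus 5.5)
Your proposal is correct and matches the paper's proof essentially step for step. The paper likewise applies strong convexity of $f(\cdot,y_b)$ and strong concavity of $f(x_b,\cdot)$ at the base point $(x_b,y_b)$, then adds the two inequalities so that the $\pm f(x_b,y_b)$ terms cancel (its written proof uses $\gamma$ where $\lambda$ is meant, so your careful handling of the typos in Definition~\ref{def:strcon} and of the $y$-gradient sign is appropriate).
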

\begin{proof}

    Since $f(x,y)$ is strongly convex-strongly concave we have that $g_{y_c}(x) = f(x,y_c)$ satisfies,
    $$g_{y_c}(x_b)-g_{y_c}(x_a)-\ang{g_{y_c}(x_a),x_b-x_a} \geq \frac{\gamma}{2} \|x_a-x_b\|^2 $$
which gives,
\begin{equation}\label{eqn:ss21}
    f(x_a,y_b)-f(x_b,y_b)-\ang{\nabla_x f(x_b,y_b),x_a-x_b)} \geq \frac{\gamma}{2} \|x_a-x_b\|^2
\end{equation}

    and $g_{x_c}(y) = f(x_c,y)$ satisfies,
$$-g_{x_c}(y_b)+g_{x_c}(y_a)+\ang{g_{x_c}(y_a),y_b-y_a} \geq \frac{\gamma}{2} \|y_a-y_b\|^2$$
 since $-f(x_c,y)$ is strongly convex. This gives,
\begin{equation}\label{eqn:ss22}
    f(x_b,y_b)-f(x_b,y_a)-\ang{\nabla_y f(x_b,y_b),y_b-y_a)} \geq \frac{\gamma}{2} |y_a-y_b\|^2
\end{equation}
adding the equations Eq.~\eqref{eqn:ss21} and Eq.~\eqref{eqn:ss22} and rearranging we obtain the statement of the lemma.
\end{proof}

\subsection{Properties and relations between function classes}

In this section we present properties of the various function classes presented in the paper. We start by showing that strong convexity-strong concavity implies two-sided PL inequality and \ec.
\begin{theorem}
\label{thm:smt2combined}
A $\lambda$ strongly convex-strongly concave differentiable function satisfies the two-sided PL-inequality with parameter $\frac{1}{2\gamma}$. Furthermore, if its operator norm is bounded by $L_0$, then it satisfies the \ec~condition with parameter $\frac{\lambda}{L_0^2}$.
\end{theorem}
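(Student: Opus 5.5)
The two claims are independent, and I would prove them separately, each time working on the slices $x\mapsto f(x,y)$ and $y\mapsto -f(x,y)$. The parameter $\frac{1}{2\gamma}$ in the statement looks like a typo for $\lambda$, which is what falls out of strong convexity. So I aim to show the two-sided PL inequality of Definition~\ref{def:2sidedePL} with $\mu_1=\mu_2=\lambda$.

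\textbf{PL part.} Fix $y$ and set $h(x)=f(x,y)$, which is $\lambda$-strongly convex. For any $x,x'$, strong convexity gives
\[
h(x')\ge h(x)+\langle\nabla h(x),x'-x\rangle+\tfrac{\lambda}{2}\|x'-x\|^2 .
\]
Minimizing the right-hand side over $x'\in\mathbb{R}^m$ gives $h(x)-\frac{1}{2\lambda}\|\nabla h(x)\|^2$, so $h(x)-\min_{x'}h(x')\le\frac{1}{2\lambda}\|\nabla_x f(x,y)\|^2$. This is the first PL inequality with $\mu_1=\lambda$. The same argument applied to $-f(x,\cdot)$ gives the second with $\mu_2=\lambda$. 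The only subtlety is that PL is stated with an unconstrained gradient while $\min_x$ is over $\cx$. Because the unconstrained quadratic minimum is a lower bound, the inequality holds regardless, and Assumption~\ref{asmp:2} guarantees that the minima exist.

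\textbf{\ec~part.} Fix $y$, set $h(x)=f(x,y)$, and let $\alpha=\lambda/L_0^2$. I need $e^{-\alpha h}$ to be concave. For twice-differentiable $h$ this is equivalent to
\[
\nabla^2 h\succeq \alpha\,\nabla h\nabla h^\top .
\]
This holds because $\nabla^2h\succeq\lambda I$ and $\nabla h\nabla h^\top\preceq\|\nabla h\|^2 I\preceq L_0^2 I$. Without twice differentiability, I would instead verify concavity along segments. Let $\phi(s)=h(x+s v)$. Strong convexity makes $\phi'$ increase at rate at least $\lambda\|v\|^2$, while $(\phi')^2\le L_0^2\|v\|^2$. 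Then $\frac{d}{ds}\bigl(-\alpha\phi'(s)e^{-\alpha\phi(s)}\bigr)\le 0$ in the sense of monotone derivatives, which shows that the derivative of $e^{-\alpha\phi}$ is nonincreasing.

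Equivalently, and closer to how the paper uses \ec~in Lemma~\ref{lem:ecec2}, it suffices to check the first-order condition
\[
h(x')\ge h(x)+\langle\nabla h(x),x'-x\rangle+\tfrac{\alpha}{2}\bigl(\nabla h(x)^\top(x'-x)\bigr)^2 .
\]
This follows from the strong convexity lower bound together with $(\nabla h^\top u)^2\le L_0^2\|u\|^2$.

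The argument for the $y$-slice of $-f$ is symmetric and uses the bound $\|\nabla_y f\|\le L_0$, which comes from the operator-norm bound. The main obstacle is making the exponential-concavity step rigorous without assuming $C^2$. The route I would commit to is the segmentwise monotone-derivative argument above; the other steps are routine.
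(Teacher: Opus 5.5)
Your proposal is correct. The PL part follows the paper's route; the min-max EC part takes a different one, and one side remark needs correcting.

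\textbf{PL part.} Your argument is the paper's: the paper simply invokes the standard fact that $\lambda$-strong convexity implies PL. Your reading of the constant is also the right one. With Definition~\ref{def:2sidedePL} as written you get $\mu_1=\mu_2=\lambda$. The $\frac{1}{2\gamma}$ in the statement and the paper's own conclusion $\mu_1=\mu_2=\frac{1}{2\lambda}$ are in the convention $f-\min f\le\frac{1}{2\lambda}\|\nabla f\|^2$, not that of the definition.

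\textbf{Min-max EC part.} The paper works with the joint operator. It writes $\langle F(z_a)-F(z_b),z_a-z_b\rangle\ge\lambda\|z_a-z_b\|^2\ge\frac{\lambda}{L_0^2}(z_a-z_b)^\top M(z_b)(z_a-z_b)$, using $\lambda_{\max}(M(z_b))\le L_0^2$. It then appeals to the proof of Theorem~\ref{thm:gprime-combined}, which turns a quadratic first-order lower bound into concavity of $e^{-\alpha h}$ pointwise via $a+a^2\ge-\ln(1-a)$ for small $|a|$.

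\begin{itemize}
\item That conversion needs $\alpha|\nabla h^\top(x'-x)|$ to be small, which is a diameter condition.
\item It loses a constant factor, so as written it does not deliver exactly $\lambda/L_0^2$.
\item It is applied to $F$ rather than to the slices $f(\cdot,y_c)$ and $-f(x_c,\cdot)$ that the min-max EC definition concerns.
\end{itemize}

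Its only advantage is that it reuses the $M$-matrix machinery of Lemma~\ref{lem:ecec2}.

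Your argument is slice-wise and local. When $h\in C^2$ you use $\nabla^2h\succeq\lambda I\succeq\frac{\lambda}{L_0^2}\nabla h\nabla h^\top$. Otherwise, $s\mapsto\phi'(s)e^{-\alpha\phi(s)}$ is continuous with lower right Dini derivative at least $(\lambda\|v\|^2-\alpha\phi'(s)^2)e^{-\alpha\phi(s)}\ge 0$, hence nondecreasing. This gives exactly $\alpha=\lambda/L_0^2$ with no diameter assumption, and it is the more rigorous proof. That Dini-derivative sentence is what should replace your phrase ``in the sense of monotone derivatives.''

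\textbf{Correction to a side remark.} The bound $h(x')\ge h(x)+\langle\nabla h(x),x'-x\rangle+\frac{\alpha}{2}(\nabla h(x)^\top(x'-x))^2$ is not equivalent to $\alpha$-exp-concavity.
\begin{itemize}
\item The function $h(x)=-\frac{1}{\alpha}\ln x$ is exactly $\alpha$-exp-concave on $[1,2]$, yet violates the bound at $x=1$, $x'=2$, since $\ln 2>\frac12$.
\item The bound cannot be converted into $\alpha$-exp-concavity pointwise, because $-\ln(1-b)>b+\frac{b^2}{2}$ for every $b\in(0,1)$.
\end{itemize}
The bound does imply $\alpha$-exp-concavity, but only by writing it at both endpoints of a segment, adding, and passing to the local limit. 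That is the same Dini argument, so it adds nothing over the route you commit to.
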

\textbf{Proof of Theorem \ref{thm:smt2combined} Part 1}

\begin{proof}
Consider a $\lambda$ strongly convex-strongly concave function $f(x,y)$. We know that a $\lambda$ strongly convex function satisfies the PL-inequality with parameter $\frac{1}{2\lambda}$ (\cite{karimi2016linear}). Thus we have that for any point $z_0 = (x_c,y_c)$ (since $f(x,y_c)$ and $-f(x_c,y)$ are strongly convex),

$$f(x,y_c)-\min_x f(x,y_c) \leq \frac{1}{2\lambda}\nabla_x \|f(x,y_c)\|^2$$
$$-f(x_c,y)-\min_y (-f(x_c,y)) \leq \frac{1}{2\lambda }\nabla_y \|f(x_c,y)\|^2$$

since the above hold for any $z_0\in \cz$ we have that the function $f$ satisfies the 2-sided PL inequality with $\mu_1=\mu_2=\frac{1}{2\lambda}$.
\end{proof}

Before we provide the proof for the second part of Theorem \ref{thm:smt2combined}, we provide the following Lemma.

\begin{lemma}
     A strongly convex-strongly concave functions satisfies the following,
    $$f(x_a,y_b)-f(x_b,y_a) \geq  \ang{F(z_b),z_a-z_b}+\frac{\gamma}{2} (z_a-z_b)^\top M (z_a-z_b).$$
\end{lemma}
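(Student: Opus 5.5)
The plan is to deduce the lemma from Theorem~\ref{thm:strstr} by comparing the quadratic forms. Here $M = M(x_b,y_b)$ is the block-diagonal matrix of Lemma~\ref{lem:ecec2}. Its blocks are $\nabla_x f(z_b)\nabla_x f(z_b)^\top$ and $\nabla_y f(z_b)\nabla_y f(z_b)^\top$, and $\gamma$ is the constant appropriate to the \ec~parameter $\alpha = \lambda/L_0^2$. Theorem~\ref{thm:strstr} already gives
\[
f(x_a,y_b)-f(x_b,y_a) \geq \ang{F(z_b),z_a-z_b}+\frac{\lambda}{2}\|z_a-z_b\|^2 .
\]
So it suffices to show $\frac{\gamma}{2}(z_a-z_b)^\top M (z_a-z_b) \leq \frac{\lambda}{2}\|z_a-z_b\|^2$, i.e. $\gamma M \preceq \lambda I$.

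The key step is bounding $M$ in the Loewner order. Write $w = (w_x,w_y) = z_a - z_b$. Then
\[
w^\top M w = \ang{\nabla_x f(z_b),w_x}^2 + \ang{\nabla_y f(z_b),w_y}^2 .
\]
By Cauchy--Schwarz this is at most $\|\nabla_x f(z_b)\|^2\|w_x\|^2 + \|\nabla_y f(z_b)\|^2\|w_y\|^2$. Each gradient block norm is at most $\|F(z_b)\| \leq L_0$, so the whole expression is at most $L_0^2\|w\|^2$. Hence $M \preceq L_0^2 I$.

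It remains to check $\gamma L_0^2 \leq \lambda$. With $\alpha = \lambda/L_0^2$ and $\gamma = \frac{1}{2}\min\{\frac{1}{L_0D},\alpha\}$, we have $\gamma \leq \alpha/2 = \lambda/(2L_0^2)$. Therefore $\gamma L_0^2 \leq \lambda/2 \leq \lambda$, and the claimed inequality follows by chaining. If the paper intends $\gamma$ to be an arbitrary constant with $\gamma \leq \lambda/L_0^2$, the same chain works verbatim.

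The only real obstacle is bookkeeping of constants, since the paper uses $\gamma$ and $\lambda$ somewhat interchangeably (e.g.\ in the proof of Theorem~\ref{thm:strstr}). I would state explicitly that $\gamma$ is taken with $\gamma \leq \lambda/L_0^2$, which is exactly what the second part of Theorem~\ref{thm:smt2combined} needs. That second part then follows by comparing with the inequality of Lemma~\ref{lem:ecec2} used as the working characterization of \ec.
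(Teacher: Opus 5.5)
Your argument is correct, and its core is the same as the paper's. The paper's proof of this lemma re-derives Theorem~\ref{thm:strstr}: it adds the two one-sided strong convexity inequalities at $z_b$, with $\gamma$ written for the modulus, and stops at $\frac{\gamma}{2}\|z_a-z_b\|^2$. Read literally, it therefore gives the stated form only if $M$ is the identity. What you add is the Loewner comparison $M(z_b)\preceq L_0^2 I$ together with the explicit choice $\gamma\le\lambda/L_0^2$. The paper uses this eigenvalue comparison ($\lambda_{\min}(I)\ge\lambda_{\max}(M(z_b)/L_0^2)$) only afterwards, in Part~2 of Theorem~\ref{thm:smt2combined}. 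So your version is the self-contained proof for the reading $M=M(z_b)$, which is the reading Part~2 actually needs.

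The constant restriction is more than bookkeeping. Take $f(x,y)=\frac{\lambda}{2}x^2+cx-\frac{\lambda}{2}y^2$ on $[0,1]^2$. Then $f(x_a,y_b)-f(x_b,y_a)-\ang{F(z_b),z_a-z_b}$ equals exactly $\frac{\lambda}{2}\|z_a-z_b\|^2$. For $y_a=y_b$, the right-hand quadratic term is $\frac{\gamma}{2}(\lambda x_b+c)^2(x_a-x_b)^2$. Hence the lemma fails for $\gamma=\lambda$ as soon as $c>1$, and a scaling $\gamma\lesssim\lambda/L_0^2$ is genuinely forced.

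Your closing remark about Part~2 lies outside this lemma. Note that the inequality of Lemma~\ref{lem:ecec2} is a consequence of exp-concavity, not a characterization of it. Passing back to exp-concavity needs an argument like the one in Part~1 of Theorem~\ref{thm:gprime-combined}, at the cost of a constant.
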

\begin{proof}

    Since $f(x,y)$ is strongly convex-strongly concave we have that $g_{y_c}(x) = f(x,y_c)$ satisfies,
    $$g_{y_c}(x_b)-g_{y_c}(x_a)-\ang{g_{y_c}(x_a),x_b-x_a} \geq \frac{\gamma}{2} \|x_a-x_b\|^2 $$
which gives,
\begin{equation}\label{eqn:ss21a}
    f(x_a,y_b)-f(x_b,y_b)-\ang{\nabla_x f(x_b,y_b),x_a-x_b)} \geq \frac{\gamma}{2} \|x_a-x_b\|^2
\end{equation}

    and $g_{x_c}(y) = f(x_c,y)$ satisfies,
$$-g_{x_c}(y_b)+g_{x_c}(y_a)+\ang{g_{x_c}(y_a),y_b-y_a} \geq \frac{\gamma}{2} \|y_a-y_b\|^2$$
 since $-f(x_c,y)$ is exponentially concave. This gives,
\begin{equation}\label{eqn:ss22a}
    f(x_b,y_b)-f(x_b,y_a)-\ang{\nabla_y f(x_b,y_b),y_b-y_a)} \geq \frac{\gamma}{2} |y_a-y_b\|^2
\end{equation}
adding the equations Eq.~\eqref{eqn:ss21a} and Eq.~\eqref{eqn:ss22a} and rearranging we obtain the statement of the lemma.
\end{proof} 
\textbf{Proof of Theorem \ref{thm:smt2combined} Part 2}

\begin{proof}

We know the operator $F$ of a $\lambda$ strongly convex-strongly concave function $f(x,y)$ with bounded operator norm $\|F(x,y)\| \leq L_0$ satisfies,
$$\ang{F(z_a)-F(z_b),z_a-z_b} \geq {\lambda} \|z_a-z_b\|^2 \geq \frac{\lambda}{L_0^2} (z_a-z_b)^\top M(z_b) (z_a-z_b)$$
since for any two matrices $x^\top A x \geq x^\top B x~\forall~x$ if $\lambda_{min}(A) \geq \lambda_{max}(B)$, setting $A=I$ and $B = \frac{M(z_b)}{L_0^2}$. Following the proof of Theorem \ref{thm:gprime-combined} we have that the function $f$ is $\frac{\lambda}{L_0^2}$ exponentially concave.
\end{proof}

We now present a result describing properties of the static gap function $g'_t(x,y) = f(x,y')-f(x',y)$ conditioned on the properties of $f$. These results play a crucial role in establishing the \sdg~and \dsp~rates for these functions.
\begin{theorem}
\label{thm:gprime-combined}
Let $g'(x,y) = f(x,y') - f(x',y)$ be defined with respect to a function $f(x,y)$ where $(x',y')$ is a fixed point $(x',y')\in \cx\times\cy$. Then:
\begin{itemize}
    \item If the function $f(x,y)$ is \ec~with parameter $\alpha$ over a domain with diameter,\newline
$\max_{(x_a,y_b),(x_b,y_b)\in \cx\times\cy}\|(x_a,y_a)-(x_b,y_b)\| \leq D$ and has bounded operator $\max_{z\in \cz} \|F(x,y)\| \leq G$, then $g'(x,y)$ is $\frac{\gamma}{4}$ exponentially concave where $\gamma = \frac{1}{2}\min\left\{\frac{1}{L_0D},\alpha\right\}$.

    \item If the function $f(x,y)$ is $\lambda$ strongly convex-strongly concave, then $g'(x,y)$ is $\lambda$ strongly convex in $(x,y)$.
\end{itemize}
\end{theorem}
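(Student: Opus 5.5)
Both parts rest on one observation. The static gap function splits as $g'(x,y)=\phi(x)+\psi(y)$, with $\phi(x)=f(x,y')$ and $\psi(y)=-f(x',y)$. Each piece depends on only one block of variables. Hence $\nabla g'(x,y)=(\nabla_x f(x,y'),\,-\nabla_y f(x',y))$ and the Hessian of $g'$ is block diagonal. Every curvature property of $g'$ in $z=(x,y)$ can therefore be read off from the corresponding one-sided properties of $f$ in $x$ (with $y'$ fixed) and in $-f$ in $y$ (with $x'$ fixed).

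\emph{Strongly convex-strongly concave case.} By Definition~\ref{def:strcon}, $\phi$ is $\lambda$-strongly convex in $x$, since $y_c=y'$ is fixed. Likewise $\psi$ is $\lambda$-strongly convex in $y$, since $x_c=x'$ is fixed. For $z_a=(x_a,y_a)$ and $z_b=(x_b,y_b)$, I add the two first-order strong convexity inequalities. This gives
\[
g'(z_a)\ge g'(z_b)+\langle \nabla g'(z_b),z_a-z_b\rangle+\tfrac{\lambda}{2}\bigl(\|x_a-x_b\|^2+\|y_a-y_b\|^2\bigr),
\]
and the last bracket equals $\tfrac{\lambda}{2}\|z_a-z_b\|^2$. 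This is exactly $\lambda$-strong convexity of $g'$ in $z$, and no further work is needed.

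\emph{\ec~case.} Here I use the standard OCO fact (Hazan, Lemma 4.3) that an $\alpha$-exp-concave function $h$ with gradient bound $G$ on a domain of diameter $D$ satisfies
\[
h(u)\ge h(v)+\langle\nabla h(v),u-v\rangle+\tfrac{\gamma}{2}\langle\nabla h(v),u-v\rangle^2,
\]
with $\gamma=\tfrac12\min\{\tfrac{1}{GD},\alpha\}$. Applying it to $\phi$ and to $\psi$ and adding gives
\[
g'(z_a)\ge g'(z_b)+\langle\nabla g'(z_b),z_a-z_b\rangle+\tfrac{\gamma}{2}\bigl(a^2+b^2\bigr),
\]
where $a=\langle\nabla\phi(x_b),x_a-x_b\rangle$ and $b=\langle\nabla\psi(y_b),y_a-y_b\rangle$. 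This is the analogue of Lemma~\ref{lem:ecec2} with the block-diagonal matrix $M$.

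To turn this into exp-concavity of $g'$ I need the quadratic term to be $\tfrac{\gamma}{4}(a+b)^2=\tfrac{\gamma}{4}\langle\nabla g'(z_b),z_a-z_b\rangle^2$. The inequality $(a+b)^2\le 2(a^2+b^2)$ gives $\tfrac{\gamma}{2}(a^2+b^2)\ge\tfrac{\gamma}{4}(a+b)^2$, so
\[
g'(z_a)\ge g'(z_b)+\langle\nabla g'(z_b),z_a-z_b\rangle+\tfrac{\gamma}{4}\langle\nabla g'(z_b),z_a-z_b\rangle^2 .
\]
Equivalently, $\nabla^2 g'\succeq\tfrac{\gamma}{2}M\succeq\tfrac{\gamma}{4}\nabla g'\nabla g'^\top$, where the second step uses $vv^\top\preceq 2\,\mathrm{diag}(v_xv_x^\top,v_yv_y^\top)$.

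The main obstacle is going from this second-order bound back to exp-concavity, that is, to concavity of $e^{-\beta g'}$. For a twice-differentiable $h$, $e^{-\beta h}$ is concave iff $\nabla^2 h\succeq\beta\nabla h\nabla h^\top$. So the Hessian form gives $\tfrac{\gamma}{2}$-exp-concavity directly, and a fortiori $\tfrac{\gamma}{4}$, provided I work with Hessians. Alternatively, the first-order inequality with coefficient $\tfrac{\gamma}{4}$ is exactly what the ONS-style analysis consumes, and one can take it as the operative definition. Either route needs the constants to line up, with gradient bound $G=L_0$. I would state the Hessian route, using differentiability as in Corollary~\ref{cor:opec2}, and lose the extra factor to accommodate the weaker first-order constant.
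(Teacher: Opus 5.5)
Your strongly convex--strongly concave argument is the paper's (add the one-sided inequalities at $y'$ and $x'$). For the min-max EC case your first step also matches the paper: apply the quadratic lower bound to $\phi$ and $\psi$, add, and use $(a+b)^2\le 2(a^2+b^2)$ to get the first-order bound with coefficient $\tfrac{\gamma}{4}$.

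The routes diverge at the step you flag as the obstacle. The paper stays at first order. It sets $a=\tfrac{\gamma}{4}\langle\nabla g'(z_b),z_a-z_b\rangle$, uses $D$ and the gradient bound to keep $|a|$ small, applies $a+a^2\ge-\ln(1-a)$ on that range, and exponentiates. This gives $p(z_a)\le p(z_b)+\langle\nabla p(z_b),z_a-z_b\rangle$ for $p=e^{-\frac{\gamma}{4}g'}$, i.e.\ concavity of $p$ using only first derivatives.

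Your Hessian criterion makes this conversion immediate and gives a better constant. However, it needs $f$ to be twice differentiable, which neither the statement nor the definition of min-max EC provides, so you must add that hypothesis. Two smaller points:
\begin{itemize}
\item Your displayed chain $\nabla^2 g'\succeq\tfrac{\gamma}{2}M\succeq\tfrac{\gamma}{4}\nabla g'\nabla g'^\top$ gives $\tfrac{\gamma}{4}$, not the $\tfrac{\gamma}{2}$ you then assert. The latter needs $\nabla^2 g'\succeq\gamma M$, which a Taylor expansion of the $\tfrac{\gamma}{2}$ bound does give; either constant suffices here.
\item ``Taking the first-order inequality as the operative definition'' would not prove the claim, since exp-concavity is defined as concavity of $e^{-\alpha f}$.
\end{itemize}

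If you want to avoid smoothness altogether, your separability observation already suffices. The function $e^{-\frac{\alpha}{2}g'(x,y)}=\bigl(e^{-\alpha\phi(x)}\,e^{-\alpha\psi(y)}\bigr)^{1/2}$ is the geometric mean of two nonnegative concave functions, hence concave, so $g'$ is $\tfrac{\alpha}{2}$-exp-concave. Then $\tfrac{\gamma}{4}\le\tfrac{\alpha}{2}$, together with monotonicity of exp-concavity in its parameter, gives the theorem without using $D$ or $L_0$ at all.
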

\textbf{Proof of Theorem \ref{thm:gprime-combined} Part 1}

\begin{proof}

    Since $f(x,y)$ is \ec~we have $h(x) = f(x,y')$ is exponentially concave and $g'(y) = -f(x',y)$ is exponentially concave. Consider two points $z_a = (x_a,y_a),z_b=(x_b,y_b)$. Thus we have, $$h(x_a)-h(x_b)-\nabla h(x_b)^\top (x_a-x_b) \geq \frac{\gamma}{2} (x_a-x_b)^\top(\nabla_x h(x_b) \nabla_x h(x_b)^\top)(x_a-x_b)$$
    and also, 
    $$g(y_a)-g(y_b)-\nabla g(y_b)^\top (y_a-y_b) \geq \frac{\gamma}{2} (y_a-y_b)^\top(\nabla_y g(y_b) \nabla_y g(y_b)^\top)(y_a-y_b)$$
    where $\gamma=\frac{1}{2}\min\{\frac{1}{L_0D},\alpha\}$. Adding the two and using the fact that,
    \begin{align*}
        2((x_a-x_b)^\top\nabla_x h(x_b) \nabla_x h(x_b)^\top(x_a-x_b)&+(y_a-y_b)^\top\nabla_y g(y_b) \nabla_y g(y_b)^\top(y_a-y_b))\\
        &\geq (x_a-x_b)^\top\nabla_x h(x_b) \nabla_x h(x_b)^\top(x_a-x_b)\nonumber\\
        &+(y_a-y_b)^\top\nabla_y g(y_b) \nabla_y g(y_b)^\top(y_a-y_b)\nonumber\\
        &+2(x_a-x_b)^\top \nabla_x h(x_b) \nabla g(y_b)^\top (y_a-y_b)\nonumber
    \end{align*}
    we obtain,    
    \begin{align}\label{eq:ecmain}
        g'(z_a)-g'(z_b)-\nabla_z g'(z_b)^\top (z_a-z_b) \geq \frac{\gamma}{4} (z_a-z_b)^\top \nabla_z g'(z) \nabla_z g'(z)^\top (z_a-z_b)
    \end{align}
    We know $|\frac{\gamma}{4} \nabla_z g(z_b)^\top (z_a-z_b)| \leq L_0D\gamma \leq \frac{1}{8}$ since $\frac{\gamma}{4}\leq \frac{L_0D}{8}$. We have $a+a^2\geq -\ln(1-a)$ for $|a| \leq \frac{1}{8}$. Thus plugging in $a= \frac{\gamma}{4}\nabla_z g'(z_b)^\top (z_a-z_b)$ we have from \eqref{eq:ecmain},
    $$\nabla_z g'(z_b)^\top (z_a-z_b) + \frac{\gamma}{4} (z_a-z_b)^\top \nabla_z g'(z) \nabla_z g'(z)^\top (z_a-z_b) \geq -\frac{\ln(1-\frac{\gamma \nabla_z g'(z_b)^\top (z_a-z_b)}{4})}{\frac{\gamma}{4}}.$$
    Plugging back in \eqref{eq:ecmain} we obtain, $g'(z_a)-g'(z_b)\geq -\frac{\ln(1-\frac{\gamma \nabla_z g'(z_b)^\top (z_a-z_b)}{4})}{\frac{\gamma}{4}}.$ Raising both sides of the function to the power of exponential we have (since the exponential function is monotone),
    $$e^{-\frac{\gamma}{4}g'(z_b)} (1-\frac{\gamma \nabla_z g'(z_b)^\top (z_a-z_b)}{4})\geq e^{-\frac{\gamma}{4} g'(z_a)}$$
     Consider the function $p(z) = e^{-\frac{\gamma}{4}g'(z)}$ with gradient $\nabla_z p(z) = \frac{\gamma}{4} e^{-\frac{\gamma}{4}g'(z)} \nabla_z g'(z)$. The above implies, 
     $$p(z_b)(1-\frac{\nabla_z p(z_b)^\top (z_a-z_b)}{p(z_b)}) \geq p(z_a)$$
     which upon rearranging gives, $p(z_b)-p(z_a) \geq \nabla_z p(z_b)^\top (z_a-z_b)$. This implies that the function $p(z)$ is concave and the function $g'(z)$ is $\frac{\gamma}{4}$ exponentially concave.  
\end{proof}

\textbf{Proof of Theorem \ref{thm:gprime-combined} Part 2}

\begin{proof}

    Consider two points $z_a = (x_a,y_a)$ and $z_b = (x_b,y_b)$. If the function $f$ is $\lambda$-strongly convex-strongly concave, plugging in $(x_c,y_c) = (x',y') \in \cz$ in definition, \ref{def:strcon} for the point $z'=(x',y')\in \cz$ we have, $f(x_b,y') \geq f(x_a,y')+\nabla_x f(x_a,y')+\frac{\lambda}{2} \|x_b-x_b\|^2~\forall~x_a,x_b\in \cx$ and $f(x',y_a) \geq f(x',y_b)-\nabla_y f(x',y_a)+\frac{\lambda}{2} \|y_b-y_a\|^2~\forall~y_a,y_b\in \cy$. Adding the two we have,
    \begin{align*}
        f(x_b, y') - f(x', y_b) &\geq f(x_a, y') - f(x', y_a)+(\nabla_x f(x_a, y') - \nabla_y f(x', y_a)) \\
        &\quad + \frac{\lambda}{2} \big( \|x_b - x_a\|^2 + \|y_b - y_a\|^2 \big), \forall x_a, x_b \in \mathcal{X}, \; y_a, y_b \in \mathcal{Y}.
    \end{align*}
    Since $\nabla_z g'(z)|_{z=z_a} = (\nabla_x f(x_a, y') - \nabla_y f(x', y_a))$ we have that $g'(z)$ is strongly convex with the parameter $\lambda$.
\end{proof}
\section{Proofs for OMMO}\label{app:sec2}

Before we present the proofs for OGDA and OMMNS in the strongly convex-strongly concave and \ec~setting respectively, we present a meta-algorithm that has OGDA and OMMNS as a special case and a meta result for the same. We obtain bounds on \dsp~with this algorithm as the base learner in the IFLH framework. 

\begin{algorithm}
\caption{Lower Regular Algorithms (LRAs)}
\label{alg:lraalg}
\KwIn{$A_0$,$(x_0,y_0), (u_0^x,u_0^y) \in \mathcal{X} \times \mathcal{Y}$}
\For{$t = 1$ \textbf{to} $T$}{
    Play $(x_t,y_t)$, observe cost $f_t(x_t,y_t)$ and $F(x_t,y_t)$\\
    Regularity update: $A_t = A_{t-1} + M_t(x_t,y_t)$\\
    $(u_{t+1}^x,u_{t+1}^y) = (x_t,y_t) - \frac{1}{\gamma} A_t^{-1} F(x_t,y_t)$\\
    $(x_{t+1},y_{t+1}) = \argmin_{(x,y) \in \cx\times\cy} \|(u_{t+1}^x,u_{t+1}^y) - (x,y)\|_{A_t}^2$\
}
\end{algorithm}

\begin{lemma}\label{lem:lra}
    The iterates action pairs $z_t=(x_t,y_t)$ of the LRA algorithm satisfy
    \begin{align}\label{final2}
    \sum_{t=1}^TF_t(z_t)^\top (z_t-z') &\leq \frac{1}{2\gamma}\sum_{t=1}^T F_t(z_t)^\top A_t^{-1}F_t(z_t)+\frac{\gamma}{2}\sum_{t=1}^T(z_t-z')^\top M_t(z_t)(z_t-z')\\\nonumber
    &\quad\quad+\frac{\gamma}{2}(z_a-z')^\top (A_1-F_1(z_a)F_1(z_a)^\top) (z_a-z')~\forall~z'=(x',y')\in \cx\times \cy
 \end{align}
\end{lemma}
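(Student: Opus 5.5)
This is the standard Online Newton Step telescoping argument applied to the operator $F_t$ in place of a gradient, so no convexity of $f_t$ is needed here; only the update rule and the projection step are used. The quantity $z_a$ in the last term is read as the initial iterate $z_1$; the bound $\|z_1-z'\|^2_{A_0}$ is the natural form of that term.

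\textbf{Step 1 (unprojected update).} Set $\nabla_t := F_t(z_t)$. The unprojected step gives $u_{t+1}-z' = z_t - z' - \frac{1}{\gamma}A_t^{-1}\nabla_t$. Expanding in the $A_t$-norm,
\[
\|u_{t+1}-z'\|_{A_t}^2 = \|z_t-z'\|_{A_t}^2 - \tfrac{2}{\gamma}\nabla_t^\top(z_t-z') + \tfrac{1}{\gamma^2}\nabla_t^\top A_t^{-1}\nabla_t .
\]

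\textbf{Step 2 (projection).} Since $z_{t+1}$ is the $A_t$-norm projection of $u_{t+1}$ onto the convex set $\cx\times\cy$, the generalized Pythagorean inequality gives $\|z_{t+1}-z'\|_{A_t}^2 \le \|u_{t+1}-z'\|_{A_t}^2$. Combining with Step 1 and rearranging,
\[
2\nabla_t^\top(z_t-z') \le \tfrac{1}{\gamma}\nabla_t^\top A_t^{-1}\nabla_t + \gamma\|z_t-z'\|_{A_t}^2 - \gamma\|z_{t+1}-z'\|_{A_t}^2 .
\]

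\textbf{Step 3 (summation).} Sum over $t=1,\dots,T$ and regroup the distance terms as
\[
\gamma\|z_1-z'\|_{A_1}^2 + \gamma\sum_{t=2}^T (z_t-z')^\top(A_t-A_{t-1})(z_t-z') - \gamma\|z_{T+1}-z'\|_{A_T}^2 .
\]
Then:
\begin{itemize}
\item drop the last (nonpositive) term;
\item use $A_t - A_{t-1} = M_t(z_t)$ for $t\ge 2$;
\item for $t=1$, write $\|z_1-z'\|^2_{A_1} = (z_1-z')^\top M_1(z_1)(z_1-z') + (z_1-z')^\top(A_1-M_1(z_1))(z_1-z')$.
\end{itemize}
For OMMNS, $M_1(z_1)=F_1(z_1)F_1(z_1)^\top$, which produces exactly the stated last term. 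Dividing by $2$ yields the claimed inequality.

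\textbf{Main obstacle.} The delicate point is the projection step. One must verify that the Pythagorean inequality holds in the $A_t$-norm, which requires $A_t \succ 0$; this holds since $A_0=I$ for OMMNS and $A_t=tI$ for OGDA. The bookkeeping is also delicate: the norm changes from $A_{t}$ to $A_{t+1}$ between consecutive terms, and telescoping must yield exactly the increments $M_t$, with the first-round term left separated as in the statement. For OGDA the same argument applies with $M_t = I$.
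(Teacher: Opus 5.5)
Your proposal is correct and follows the paper's proof essentially line for line: you expand the unprojected step in the $A_t$-norm, apply the projection inequality $\|z_{t+1}-z'\|_{A_t}\le\|u_{t+1}-z'\|_{A_t}$, sum and regroup into $\|z_1-z'\|^2_{A_1}+\sum_{t\ge 2}\|z_t-z'\|^2_{A_t-A_{t-1}}$, and drop the nonpositive final term. You also note that the last term is really $\|z_1-z'\|^2_{A_1-M_1(z_1)}=\|z_1-z'\|^2_{A_0}$, which matches the displayed $A_1-F_1F_1^\top$ form only when $M_1=F_1F_1^\top$ (as in OMMNS); this is a correct and useful clarification that the paper leaves implicit.
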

\begin{proof}

         Define $z' = \argmin_{z}\sum_{t=1}^T g'_t(z)$. From the update rule we have, $u_{t+1}-z' = z_t-z'-\frac{1}{\gamma}A_t^{-1} F_t(z_t)$ thus, $A_t(u_{t+1}-z') = A_t(z_t-z')-\frac{1}{\gamma}F_t(z_t)$.
    Combining the previous two we obtain,
    $$(u_{t+1}-z')^\top A_t(u_{t+1}-z') = (z_t-z')^\top A_t(z_t-z')-\frac{2}{\gamma}F_t(z_t)^\top(z_t-z')+\frac{1}{\gamma^2}F_t^\top(z_t)(A_t)^{-1}F_t(z_t)$$
    Since $z_{t+1}$ is a projection of $u_{t+1}$ in the norm induced by $A_t$ (which is PSD) we have,
    \begin{align}
        (u_{t+1}-z')^\top A_t(u_{t+1}-z') &= \|u_{t+1}-z'\|_{A_t}^2\nonumber\\
        &\geq \|z_{t+1}-z'\|_{A_t}^2\nonumber\\
        &=(z_{t+1}-z')^\top A_t (z_{t+1}-z')\nonumber
    \end{align}
     Overall we have,
\begin{equation}\label{final}
F_t(z_t)^\top (z_t-z') \leq \frac{1}{2\gamma}F_t^\top A_t^{-1}F_t(z_t)+\frac{\gamma}{2}(z_{t}-z')^\top A_t(z_{t}-z')-\frac{\gamma}{2}(z_{t+1}-z')^\top A_t (z_{t+1}-z')    
\end{equation}
Now, summing up over $t=1$ to $T$ we get that,
\begin{align*}
    \sum_{t=1}^TF_t(z_t)^\top (z_t-z') &\leq \frac{1}{2\gamma}\sum_{t=1}^T F_t(z_t)^\top A_t^{-1}F_t(z_t)+\frac{\gamma}{2}(z_{1}-z')^\top A_1 (z_{1}-z')\\
    &\quad\quad+\frac{\gamma}{2}\sum_{t=2}^T(z_t-z')^\top(A_t-A_{t-1}) (z_t-z')-\frac{\gamma}{2}(z_{T+1}-z')^\top A_T (z_{T+1}-z')\\\nonumber
    &= \frac{1}{2\gamma}\sum_{t=1}^T F_t(z_t)^\top A_t^{-1}F_t(z_t)+\frac{\gamma}{2}\sum_{t=1}^T(z_t-z')^\top M(z_t)(z_t-z')\\\nonumber
    &\quad\quad+\frac{\gamma}{2}(z_a-z')^\top (A_1-F_1(z_a)F_1(z_a)^\top) (z_a-z')
 \end{align*}

 where the final equality follows from the fact that $A_t-A_{t-1} = M(z_t)$ and $A_T$ is positive semi-definite.
\end{proof}

\subsection{Static Regret Proofs}\label{app:static}
\OGDATheorem*
\begin{proof}
From Lemma \ref{lem:lra} setting $M_t(z_t)=I$ we have,
\begin{align*}
    \sum_{t=1}^TF_t(z_t)^\top (z_t-z') &\leq \frac{1}{2\gamma}\sum_{t=1}^T F_t(z_t)^\top A_t^{-1}F_t(z_t)+\frac{\gamma}{2}(z_{1}-z')^\top A_1 (z_{1}-z')\\
    &\quad\quad+\frac{\gamma}{2}\sum_{t=2}^T(z_t-z')^\top(A_t-A_{t-1}) (z_t-z')\\\nonumber
    &\leq \frac{1}{2\gamma}\sum_{t=1}^T F_t(z_t)^\top A_t^{-1}F_t(z_t)+\frac{\gamma}{2}\sum_{t=1}^T(z_t-z')^\top(z_t-z')\\\nonumber
    &\quad\quad+\frac{\gamma}{2}(z_a-z')^\top A_1 (z_a-z')
 \end{align*}
Since we know $f$ is strongly convex-strongly concave, from Theorem \ref{thm:strstr} we have,
    $$g'_t(x_t,y_t) = f_t(x_t,y')-f_t(x',y_t) \leq \ang{F_t(z_t),z_t-z'}-\frac{\lambda}{2} \|z_t-z'\|^2$$
which along with the fact that $A_t^{-1} = \frac{1}{t}I$ gives,
\begin{align}\label{final2}
    \sum_{t=1}^T f_t(x_t,y')-f_t(x',y_t) \leq \frac{1}{2\lambda}\sum_{t=1}^T \frac{1}{t}\|F_t(z_t)\|^2
 \end{align}
 since $\sum_{t=1}^T \frac{1}{t} = \log(T)+1$ and $\|F(z)\| \leq L_0~\forall~z\in \mathcal{Z}$ we have that, 
 \begin{align}\label{final2}
    \textit{SDual-Gap}_T = \sum_{t=1}^T f_t(x_t,y')-f_t(x',y_t) \leq \frac{L_0^2}{2\lambda}(\log T+1) \leq \frac{L_0^2}{\lambda}\log T 
 \end{align}
\end{proof}

Before we present the regret bounds for the OMMNS algorithm in the \ec~setting, we present proof that the function class in Example \ref{ex:log} is \ec.

\logexample*
\begin{proof}
 
a) Clearly $h(x,y) = e^{f(x,y)} = \sum_i \frac{a_ix_i}{y_i}$ is linear in $x$ and thus concave in $x$. Thus $g_{y_c}(x)=f(x,y_c)$ is 1-exponentially concave $\forall y_c = (y_1,y_2,...,y_n) \in \cy$.

b) For $h(x,y) = e^{-f(x,y)} = \frac{1}{\sum_i \frac{a_ix_i}{y_i}}$. Let $g(y) = \sum_i \frac{a_ix_i}{y_i}$.

$$\nabla_y h(x,y) = \frac{1}{g(y)^2} \cdot (\frac{a_1x_1}{y_1^2},\frac{a_2x_2}{y_2^2},...,\frac{a_nx_n}{y_n^2}) \defeq \frac{u}{g(y)^2} .$$

The Hessian matrix is then,

$$\nabla^2_y h(x,y) = \frac{2}{g(y)^3}(uu^\top)-\frac{2}{g(y)^2}diag(d_1,d_2,...,d_n)$$

where $d_i = \frac{a_ix_i}{y_i^3}$. Evaluating $v^\top \nabla^2h(y)v$ we obtain,

\[
v^\top \nabla^2 h(y) v = \frac{2}{g(y)^3}
\left[
\left(\sum_{i=1}^n \frac{a_i x_i}{y_i^2} v_i\right)^2-g(y)
\sum_{i=1}^n \frac{a_i x_i}{y_i^3} v_i^2
\right].
\]

Consider the fact that $\left(\sum_{i=1}^n \frac{a_i x_i}{y_i^2} v_i\right)^2$ can be written as $\left(\sum_{i=1}^n \sqrt{\frac{a_i x_i}{y_i}} \frac{v_i}{y_i}\sqrt{\frac{a_ix_i}{y_i}}\right)^2$. We then have from Cauchy-Schwarz,

$$\left(\sum_{i=1}^n \sqrt{\frac{a_i x_i}{y_i}} (\frac{v_i}{y_i}\sqrt{\frac{a_ix_i}{y_i}})\right)^2 \leq (\sum_i^n \frac{a_ix_i}{y_i})(\sum_{i=1}^n \frac{a_ix_iv_i^2}{y_i^3}).$$

Thus $v^\top \nabla^2 h(y) v \leq 0$ and the function $e^{-f(x,y)}$ is concave in $y$ and $g_{x_c}(y) = -f(x_c,y)$ is 1-exponentially concave $\forall x_c =(x_a,x_b)\in \cx$. Thus, overall we have that $f(x,y)$ is \ec.

\end{proof}

\staticonsTheorem*
\begin{proof}

From Lemma \ref{lem:lra} we have
\begin{align*}
    \sum_{t=1}^TF_t(z_t)^\top (z_t-z') &\leq \frac{1}{2\gamma}\sum_{t=1}^T F_t(z_t)^\top A_t^{-1}F_t(z_t)+\frac{\gamma}{2}(z_{1}-z')^\top A_1 (z_{1}-z')\\
    &\quad\quad+\frac{\gamma}{2}\sum_{t=2}^T(z_t-z')^\top(A_t-A_{t-1}) (z_t-z')\\\nonumber
    &\leq \frac{1}{2\gamma}\sum_{t=1}^T F_t(z_t)^\top A_t^{-1}F_t(z_t)+\frac{\gamma}{2}\sum_{t=1}^T(z_t-z')^\top M(z_t)(z_t-z')\\\nonumber
    &\quad\quad+\frac{\gamma}{2}(z_a-z')^\top (A_1) (z_a-z')
 \end{align*}
    In the last inequality we use the fact that $A_t-A_{t-1} = M(z_t)$, and the fact that the matrix $A_T$ is PSD and hence the last term before the inequality is negative. Since the functions $f_t$ are \ec, $z_a = z'$ and $z_b = z_t$ in Lemma \ref{lem:ecec2} we have that
    $$g'(x_t,y_t) = f(x_t,y')-f(x',y_t) \leq \ang{F_t(z_t),z_t-z'}-\frac{\gamma}{2} (z_t-z')^\top M_t(z_t) (z_t-z'), $$
    where $M_t(z_t) = F(z_t)F(z_t)^\top$.
    Then we have
    $$\sum_{t=1}^T f_t(x_t,y')-f(x',y_t) \leq \frac{1}{2\gamma} \sum_{t=1}^T F_t(z_t)^\top A_t^{-1}F_t(z_t)+\frac{\gamma}{2}(z_a-z')^\top (A_1-F_1(z_a)F_1(z_a)^\top) (z_a-z')$$
Using the algorithm parameters $A_1-M(z_t) = \epsilon I_d$, $\epsilon = \frac{1}{\gamma^2D^2}$ and the diameter $\|z_a-z'\|^2 \leq D^2$ we have,
\begin{align}\label{eqn:omd}
  \text{SDual-Gap}_T  &\leq \sum_{t=1}^T g_t(z_t)\leq \sum_{t=1}^TR_t \leq \frac{1}{2\gamma} \sum_{t=1}^T F_t(z_t)A_t^{-1}F_t(z_t)+\frac{\gamma}{2}D^2\epsilon\\
    &\leq \frac{1}{2\gamma} \sum_{t=1}^T F_t(z_t) A_t^{-1} F_t(z_t)+\frac{1}{2\gamma}\nonumber
\end{align}
Since $\gamma = \frac{1}{2}\min\{\frac{1}{L_0D},\alpha\}$, we have $\frac{1}{\gamma} \leq 2(\frac{1}{\alpha}+L_0D)$.

Finally we have, 
$$ \sum_{t=1}^T F_t(z_t)^\top A_t^{-1} F_t(z_t) = \sum_{t=1}^T A_t^{-1} F_t(z_t) F_t(z_t)^\top = \sum_{t=1}^T (A_t^{-1})(A_t-A_{t-1}) \leq \sum_{t=1}^T 
\log \frac{|A_t|}{|A_{t-1}|} = \log \frac{|A_T|}{|A_{0}|} $$
Where the first equality follow from the fact that $a^\top b = Tr(ab^\top)$ and the last inequality follows from Lemma 4.7 in \cite{hazan2016introduction} and from the fact that $A_t,A_{t-1}$ are PSD. Since $A_T = \sum_{t=1}^T F_t(z_t)F_t(z_t)^\top + \varepsilon I_d$ and $\|F_t(z_t)\| \leq L_0$, the largest eigenvalue of $A_T$ is at most $TL_0^2 + \varepsilon$. Hence the determinant of $A_T$ can be bounded by $|A_T| \leq (TL_0^2 + \varepsilon)^d$ (since $A_T \in \mathbb{R}^{d\times d}$). Thus since, $\varepsilon = 1$ and $\gamma = \frac{1}{2}\min\{\frac{1}{L_0D}, \alpha\}$, for $T > 4$,
\begin{align*}
\sum_{t=1}^T F_t(z_t)^\top A_T^{-1}F_t(z_t) 
&\leq d\log(TL_0^2\gamma^2D^2 + 1) \\
&\leq d\log T.
\end{align*}

Plugging into Eq.~\eqref{eqn:omd} we obtain,
\begin{equation*}
\text{SDual-Gap}_T \leq (\frac{1}{\alpha} + L_0D)(d\log T + 1),
\end{equation*}
which implies the theorem for $d \geq 1,T\geq 3$.
\end{proof}

\subsubsection{\dyne~example}\label{app:dyne}

In this sub-section we compute the time average of the strategies for a low \dyne algorithm in Example \ref{ex:dyne} and show that the time average maintains a $\Theta(1)$ distance from the cumulative saddle-point, even though \dyne $\leq 1$. We first calculate the \dyne,

\begin{align}
    |\sum_{t=1}^T f_t(x_t,y_t)-\sum_{t=1}^T \min_x \max_y f_t(x,y)| &= |\sum_{t=1}^T f_t(x_t,y_t)|\nonumber\\
              &= |\sum_{t=1}^T (\frac{(-1)^t+1}{2}(2\mathbf{1}_{\{x_t < 0\}}-1))^2\nonumber\\
              &\quad\quad-(\frac{(-1)^t-1}{2}(2\mathbf{1}_{\{x_t < 0\}}-1))^2|\nonumber\\
              &=|\sum_{t=1}^T (-1)^t|\leq 1
\end{align}

We now calculate the distance of the average iterates from the cumulative saddle-point.

\begin{align}
    \left(\frac{\sum_t (x_t^* - x_t)}{T}, \frac{\sum_t (y_t^* - y_t)}{T}\right) &= (\sum_{t=1}^T (\frac{(-1)^t+1}{2}(2\mathbf{1}_{\{x_t < 0\}}-1)),-\sum_{t=1}^T (\frac{(-1)^t-1}{2}(2\mathbf{1}_{\{x_t < 0\}}-1)))\\
&= \left(\frac{\sum_{t \in \text{even}[1,T]} \big(2\mathbf{1}_{\{x_t < 0\}} - 1\big)}{T},
\frac{\sum_{t \in \text{odd}[1,T]} \big(2\mathbf{1}_{\{y_t < 0\}} - 1\big)}{T}\right)
\end{align}

The distance, $\left(|\frac{\sum_t (x_t^* - x_t)}{T}|, |\frac{\sum_t (y_t^* - y_t)}{T}|\right)$ can thus be $\Theta(1)$ if the actions are chosen such that $x_t,y_t<0~\forall~t$ or $x_t,y_t>0~\forall~t$ even when \dyne$\leq1$.
\subsection{Dynamic Regret Proofs}\label{app:dynamic}

We now discuss the theorems detailing the \dsp~of OGDA and \ommns. We will use OGDA and \ommns~as base learners in min-max Follow the Leader History (MMFLH), a modified version of IFLH~\citep{zhang2018dynamic} that is adapted to the min-max setting, which for clarity we present in full in  Algorithm~\ref{alg:iflh}. MMFLH uses the static gap function $g'(x,y)=f(x,y')-f(x',y)$ instead of $f(x)$ in the IFLH framework. For this setting we assume that the cumulative saddle-point is provided to the meta-framework in advance. In order to bound the \dsp, we first bound the \sasp~and then use the relation between \sasp~and \dsp~to bound it.  The following theorems, Theorem \ref{finalm1} and Theorem \ref{finalm2} begin with first extending the static regret results on \sdg~for the base learners to a static-regret like quantity over any time interval $[r,s]$ when the base learners are used in MMFLH.

\begin{algorithm}[H]
\caption{Min-Max Following the Leading History (MMFLH)}
\label{alg:iflh}
\DontPrintSemicolon

\KwIn{An integer $K$, Cumulative saddle-point $(x',y')$}
Initialize $S_0 = \emptyset$\;

\For{$t = 1,\ldots,T$}{
    Set $Z_t = 0$\;

    \For{$E^j \in S_{t-1}$}{
        \If{$e^j \le t$}{
            Update $S_{t-1} \leftarrow S_{t-1} \setminus \{E^j\}$\;
        }
        \Else{
            Set $Z_t = Z_t + \tilde{p}_t^j$\;
        }
    }

    \For{$E^j \in S_{t-1}$}{
        Set $p_t^j = \frac{\tilde{p}_t^j}{Z_t}\!\left(1-\frac{1}{t}\right)$\;
    }

    Set $S_t = S_{t-1} \cup \{E^t\}$\;
    Compute the ending time $e^t = E_K(t)$ according to Definition~3 in \citep{zhang2018dynamic} and set $p_t^t = \frac{1}{t}$\;

    Submit the solution
    \[
        (\mathbf{x}_t,\mathbf{y}_t) = \sum_{E^j \in S_t} p_t^j (\mathbf{x}_t^j,\mathbf{y}_t^j)
    \]
    and suffer loss $f_t(\mathbf{x}_t,y')-f(x',\mathbf{y}_t)$\;

    Set $Z_{t+1} = 0$\;
    \For{$E^j \in S_t$}{
        Compute $p_{t+1}^j = \tilde{p}_t^j \exp\!\big(-\alpha_t (f_t(\mathbf{x}_t^j,y')-f_t(x',\mathbf{y}_t^j))\big)$\;
        Set $Z_{t+1} = Z_{t+1} + p_{t+1}^j$\;
        Pass the function $f_t(\cdot)$ to $E^j$\;
    }

    \For{$E^j \in S_t$}{
        Set $\tilde{p}_{t+1}^j = \frac{p_{t+1}^j}{Z_{t+1}}$\;
    }
}
\end{algorithm}

We now present a min-max extension of  Theorem 2 (\cite{zhang2018dynamic}), presenting the \sasp~for MMFLH when the OMMNS algorithm is used in it as a base learner. The proof uses the \sdg~obtained by OMMNS (Theorem \ref{thm:staticons}) and the min-max exponential concavity of the functions $\{f_t\}_{t=1}^T$.

\begin{theorem}\label{finalm1}
     If $f_t$ are \ec~then upon using \ommns~as the base learner in (MMFLH) we have for any $[r,s]\subseteq T$ and $m\leq 
    \log _K(s-r+1)+1$, 
    $$\sum_{t=r}^sg'_t(z_t)-\min_{z\in \cx\times \cy} \sum_{t=r}^s g'_t(z) \leq (\frac{(2d+1)m+2}{\alpha}+2dmL_0D)\log  T$$
    Also,
    $$\text{SASP-Regret}_T(\tau) \leq (\frac{(2d+1)\bar m+2}{\alpha}+2d\bar m L_0D)\log  T = O(\frac{d\log ^2 T}{\log  K})$$
    where $\bar{m} = \log _K\tau + 1$. Where $z_t=(x_t,y_t)$ is the action pair of the players at time $t$ and $z'=\argmin_{z\in \cx\times \cy} \sum_{t=r}^s g'_t(z)$ is the saddle point of the sum of functions $\sum_{t=1}^T f_t(x,y)$.
\end{theorem}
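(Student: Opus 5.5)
The plan follows the proof of Theorem 2 in \cite{zhang2018dynamic}, applied to the surrogate losses $g'_t(z)=f_t(x,y')-f_t(x',y)$ in place of $f_t$. The key structural fact is Theorem~\ref{thm:gprime-combined} (Part 1): each $g'_t$ is $\frac{\gamma}{4}$-exponentially concave in $z=(x,y)$. Running \ommns{} on $f_t$ from any start time $i$ is therefore exactly an online Newton step on the exp-concave losses $g'_t$. The regret on any interval decomposes as
\[
\sum_{t=r}^s g'_t(z_t)-\min_z\sum_{t=r}^s g'_t(z)=\Big(\sum_{t=r}^s g'_t(z_t)-\sum_{t=r}^s g'_t(z^j_t)\Big)+\Big(\sum_{t=r}^s g'_t(z^j_t)-\min_z\sum_{t=r}^s g'_t(z)\Big),
\]
where $E^j$ is an expert alive on the interval. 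The first term is the meta-regret and the second is the base-learner regret.

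\textbf{Base-learner term.} By Theorem~\ref{thm:staticons}, applied on the window $[i,e^i)$ with the comparator being the minimizer of $\sum_{t} g'_t$ over that window, expert $E^i$ has regret at most $2d(\frac1\alpha+L_0D)\log T$. The proof of Lemma~\ref{lem:lra} never used that $z'$ was the global cumulative saddle point. Combined with the lower bound of Lemma~\ref{lem:ecec2}, its telescoping argument holds for every fixed comparator in $\cx\times\cy$. It therefore bounds $\sum_t g'_t(z^i_t)-g'_t(z)$ for any $z$ once $g'$ is taken with respect to the given $(x',y')$.

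\textbf{Meta term.} MMFLH uses exponential weights with $\alpha_t$ equal to the exp-concavity parameter of $g'_t$. Because each $g'_t$ is exp-concave, the mixture $z_t=\sum_j p^j_t z^j_t$ satisfies $e^{-\alpha g'_t(z_t)}\ge\sum_j p^j_t e^{-\alpha g'_t(z^j_t)}$. The standard FLH potential argument then gives, for an expert $E^i$ alive on $[i,s']$, meta-regret at most $\frac{1}{\alpha}(\log i+2\log(s'-i+1))\le\frac{2}{\alpha}\log T$ plus lower-order terms, accounting for the prior weight $1/i$ and the renormalisation factor $(1-1/t)$ (Lemma 1 of \cite{zhang2018dynamic}).

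\textbf{Covering and assembly.} By the geometric-covering property of the ending times $E_K(\cdot)$ (Lemma 2 of \cite{zhang2018dynamic}), any $[r,s]$ is covered by $m\le\log_K(s-r+1)+1$ consecutive intervals, each fully inside the lifetime of some expert. On each of these $m$ intervals I add the meta bound and the base bound. The comparator on each piece can be taken as the global minimizer over $[r,s]$, since the per-piece bounds hold for any fixed comparator. Summing gives $m\big(2d(\frac1\alpha+L_0D)+\frac1\alpha\big)\log T$ plus an extra $\frac{2}{\alpha}\log T$, which matches $(\frac{(2d+1)m+2}{\alpha}+2dmL_0D)\log T$ after bookkeeping. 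The \sasp{} bound follows by maximizing over windows of length $\tau$ with $\bar m=\log_K\tau+1\le\log T/\log K+1$.

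\textbf{Main obstacle.} The delicate step is the constants: matching the exp-concavity parameter $\gamma/4$ from Theorem~\ref{thm:gprime-combined} to the $1/\alpha$ appearing in the statement. I would first try setting $\alpha_t$ to the exp-concavity parameter of $g'_t$ and absorbing the discrepancy into the constants, since $\frac1\gamma\le2(\frac1\alpha+L_0D)$. If that does not close cleanly, I would state the result with $\gamma$ in place of $\alpha$.
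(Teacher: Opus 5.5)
Your architecture matches the paper's proof. You cover $[r,s]$ by the $m$ segments $[t_j,e^{t_j}-1]$ and split each into two terms. The meta term uses exp-concavity of $g'_t$ plus the FLH potential and gives $\frac{1}{\alpha}\bigl(\log t_j+2\sum_{t=t_j+1}^{e^{t_j}-1}\frac{1}{t}\bigr)$. The base-learner term is $2d(\frac{1}{\alpha}+L_0D)\log T$. Then you sum. The meta term is fine. The base-learner term is not, and your justification for it is where the argument breaks.

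OMMNS run on $f_t$ is not ONS on $g'_t$. It steps along $F_t(z_t)=(\nabla_x f_t(x_t,y_t),-\nabla_y f_t(x_t,y_t))$, whereas $\nabla g'_t(z_t)=(\nabla_x f_t(x_t,y'),-\nabla_y f_t(x',y_t))$. So Lemma~\ref{lem:lra} with Lemma~\ref{lem:ecec2} at a comparator $z=(x,y)$ yields $\sum_t[f_t(x^i_t,y)-f_t(x,y^i_t)]\le 2d(\frac{1}{\alpha}+L_0D)\log T$. That is the gap function anchored at $z$ itself. What you need is
\[
\sum_t\bigl[g'_t(z^i_t)-g'_t(z)\bigr]=\sum_t\bigl[f_t(x^i_t,y')-f_t(x,y')\bigr]+\sum_t\bigl[f_t(x',y)-f_t(x',y^i_t)\bigr].
\]
The two coincide only when $f_t$ is separable, or at $z=z'$. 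The case $z=z'$ is all Theorem~\ref{thm:staticons} provides, and $z'$ is generally not the minimizer of $\sum_{t=r}^s g'_t$ on a subinterval. The paper's own proof makes the same unsupported appeal to Theorem~\ref{thm:staticons} ``for any $z$''. You inherited this hole rather than created it, but it is genuine.

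Here is a concrete failure. Take $f_t(x,y)=\frac{1}{2}(x-a_t)^2-\frac{1}{2}y^2+xy$ on $[-1,1]^2$, with $a_t=a\in(0,1]$ for $t\le T/2$ and $a_t=-a$ afterwards. This is strongly convex-strongly concave with bounded operator, hence min-max EC. Then $z'=(0,0)$ and $g'_t(x,y)=\frac{1}{2}(x-a_t)^2-\frac{1}{2}a_t^2+\frac{1}{2}y^2$.

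Take $[r,s]$ inside the first half. Each expert driven by $F_t$ obeys its own guarantee anchored at the window saddle point $\tilde z=(\frac{a}{2},\frac{a}{2})$. There $f_t(x,\frac{a}{2})-f_t(\frac{a}{2},y)=\frac{1}{2}\|z-\tilde z\|^2$, so its iterates satisfy $\sum_t\|z^i_t-\tilde z\|^2=O(d\log T)$. But $\argmin_z\sum_{t=r}^s g'_t(z)=(a,0)$ and $g'_t(\tilde z)-g'_t(a,0)=\frac{a^2}{4}$. So on a segment of length $\tau$, the expert's regret against that minimizer is at least $\frac{a^2}{4}\tau-O(\sqrt{\tau d\log T})$, which is linear in $\tau$.

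There are two ways to repair the argument. Either way, the statement as written, for OMMNS on general min-max EC $f_t$, needs modifying.
\begin{itemize}
\item Let each expert run ONS on the surrogate itself, with gradients $\nabla g'_t$ and increments $\nabla g'_t(\nabla g'_t)^\top$. These are computable because MMFLH is given $z'$ and receives $f_t$. Theorem~\ref{thm:gprime-combined} makes $g'_t$ $\frac{\gamma}{4}$-exp-concave, and $\|\nabla g'_t\|\le\sqrt{2}L_0$. The standard ONS bound then gives $O(d(\frac{1}{\gamma}+L_0D)\log T)$ on every window against every comparator, and your assembly goes through with changed constants.
\item Restrict to separable $f_t=h_{1,t}(x)-h_{2,t}(y)$. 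There $F_t=\nabla g'_t$, so OMMNS literally is ONS on $g'_t$.
\end{itemize}

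A minor bookkeeping point: a per-segment meta bound of $\frac{2}{\alpha}\log T$ would sum to $\frac{2m}{\alpha}\log T$. The $\frac{m+2}{\alpha}$ count requires keeping only $\frac{1}{\alpha}\log t_j$ per segment and telescoping the harmonic sums into $\frac{2}{\alpha}\sum_{t=r+1}^s\frac{1}{t}$. Your final tally implicitly does this.
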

\begin{proof}

Our proof will follow the structure of that in \cite{zhang2018dynamic}. We first note that under the IFLH, since the function $g_t'(z_t) = f_t(x_t,y')-f_t(x',y_t) $ is $\alpha=\frac{\gamma}{4}$ exponentially concave, where $\gamma = \frac{1}{2}\min\{\frac{1}{L_0D},\alpha'\}$ when $f_t$ is \ec~with parameter $\alpha'$ from Lemma \ref{lem:ecec2}, we have that Claim 3.2 in \cite{hazan2009efficient} holds. Also, since Claim 3.3 also holds due to the MMFLH framework with $f_t(.)$ (in IFLH) substituted with $f_t(\cdot,y')-f(x',\cdot)$, Claim 3.1 holds following the proof given by \cite{hazan2009efficient}. Also, Lemma 1 of \cite{zhang2018dynamic} holds by the framework of MMFLH. 

From the second part of Lemma 1, we know that there exist \( m \) segments 
\( I_j = [t_j, e^{t_j} - 1] \), \( j \in [m] \) with \( m \leq \lceil \log _K(s - r + 1) \rceil + 1 \), such that 
\( t_1 = r \), \( e^{t_j} = t_{j+1} \), \( j \in [m-1] \), and \( e^{t_m} > s \).

We have from Claim 3.1 that the expert $E^{t_j}$ is alive during the period $[t_j,e^{t_j}-1]$.

$$\sum_{t_j}^{e^{t_j}-1}g'_t(z_t)-g'_t(z_t^{t_j}) \leq \frac{1}{\alpha}(\log  t_j+2\sum_{t_j+1}^{e^{t_j}-1}\frac{1}{t}),\forall~j\in [m-1].$$

where $z_{t_j}^{t_j},...,z_{e^{t_j}-1}^{t_j}$ is the sequence of action pairs generated by expert $E^{t_j}$. Similarly for the last segment, we have,

$$\sum_{t=t_m}^s g'_t(z_t)-g'_t(z_t^{t_m}) \leq \frac{1}{\alpha}(\log  t_m+2\sum_{t=t_m+1}^s\frac{1}{t}).$$

By adding things together, we have:

\begin{align}\label{eqn:cp1}
\sum_{j=1}^{m-1} \left( \sum_{t=t_j}^{e^{t_j - 1}} \big(g'_t(z_t) - g'_t(z_{t_j}) \big) \right) 
+ \sum_{t=t_m}^s \big(g'_t(z_t) - g'_t(z_{t_m}) \big) & \nonumber \\
\leq \frac{1}{\alpha} \sum_{j=1}^m \left( \log  t_j + \frac{2}{\alpha} \sum_{t=r+1}^s \frac{1}{t} \right) 
& \leq \frac{m+2}{\alpha} \log  T.
\end{align}

for any \( z \in \cx\times\cy \). From Theorem \ref{thm:staticons},
\begin{align}\label{eqn:par1}
    \sum_{t=t_j}^{e^{t_j - 1}} {g'_t(z_t^{t_j}) - g'_t(z)} \leq 2d ({L_0D+ \frac{1}{\alpha}})\log  T, \quad \forall j \in [m-1]
\end{align}
and
\begin{align}\label{eqn:par2}
\sum_{t=t_m}^s {g'_t(z_t) - g'_t(z)} \leq 2d(\frac{1}{\alpha} + L_0D)\log T.    
\end{align}

Combining equations \eqref{eqn:cp1}, \eqref{eqn:par1} and \eqref{eqn:par2} we have:
\[
\sum_{t=r}^s g'_t(z_t) - g'_t(z) \leq \left( \frac{(2d + 1)m+2}{\alpha} + 2dmL_0D \right) \log  T
\]
for any \( z \in \cx\times\cy \).

Recall $m\leq \lceil \log _K(s - r + 1) \rceil + 1$. This implies, $$\text{SASP-Reg}_T(\tau) \leq ( \frac{(2d + 1)\bar m+2}{\alpha} + 2d\bar mL_0D)\log T,$$ where $\bar m = \lceil \log _K(\tau) \rceil +1$.
\end{proof}

We now present a min-max extension of  Theorem 1 (\cite{zhang2018dynamic}), presenting the \sasp~for MMFLH when the OGDA algorithm is used in it as a base learner. 

\begin{theorem}\label{finalm2}
    If $f_t$ are $\lambda$ strongly convex-strongly concave then upon using OGDA~as the base learner in (MMFLH) we have for any $[r,s]\subseteq T$ and $m\leq 
    \log _K(s-r+1)+1$, 
    $$\sum_{t=r}^sg'_t(z_t)-\min_{z\in \cz} \sum_{t=r}^s g'_t(z) \leq \frac{L_0^2}{2\lambda}(m+(3m+4)\log  T)$$
    Also,
    $$\text{SASP-Regret}_T(\tau) \leq \frac{L_0^2}{2\lambda}(\bar m +(3\bar m +4)\log  T) = O(\frac{\log ^2 T}{\log  K})$$
    where $\bar{m} = \log _K\tau + 1$. Where $z_t=(x_t,y_t)$ is the action pair of the players at time $t$ and $z'=\argmin_{z\in \cx\times \cy} \sum_{t=r}^s g'_t(z)$ is the saddle point of the sum of functions $\sum_{t=1}^T f_t(x,y)$.
\end{theorem}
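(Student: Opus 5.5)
The plan is to mirror the proof of Theorem~\ref{finalm1}, which follows \cite{zhang2018dynamic}, replacing the exp-concave meta-analysis with its strongly convex counterpart (Theorem~1 of \cite{zhang2018dynamic}). The first step is to invoke Theorem~\ref{thm:gprime-combined}, Part~2: each static gap function $g'_t(z)=f_t(x,y')-f_t(x',y)$ is $\lambda$-strongly convex in $z=(x,y)$. Its gradient at $z$ equals $(\nabla_x f_t(x,y'),-\nabla_y f_t(x',y))$, so its norm is at most $L_0$ by Assumption~\ref{asmp:1}. A $\lambda$-strongly convex function with gradients bounded by $L_0$ is $\frac{\lambda}{L_0^2}$-exponentially concave; this is the same computation as Part~2 of Theorem~\ref{thm:smt2combined}. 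Hence the meta-level of MMFLH, run with learning rate $\alpha=\lambda/L_0^2$, enjoys Claims~3.1--3.3 of \cite{hazan2009efficient}. Lemma~1 of \cite{zhang2018dynamic} then holds verbatim, since it concerns only the ending-time schedule $E_K$.

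Next, fix $[r,s]$. By Lemma~1 of \cite{zhang2018dynamic}, I decompose it into $m\le\lceil\log_K(s-r+1)\rceil+1$ consecutive segments $I_j=[t_j,e^{t_j}-1]$, with the last segment truncated at $s$, on each of which expert $E^{t_j}$ is alive. For each segment I split the regret against an arbitrary $z\in\cz$ into two parts:
\[
\sum_{t\in I_j} \big(g'_t(z_t)-g'_t(z)\big)=\sum_{t\in I_j}\big(g'_t(z_t)-g'_t(z^{t_j}_t)\big)+\sum_{t\in I_j}\big(g'_t(z^{t_j}_t)-g'_t(z)\big).
\]
The first (meta) part is bounded by Claim~3.1 as $\frac{1}{\alpha}\big(\log t_j+2\sum_{t\in I_j,\,t>t_j}\frac1t\big)$. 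Summing over $j$ gives at most $\frac{L_0^2}{\lambda}\big(m\log T+2\log T\big)$.

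For the second (expert) part, I apply Theorem~\ref{thm:OGDA} to the OGDA instance started at $t_j$. Its proof, through Lemma~\ref{lem:lra} with $A_t=tI$ reindexed from the start of the segment, gives $\frac{L_0^2}{2\lambda}\big(1+\log |I_j|\big)\le\frac{L_0^2}{2\lambda}(1+\log T)$. Two observations justify this step. First, the proof of Theorem~\ref{thm:OGDA} only needs the strong-convexity inequality of Theorem~\ref{thm:strstr} together with first-order optimality, so it holds against any comparator $z$ and not just $z'$. Second, that proof goes through the linearization $\ang{F_t(z_t),z_t-z}-\frac{\lambda}{2}\|z_t-z\|^2\ge g'_t(z_t)-g'_t(z)$, which is exactly Theorem~\ref{thm:strstr} applied with $z_a=z$. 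Summing the expert part over the $m$ segments gives $\frac{L_0^2}{2\lambda}(m+m\log T)$.

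Adding the two parts yields $\frac{L_0^2}{2\lambda}\big(m+m\log T+2m\log T+4\log T\big)=\frac{L_0^2}{2\lambda}(m+(3m+4)\log T)$, which matches the claim. The second statement then follows by taking the maximum over intervals of length $\tau$, substituting $\bar m=\log_K\tau+1\le\log_K T+1$, and noting that this gives $O(\log^2T/\log K)$.

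I expect the main obstacle to be the constant bookkeeping in the meta part. Specifically, one must verify that Claim~3.1 of \cite{hazan2009efficient} yields $\frac1\alpha(\log t_j+2\sum 1/t)$ with $\alpha=\lambda/L_0^2$. One must also check that the $g'_t$ (rather than the $f_t$) satisfy the exp-concavity range condition needed there, i.e.\ $\alpha|\ang{\nabla g'_t,z-w}|$ is small. If that condition fails, I would fall back to the $\frac{\lambda}{L_0^2}$ exp-concavity derived directly from strong convexity, as in \cite{zhang2018dynamic}, which avoids any diameter assumption.
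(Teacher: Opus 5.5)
Your skeleton and constant bookkeeping are exactly the paper's. Exp-concavity of $g'_t$ with parameter $\lambda/L_0^2$ gives the meta part $\frac{L_0^2}{\lambda}(m+2)\log T$, and $m$ copies of the OGDA bound $\frac{L_0^2}{2\lambda}(1+\log T)$ give the expert part.

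\textbf{The gap.} The step you add to justify the expert part against an arbitrary comparator $z$ is false. The paper's \eqref{eqn:ran1} makes the same leap, citing Theorem~\ref{thm:OGDA}, which only concerns $z'$.
\begin{itemize}
\item Theorem~\ref{thm:strstr} with $z_a=z$, $z_b=z_t$ bounds $\langle F_t(z_t),z_t-z\rangle-\frac{\lambda}{2}\|z_t-z\|^2$ from below by $f_t(x_t,y)-f_t(x,y_t)$. It does not bound it by $g'_t(z_t)-g'_t(z)=[f_t(x_t,y')-f_t(x,y')]-[f_t(x',y_t)-f_t(x',y)]$.
\item These two quantities agree when $z=z'$ or when $f_t$ is separable, but not in general. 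For $f_t(x,y)=\frac{\lambda}{2}x^2-\frac{\lambda}{2}y^2+cxy$, Theorem~\ref{thm:strstr} holds with equality. The two quantities differ by $c[(x_t-x')(y-y')-(x-x')(y_t-y')]$, which takes both signs, so your inequality fails.
\item The root cause is that OGDA steps along $F_t(z_t)=(\nabla_x f_t(x_t,y_t),-\nabla_y f_t(x_t,y_t))$, not along $\nabla g'_t(z_t)=(\nabla_x f_t(x_t,y'),-\nabla_y f_t(x',y_t))$. So Lemma~\ref{lem:lra} only yields the gap-type bound $\sum_t[f_t(x_t,y_w)-f_t(x_w,y_t)]\le\frac{L_0^2}{2\lambda}(1+\log T)$ for each reference $w$. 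That is regret on $g'_t$ only when $w=z'$.
\end{itemize}

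\textbf{The bound genuinely fails against the interval minimizer.} Take a segment on which $f_t\equiv f$ has saddle point $\tilde z\neq z'$. This is possible because $z'$ depends on all $T$ functions.
\begin{itemize}
\item The OGDA expert started on that segment satisfies $\sum_t\|z_t-\tilde z\|^2=O(\log T)$.
\item But $\argmin_z g'(z)=(\argmin_x f(x,y'),\argmax_y f(x',y))$. For the coupled quadratic above, recentred at $\tilde z$, this point differs from $\tilde z$.
\item Since $g'$ is $\lambda$-strongly convex, the expert's $g'$-regret is therefore linear in the segment length.
\end{itemize}
Reading the comparator as the global $z'$, as the statement's last sentence suggests, rescues your step. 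But the result is then no longer an SA-regret bound usable in Theorem~\ref{thm:dynecsm-combined1}.

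\textbf{A repair consistent with your plan.} MMFLH is handed $(x',y')$ and passes $f_t$ to every expert. So let each expert run plain online gradient descent on $g'_t$, with step $\frac{1}{\lambda t}$ along $\nabla g'_t(z_t)$. By Theorem~\ref{thm:gprime-combined} each $g'_t$ is $\lambda$-strongly convex. The standard analysis then gives $\frac{G^2}{2\lambda}(1+\log T)$ against every $z$ on every segment.

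\textbf{A constant issue.} The two blocks of $\nabla g'_t$ are evaluated at different points, $(x,y')$ and $(x',y)$. So Assumption~\ref{asmp:1} only gives $G=\sqrt{2}L_0$, not $L_0$ as you claim. Then $g'_t$ is $\frac{\lambda}{2L_0^2}$-exp-concave, and your bookkeeping yields $\frac{L_0^2}{\lambda}(m+(3m+4)\log T)$. This is twice the stated constant, but still $O(\log^2T/\log K)$.

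Your worry about a range condition is unnecessary. The FLH mixture bound (Claim~3.1 of \cite{hazan2009efficient}) uses only concavity of $e^{-\alpha g'_t}$.
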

\begin{proof}

    From Lemma \ref{thm:smt2combined} we have that strongly convex-strongly concave functions are \ec~with $\gamma = \frac{\lambda}{L_0^2}$. Thus $g'$ is exponentially concave and following \ref{eqn:cp1} we have,

\begin{align}\label{eqn:ran0}
    \sum_{j=1}^{m-1} \left(\sum_{t=t_j}^{e^{t_j}-1}  g'_t(z_t) - g'_t(z_t^{t_j}) \right) + \sum_{t=t_m}^s \left( g'_t(z_t) - g'_t(z_t^{t_m}) \right)  \leq \frac{(m+2)L_0^2}{\lambda} \log  T. 
\end{align}

From Theorem \ref{thm:OGDA} for online gradient descent ascent we have, for any $z \in \cx\times\cy$,
\begin{align}\label{eqn:ran1}
    \sum_{t=t_j}^{e^{t_j} - 1} \left( g'_t(z_t^{t_j}) - g'_t(z) \right) \leq \frac{L_0^2}{2\lambda}(1 + \log  T), \quad \forall j \in [m-1].
\end{align}
and,
\begin{align}\label{eqn:ran2}
    \sum_{t=t_m}^{t} \left( g'_t(z_t^{t_m}) - g'_t(z) \right) \leq \frac{L_0^2}{2\lambda}(1 + \log  T). 
\end{align}
Finally, combining equations \eqref{eqn:ran0}, \eqref{eqn:ran1} and \eqref{eqn:ran2} we have,
\[
\sum_{t=r}^s\left( g'_t(z_t) - g'_t(z) \right) \leq \frac{L_0^2}{2\lambda} \left( m + (3m + 4) \log  T \right).
\]
for any $z\in \cx\times\cy$.

Recall $m\leq \lceil \log _K(s - r + 1) \rceil + 1$. This implies, $$\text{SASP-Reg}_T(\tau) \leq \frac{L_0^2}{2\lambda}(\bar m + (3\bar m + 4) \log  T ),$$ where $\bar m = \lceil \log _K(\tau) \rceil +1$.

\end{proof}

Corollaries 6 and 7 in \cite{zhang2018dynamic} use the relation between \sasp~and \dsp~to bound the latter. Since the \sasp~and \dsp~are simply the \sareg~and \dreg~with functions $f_t$ substituted with $g_t'$, we are also able to bound \dsp~using \sasp~in an analogous manner. The proof of the following theorem follows from corollaries 6 and 7 in \cite{zhang2018dynamic}, respectively, which in turn follow from Theorems \ref{finalm1} and \ref{finalm2} and Theorem 3 in \cite{zhang2018dynamic}.

\begin{theorem}
\label{thm:dynecsm-combined1}
\leavevmode
    In the setting where the cumulative saddle-point is provided to the MMFLH:
\begin{itemize}

    \item If $f_t$ are \ec, then upon using \ommns~ as the base learner in MMFLH, we have that the iterates $\{x_t,y_t\}_{t=1}^T$, generated by MMFLH satisfy
    $$
    \text{DSP-Reg}_T \leq O(d \max\{\log T,\sqrt{TV_T\log T}\})
    $$
    
    \item If $f_t$ are strongly convex-strongly concave, then upon using OGDA as the base learner in MMFLH, we have that the iterates $\{x_t,y_t\}_{t=1}^T$, generated by MMFLH satisfy
    $$
    \text{DSP-Reg}_T \leq O(\max\{\log T,\sqrt{TV_T\log T}\})
    $$
\end{itemize}
\end{theorem}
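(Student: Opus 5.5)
The plan is to reduce the claim to the standard ``strongly adaptive implies dynamic regret'' argument of \cite{zhang2018dynamic} (their Theorem 3 together with Corollaries 6 and 7), applied to the surrogate losses $g'_t(z)=f_t(x,y')-f_t(x',y)$. Since MMFLH is given the cumulative saddle point $(x',y')$, each round it observes exactly $g'_t$. Its iterates are therefore those of IFLH run on $\{g'_t\}_{t=1}^T$, and the three quantities involved (\dsp, \sasp\ and $V_T$) are exactly \dreg, \sareg\ and the functional variation for this sequence.

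\textbf{Step 1 (adaptive regret).} Invoke Theorem \ref{finalm1} in the \ec\ case, where $g'_t$ is $\frac{\gamma}{4}$-exponentially concave by Theorem \ref{thm:gprime-combined}. Invoke Theorem \ref{finalm2} in the strongly convex-strongly concave case, where $g'_t$ is $\lambda$-strongly convex. Together they give, for every interval length $\tau$,
\[
\text{SASP-Regret}_T(\tau)\le O\!\big(d(\log_K\tau+1)\log T\big)
\quad\text{resp.}\quad
O\!\big((\log_K\tau+1)\log T\big).
\]
Choosing $K$ to be a constant (e.g.\ $K=2$), both are at most $O(d\log^2T)$ and $O(\log^2 T)$. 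Following Corollaries 6 and 7 of \cite{zhang2018dynamic}, I would instead keep the finer bound $O(d\log T)$ per interval of bounded length by taking $K$ large enough that $\bar m$ is a constant. This yields a per-interval regret $R(\tau)=O(d\log T)$ (resp.\ $O(\log T)$).

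\textbf{Step 2 (interval decomposition).} Partition $[T]$ into $T/\tau$ consecutive intervals $I_i$ of length $\tau$. On each $I_i$, pick a fixed comparator $z_i\in\argmin_z\sum_{t\in I_i}g'_t(z)$ and write
\[
\sum_{t\in I_i}\big(g'_t(z_t)-\min_z g'_t(z)\big)
= \sum_{t\in I_i}\big(g'_t(z_t)-g'_t(z_i)\big)
+ \sum_{t\in I_i}\big(g'_t(z_i)-\min_z g'_t(z)\big).
\]
The first sum is at most $R(\tau)$. For the second, take $s\in I_i$ and $z_s^*\in\argmin_z g'_s(z)$. Then
\[
g'_t(z_i)-g'_t(z_t^*)\le g'_t(z_s^*)-g'_t(z_t^*)\le 2\tau V_{I_i},
\]
where $V_{I_i}$ is the local functional variation. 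This is Theorem 3 of \cite{zhang2018dynamic}, and it only uses that $\max_z|g'_t-g'_{t-1}|$ telescopes, so it transfers verbatim. Summing over intervals gives
\[
\text{DSP-Reg}_T\le \frac{T}{\tau}R(\tau)+2\tau V_T .
\]

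\textbf{Step 3 (balancing).} Set $\tau=\lceil\sqrt{T\log T/V_T}\,\rceil$, truncated to $[1,T]$. This gives $O(d\sqrt{TV_T\log T})$ in the first case and $O(\sqrt{TV_T\log T})$ in the second. When $V_T\le \log T/T$, take $\tau=T$ instead, which gives $O(d\log T)$ (resp.\ $O(\log T)$). Combining the two regimes yields the $\max\{\cdot,\cdot\}$ form.

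The main obstacle is Step 1. One must check that the meta-level claims of \cite{hazan2009efficient} (Claims 3.1–3.3) hold with the mixing rate $\alpha_t$ set to the exp-concavity parameter of $g'_t$, namely $\gamma/4$ or $\lambda/L_0^2$. One must also check that the constant in the adaptive regret does not pick up a hidden $\log_K$ factor that spoils the $\log T$ (rather than $\log^2T$) rate inside the square root. I would handle this exactly as Corollaries 6–7 do, absorbing the extra $\log$ into the choice of $K$, and accept a $\log^2T$ in the worst case if that fails.
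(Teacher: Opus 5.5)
Your proposal follows the paper's proof. The paper also plugs the bounds on $\text{SASP-Regret}_T(\tau)$ from Theorems \ref{finalm1} and \ref{finalm2}, with $K=\lceil T^{1/\gamma}\rceil$ for a constant $\gamma$ (so that $\bar m=O(1)$ and the per-interval regret is $O(d\log T)$, resp.\ $O(\log T)$), into Theorem 3 of \cite{zhang2018dynamic} with $T/\tau$ equal-length intervals, and then takes $\tau=\sqrt{T\log T/V_T}$ when $V_T\ge \log T/T$ and $\tau=T$ otherwise. One small slip in your Step 2: the inequality $g'_t(z_i)\le g'_t(z_s^*)$ holds only after summing over $t\in I_i$ (because $z_i$ minimizes the interval sum, not each $g'_t$), and the per-round bound is $g'_t(z_s^*)-g'_t(z_t^*)\le 2V_{I_i}$, which sums to $2\tau V_{I_i}$.
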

\begin{proof}
    Setting $K = \lceil T^\frac{1}{\gamma}\rceil$ in the bound on \sasp~derived in Theorems \ref{finalm1} and \ref{finalm2} gives us 
    $$\text{SASP-Reg}_T(\tau) \leq ( \frac{(2d + 1)(1+\gamma)+2}{\alpha} + 2d(1+\gamma)L_0D)\log(T),$$ 
    
    and,
    $$\text{SASP-Reg}_T(\tau)\leq \frac{L_0^2}{2\lambda}((1+\gamma) + (3\gamma + 7) \log  T )$$ 
    
    in the \ec~and strongly convex-strongly concave settings respectively.
    
    From Theorem 3 \cite{zhang2018dynamic} we have, 
    $$\text{DSP-Reg}_T \leq \min_{\mathcal{I}_1,...,\mathcal{I}_k}(\sum_{i=1}^k \text{SASP-Reg}_T(|\mathcal{I}|_i)+2|\mathcal{I}|_iV_T(i))$$
    
    where $V_T(i) = \sum_{t=s_i+1}^{q_i} \max_{(x,y)\in \cx\times\cy} |g_t'(x,y)-g_{t-1}'(x,y)|$ is the local functional variation.

    Setting $k = \frac{T}{\tau}$ in the upper bound of \dsp, we obtain
    \begin{align*}
        \text{DSP-Reg}_T
        &\le    
        \min_{1 \le \tau \le T}
        \sum_{i=1}^{k}
        \bigl(
        \mathrm{SASP\text{-}Regret}_T + 2\tau V_T(i)
        \bigr)
        \\
        &=
        \min_{1 \le \tau \le T}
        \left(
        \frac{\mathrm{SASP\text{-}Regret}_T(T,\tau)\, T}{\tau}
        + 2\tau \sum_{i=1}^{k} V_T(i)
        \right)
        \\
        &\le
        \min_{1 \le \tau \le T}
        \left(
        \frac{\mathrm{SASP\text{-}Regret}_T(T,\tau)\, T}{\tau}
        + 2\tau V_T
        \right).
    \end{align*} 

    This gives,
    $$\text{DSP-Reg}_T \leq \min_{1\leq \tau \leq T} \Big\{\pa{ \frac{(2d + 1)(1+\gamma)+2}{\alpha} + 2d(1+\gamma)L_0D}\frac{T\log T}{\tau}+2\tau V_T \Big\}$$

    and,

    \begin{align*}
        \text{DSP-Reg}_T &\leq \min_{1\leq \tau \leq T} \Big\{\frac{L_0^2}{2\lambda}\pa{(1+\gamma) + (3\gamma + 7) \log  T )}\frac{T}{\tau}+2\tau V_T\Big\}\\
        &\leq \min_{1\leq \tau \leq T} \Big\{\frac{L_0^2}{\lambda}\pa{\gamma + 5\gamma \log  T )}\frac{T}{\tau}+2\tau V_T\Big\}
    \end{align*}

    for the \ec~and strongly convex-strongly concave settings respectively.

    We now choose $\tau= \sqrt{\frac{T\log T}{V_T}}$ if $V_T\geq \log T/T$ else we choose $\tau=T$ and obtain,

    $$\text{DSP-Reg}_T \leq \max
\begin{cases}
\pa{ \frac{(2d + 1)(1+\gamma)+2}{\alpha} + 2d(1+\gamma)L_0D+2}\sqrt{TV_T\log T} & \text{if }  V_T\geq \frac{\log T}{T}, \\
\pa{ \frac{(2d + 1)(1+\gamma)+2}{\alpha} + 2d(1+\gamma)L_0D+2}\log T & \text{if } V_T\leq \frac{\log T}{T},
\end{cases}$$

and,
$$\text{DSP-Reg}_T \leq \max
\begin{cases}
 \frac{\gamma L_0^2}{\lambda}
\sqrt{\frac{T V_T}{\log T}}
+
\left(
\frac{5\gamma L_0^2}{\lambda} + 2
\right)
\sqrt{T V_T \log T} & \text{if }  V_T\geq \frac{\log T}{T}, \\
\frac{\gamma L_0^2}{\lambda}
\;+\;
\left(
\frac{5\gamma L_0^2}{\lambda} + 2
\right)\log T & \text{if } V_T\leq \frac{\log T}{T},
\end{cases}$$

for the \ec~and strongly convex-strongly concave settings respectively. This implies the statement of the theorem.
\end{proof}
\subsection{Relations between dynamic notions of regret}\label{app:rels}

\begin{theorem}
   For convex-concave functions $f_t$ we have that \dyne $\leq$ \dgap.
\end{theorem}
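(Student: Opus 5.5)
The plan is a per-round sandwich: for each $t$, I will show that the round-$t$ term of \dyne~is squeezed between the two pieces of the dynamic gap function $g^*_t(x_t,y_t)$. Both pieces are nonnegative, so the absolute value of the sum will be controlled by their total.

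First I fix $t$ and let $(\hat x_t,\hat y_t)$ be a saddle point of $f_t$. It exists because $f_t$ is convex-concave on $\cx\times\cy$, as recalled in the main text, and by the minimax theorem $\min_x\max_y f_t(x,y)=f_t(\hat x_t,\hat y_t)$. I then write the two quantities
\[
a_t \defeq f_t\big(x_t,y_t^*(x_t)\big)-f_t(x_t,y_t)\ \ge 0, \qquad b_t \defeq f_t(x_t,y_t)-f_t\big(x_t^*(y_t),y_t\big)\ \ge 0.
\]
They are nonnegative by the definitions of $y_t^*(x_t)$ as a maximizer and $x_t^*(y_t)$ as a minimizer, and $a_t+b_t=g^*_t(x_t,y_t)$.

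Next I compare the saddle value with these best responses:
\[
f_t(\hat x_t,\hat y_t)\;\le\; \max_y f_t(x_t,y)=f_t\big(x_t,y_t^*(x_t)\big), \qquad f_t(\hat x_t,\hat y_t)\;\ge\; \min_x f_t(x,y_t)=f_t\big(x_t^*(y_t),y_t\big).
\]
The first inequality holds because $\min_x\max_y\le\max_y f_t(x_t,\cdot)$. The second uses $\max_y\min_x=\min_x\max_y$. Subtracting $f_t(x_t,y_t)$ from each gives
\[
-a_t\;\le\; f_t(\hat x_t,\hat y_t)-f_t(x_t,y_t)\;\le\; b_t .
\]

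Finally I sum over $t$. This gives $-\sum_t a_t\le \sum_t f_t(\hat x_t,\hat y_t)-\sum_t f_t(x_t,y_t)\le \sum_t b_t$, so the absolute value is at most $\max\{\sum_t a_t,\sum_t b_t\}\le\sum_t(a_t+b_t)=\text{Dual-Gap}_T$. The one step that needs care is the equality $\min\max=\max\min$ together with attainment of the saddle point. This is where convexity-concavity enters, and it is guaranteed by compactness and convexity of the domains; no other structure, such as strong convexity, is needed.
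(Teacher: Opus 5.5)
Your argument is correct and is essentially the paper's proof. Both compare the saddle value of $f_t$ with the best-response values $\max_y f_t(x_t,y)$ and $\min_x f_t(x,y_t)$ to bound each sign of $\sum_t f_t(x_t,y_t)-\sum_t \min_x\max_y f_t(x,y)$. The paper loosens each side to the full $g^*_t(x_t,y_t)$, whereas you keep one piece per side and get the marginally sharper $\max\{\sum_t a_t,\sum_t b_t\}$.

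One slip: subtracting $f_t(x_t,y_t)$ from your two inequalities gives $-b_t\le f_t(\hat x_t,\hat y_t)-f_t(x_t,y_t)\le a_t$, not $-a_t\le\cdots\le b_t$. The version you wrote fails for $f_t(x,y)=x^2-y^2$ on $[-1,1]^2$ at $(x_t,y_t)=(0,1)$. Your final bound is symmetric in $a_t$ and $b_t$, so the conclusion is unaffected.
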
 
\begin{proof}
    We follow the proof of proposition 11 \cite{zhang2022no}. We have,
    \begin{align*}
\sum_{t=1}^T f_t(x_t,y_t) - \sum_{t=1}^T \min_{x \in \cx} \max_{y \in \cy} f_t(x,y) 
&\leq \sum_{t=1}^T \max_{y \in \cy} f_t(x_t,y) - \sum_{t=1}^T \min_{x \in \cx} \max_{y \in \cy} f_t(x,y) \\
&= \sum_{t=1}^T \max_{y \in \cy} f_t(x_t,y) - \sum_{t=1}^T f_t(x_t^*,y_t^*) \quad \\
&\leq \sum_{t=1}^T \max_{y \in \cy} f_t(x_t,y) - \sum_{t=1}^T f_t(x_t^*,y_t) \\
&\leq \sum_{t=1}^T \max_{y \in \cy} f_t(x_t,y) - \sum_{t=1}^T \min_{x \in \cx} f_t(x,y_t).
\end{align*}

where $(x_t^*, y_t^*) \in  \cx\times\cy$ are the saddle points of the functions $f_t$.

\begin{align*}
\sum_{t=1}^T \min_{x \in \cx} \max_{y \in \cy} f_t(x,y) - \sum_{t=1}^T f_t(x_t,y_t) 
&\leq \sum_{t=1}^T f_t(x_t^*,y_t^*) - \sum_{t=1}^T \min_{x \in \cx} f_t(x,y_t) \\
&\leq \sum_{t=1}^T f_t(x_t,y_t^*) - \sum_{t=1}^T \min_{x \in \cx} f_t(x,y_t) \\
&\leq \sum_{t=1}^T \max_{y \in \cy} f_t(x_t,y) - \sum_{t=1}^T \min_{x \in \cx} f_t(x,y_t)
\end{align*}

combining the previous two we have the statement of the theorem.
\end{proof}

\begin{theorem}
\label{thm:dydyup1}
Consider convex-concave functions $\{f_t\}_{t=1}^T$. Furthermore, let the functions be represented as $f_t(x,y) = h_{1,t}(x)-h_{2,t}(y)$ where $h_{1,t}(x)$ and $h_{2,t}(y)$ are convex in $x$ and $y$ respectively. For any algorithm run on these functions in the online min-max setting, let the dynamic saddle-point regret and duality gap of the action-pairs $\{x_t,y_t\}_{t=1}^T$ be \dsp~and \dgap. Then we have \dsp $\geq$ \dgap.
\end{theorem}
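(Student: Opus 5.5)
The plan is to exploit separability: it decouples the best responses from the other player's action, so both regret notions can be computed term by term in closed form. Write $f_t(x,y)=h_{1,t}(x)-h_{2,t}(y)$.

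First, I identify the best-response maps in Definition~\ref{def:dualgap}. For any $y$, $x_t^*(y)=\argmin_{x\in\cx} f_t(x,y)=\argmin_{x\in\cx} h_{1,t}(x)$, since $-h_{2,t}(y)$ is a constant in $x$. So $x_t^*(y)\equiv a_t$ for a fixed minimizer $a_t$ of $h_{1,t}$. Similarly, $y_t^*(x)=\argmax_{y} \big(-h_{2,t}(y)\big)\equiv b_t$, a minimizer of $h_{2,t}$. Existence of minimizers follows from convexity over the (compact) domains, as assumed throughout this section. Any tie-breaking in the argmin gives the same value of $h_{1,t}$ (resp. $h_{2,t}$), so it does not matter. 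Substituting, the $t$-th term of \dgap~becomes
\[
f_t(x_t,b_t)-f_t(a_t,y_t)=\big[h_{1,t}(x_t)-h_{1,t}(a_t)\big]+\big[h_{2,t}(y_t)-h_{2,t}(b_t)\big].
\]

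Next, I compute the static gap function for the cumulative saddle point $(x',y')$:
\[
g'_t(x,y)=f_t(x,y')-f_t(x',y)=h_{1,t}(x)+h_{2,t}(y)-h_{1,t}(x')-h_{2,t}(y').
\]
This is separable in $(x,y)$, so its minimum over $\cx\times\cy$ is attained at $(a_t,b_t)$:
\[
\min_{x,y} g'_t(x,y)=h_{1,t}(a_t)+h_{2,t}(b_t)-h_{1,t}(x')-h_{2,t}(y').
\]
Subtracting, the constants involving $(x',y')$ cancel. This gives
\[
g'_t(x_t,y_t)-\min_{x,y}g'_t(x,y)=\big[h_{1,t}(x_t)-h_{1,t}(a_t)\big]+\big[h_{2,t}(y_t)-h_{2,t}(b_t)\big],
\]
which is exactly the $t$-th term of \dgap.

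Summing over $t=1,\dots,T$ then yields \dsp~$=$ \dgap, and in particular the claimed inequality \dsp~$\geq$ \dgap. This holds for any algorithm, because no property of the iterates is used.

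There is no real obstacle. The only points needing care are that the $x_t^*(\cdot)$ and $y_t^*(\cdot)$ in the definition of \dgap~genuinely do not depend on their argument under separability, and that the minimum of the separable function $g'_t$ splits into the two individual minima. Both follow immediately from the additive structure.
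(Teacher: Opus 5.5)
Your proof is correct and is essentially the paper's argument: both rest on separability making the best responses $x_t^*(\cdot)$ and $y_t^*(\cdot)$ independent of their argument (the paper's $z_t^*(z')=z_t^*(z_t)$), so that every term involving $(x',y')$ cancels when $g'_t(z_t)-g'_t(z_t^*)$ is compared with $g_t^*(z_t)$. You additionally identify $\min_{x,y}g'_t=g'_t(a_t,b_t)$, which gives the sharper termwise identity $\text{DSP-Reg}_T=\text{Dual-Gap}_T$; the paper states only the inequality, implicitly using $\min g'_t\le g'_t(z_t^*(z'))$, and its computation also gives equality once a typo in its expansion of $g^*_t(z_t)$ is corrected. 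One small correction: existence of the minimizers comes from continuity on a compact domain (and is in any case presupposed by the definitions), not from convexity.
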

\begin{proof}
    Define $z'=(x',y')$ to be the saddle point of the function $f = \sum_{t=1}^T f_t(x_t,y_t)$, i.e. $(x',y') = (\argmin_x \max_y \sum_{t=1}^T f_t(x,y),\argmax_y \min_x\sum_{t=1}^T f_t(x,y))$. Also recall $g'_t(x,y) = (f_t(x,y')-f_t(x',y ))$ , $g_t^*(z_t) = f(x_t, y^*(x_t)-f(x^*_t(y),y_t)$ where $y^*_t(x) = \argmin_{y''} f_t(x,y'')$ and $x^*_{t}(y) = \argmin_{x'} f_{t+1}(x',y)$. 
    
    We now consider,
    $$g'(z_t) = f(x_t,y')-f(x',y_t) = (h_1(x_t)-h_2(y'))-(h_1(x')-h_2(y_t))$$
    furthermore,
    $$g'(z^*_t(z')) = f(x^*_t(y'), y')-f(x',y^*_t(x')) = (h_1(x^*_t(y'))-h_2(y'))-(h_1(x')-h_2(y^*_t(x')))$$
    and finally,
    $$g^*( z_t) = f( x_t, (y^*(x_t))-f(x^*(y_t), y_t) = (h_1( x_t)-h_2((y^*(x_t)))-(h_1(x^*(y_t))-h_2( y_t)).$$
    Thus we have,
    $g'(z_t)-g'(z^*_t(z'))-g^*(z_t) = \big([h_1(x_t)-h_2(y')]-[h_1(x')-h_2(y_t)]\big)-\big([h_1(x^*_t(y'))-h_2(y')]-[h_1(x')-h_2(y^*_t(x'))]\big)-\big([h_1(x^*_t(y'))-h_2(y^*(x_t))]-[h_1(x^*(y_t))-h_2(y^*_t(x'))]\big)$
    which gives,
    $$g'(z_t)-g'(z^*_t(z'))-g^*(z_t) = (h_1(x_t)-h_1(x^*_t(y')))+(h_2(y_t)-h_2(y^*_t(x'))) \geq 0$$
    since $z^*_t(z') = z^*_t(z_t) ~\forall~t.$ 

    This is equivalent to \dsp$\geq$\dgap.
\end{proof} 
\subsection{Proofs for the online VI setting}\label{app:vi}

In this section we provide the precise definition of the lower regularity in definition \ref{def:vilr} and an algorithm that achieves a regret of $O(\log(T)$ on the online VI objective.

\begin{algorithm}[H]
\caption{Online VI optimization}
\label{alg:mainalgvi}
\KwIn{$z_0, u_0 \in \mathcal{X} \times \mathcal{Y}, split~s$}
\For{$t = 1$ \textbf{to} $T$}{
    Play $z_t$, observe cost $g_t(z_t)$\\
    Regularity update: $A_t = A_{t-1}+M_s(z_{t-1})$\\
    $z_{t} = z' ~s.t.~ \{\ang{F(z_{t-1})+A_{t-1}(z'-z_{t-1}),z'-z}\leq 0~\forall~z\}$
}
\end{algorithm}

Before we proceed to derive theorem \ref{thm:final}, we prove a supporting lemma regarding the algorithm.

\begin{lemma}\label{lem:inversebound}
    For iterates $z_t$ and $z_{t+1}$ and matrix $A_{t+1}$ for any iteration number $t$ we have, 
$$(z_{t} - z_{t+1})^\top F(z_{t}) - \frac{\gamma}{2}(z_{t+1} - z_{t})^\top A_{t} (z_{t+1} - z_{t}) \leq \frac{1}{2\gamma}F(z_{t})^\top (A_{t})^{-1} F(z_{t})$$
\end{lemma}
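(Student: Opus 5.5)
This is a pointwise quadratic inequality: for any vector $w$ and positive definite $A_t$, a linear term minus a quadratic penalty is at most the conjugate value $\frac{1}{2\gamma}F^\top A_t^{-1}F$. I will set $w = z_t - z_{t+1}$, $g = F(z_t)$ and $A = A_t$. The left-hand side is then $\langle g, w\rangle - \frac{\gamma}{2} w^\top A w$, since the quadratic form is symmetric in the sign of $w$. No property of the iterates beyond $A_t \succ 0$ is needed, so the inequality holds for arbitrary $z_t, z_{t+1}$.

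\textbf{Step 1 (completion of squares).} Because $A\succ 0$, the matrices $A^{1/2}$ and $A^{-1/2}$ exist. Expanding
\[
0 \le \frac{1}{2\gamma}\left\| \gamma A^{1/2} w - A^{-1/2} g\right\|^2 = \frac{\gamma}{2} w^\top A w - \langle g, w\rangle + \frac{1}{2\gamma} g^\top A^{-1} g
\]
and rearranging gives exactly the claimed bound. An equivalent check is that the concave map $w\mapsto \langle g,w\rangle - \frac{\gamma}{2}w^\top A w$ is maximized at $w^\star = \frac{1}{\gamma}A^{-1}g$, where its value is $\frac{1}{2\gamma} g^\top A^{-1} g$. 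This is the same Fenchel--Young / Cauchy--Schwarz step implicit in Lemma~\ref{lem:lra}.

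\textbf{Step 2 (positive definiteness of $A_t$).} Each increment $M_s(z)$ is a block diagonal of outer products, so it is PSD. Hence $A_t = A_{t-1} + M_s(z_{t-1})$ is positive definite as long as the initialization is, e.g.\ $A_0=\epsilon I$ as in the analysis of Theorem~\ref{thm:staticons}. I will state this initialization explicitly, since Algorithm~\ref{alg:mainalgvi} leaves $A_0$ implicit.

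\textbf{Main obstacle.} The only real care needed is bookkeeping of indices. The statement mentions $A_{t+1}$ but the inequality uses $A_t$, and the algorithm updates $A$ with a lag. I will prove the inequality for whichever matrix actually appears: the argument is the same for any positive definite matrix. If the later regret telescoping requires $A_{t+1}$ instead, one can use $A_{t+1}\succeq A_t$, which gives $A_{t+1}^{-1}\preceq A_t^{-1}$. This means the lemma as stated, with $A_t$ on both sides, is the form to carry forward, and it remains valid under either indexing.
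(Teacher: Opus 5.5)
Your proof is correct and follows essentially the paper's argument. The paper also completes the square: in the variable $v = z_{t+1}-z_t$ it rewrites the left-hand side as $-\frac{\gamma}{2}(v+b)^\top A_t(v+b) + \frac{1}{2\gamma}F(z_t)^\top A_t^{-1}F(z_t)$ with $b=\frac{1}{\gamma}A_t^{-1}F(z_t)$, and then drops the nonpositive first term. That is the same as your identity $\frac{1}{2\gamma}\|\gamma A_t^{1/2}w - A_t^{-1/2}F(z_t)\|^2 \ge 0$, where the paper simply assumes $A_t \succ 0$ and you make the needed initialization $A_0=\epsilon I$ explicit.
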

\begin{proof}

We want to complete the square for this expression. The quadratic term already has a negative coefficient, so we're looking for a form:
\begin{equation*}
-\frac{\gamma}{2}(z + b)^\top A_{t} (z + b) + c
\end{equation*}
for some vector $b$ and scalar $c$.
Expanding this form:
\begin{align*}
&-\frac{\gamma}{2}(z_{t} A_{t} z + z_{t} A_{t} b + b_{t-1} A_{t} z + b_{t-1} A_{t} b) + c \\
&= -\frac{\gamma}{2}z_{t} A_{t} z - \gamma z_{t} A_{t} b - \frac{\gamma}{2}b_{t-1} A_{t} b + c
\end{align*}
For this to match our original expression:
\begin{equation*}
-z_{t} F(z_{t}) - \frac{\gamma}{2}z_{t} A_{t} z
\end{equation*}
The linear terms must match, thus $-\gamma z_{t} A_{t} b = -z_{t} F(z_{t})
$. Therefore, $b = \frac{1}{\gamma}(A_{t})^{-1} F(z_{t})$.
Substituting back and solving for $c$ we obtain,
\begin{equation*}
c = \frac{\gamma}{2}b_{t-1} A_{t} b = \frac{1}{2\gamma}F(z_{t})^\top (A_{t})^{-1} F(z_{t})
\end{equation*}
Therefore our original expression equals:
\begin{equation*}
-\frac{\gamma}{2}\left(z + \frac{1}{\gamma}(A_{t})^{-1} F(z_{t})\right)^\top A_{t} \left(z + \frac{1}{\gamma}(A_{t})^{-1} F(z_{t})\right) + \frac{1}{2\gamma}F(z_{t})^\top (A_{t})^{-1} F(z_{t})
\end{equation*}

Since $A_{t}$ is positive definite, the first term is non-positive, so:
\begin{equation*}
(z_{t} - z_{t+1})^\top F(z_{t}) - \frac{\gamma}{2}(z_{t+1} - z_{t})^\top A_{t} (z_{t+1} - z_{t}) \leq \frac{1}{2\gamma}F(z_{t})^\top (A_{t})^{-1} F(z_{t})
\end{equation*}
\end{proof}

\begin{theorem}
\label{thm:final1}
The iterates $\{z_t\}_{t=1}^T$ generated by the low-rank Newton's method algorithm (Algorithm \ref{alg:mainalgvi}) obtains a regret of $O(\log T)$ on the online VI objective:
\begin{equation}\label{onlineVI}
    \sum_{t=1}^T\ang{F_t(z),z_t-z} \leq (\frac{1}{\alpha} + L_0D)(d\log T + 1)~~~~~~~~~\forall~z\in \cz,
\end{equation}
for operators satisfying the condition in Definition \ref{def:vilr} over a domain with bounded diameter. 
\end{theorem}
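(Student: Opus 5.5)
The argument mirrors the proof of Theorem \ref{thm:staticons}, with Definition \ref{def:vilr} replacing Lemma \ref{lem:ecec2} and Lemma \ref{lem:inversebound} used for the per-step quadratic bound. Fix $z\in\cz$ and let $A_t=\epsilon I+\sum_{s\le t}M_s(z_s)$, with $\epsilon$ of order $1/(\gamma^2D^2)$.

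\textbf{Step 1: convert the VI regret into a linearised regret.} Apply Definition \ref{def:vilr} with $z_b=z_t$ and $z_a=z$:
\[
\ang{F_t(z),z_t-z}\le \ang{F_t(z_t),z_t-z}-\gamma\|z_t-z\|^2_{M_s(z_t)}.
\]
The negative quadratic term plays the role that the \ec{} curvature plays in the OMMNS analysis.

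\textbf{Step 2: one-step inequality from the update.} Take the variational inequality defining $z_{t+1}$ with test point $z$. I read the update as
\[
\ang{F_t(z_t)+\gamma A_t(z_{t+1}-z_t),\,z_{t+1}-z}\le 0,
\]
that is, an $A_t$-norm projected step with step factor $1/\gamma$, as in OMMNS; the indices in Algorithm \ref{alg:mainalgvi} should be aligned to this form. This inequality gives the three-point identity
\[
\ang{F_t(z_t),z_t-z}\le \ang{F_t(z_t),z_t-z_{t+1}}-\tfrac{\gamma}{2}\|z_{t+1}-z_t\|_{A_t}^2+\tfrac{\gamma}{2}\bigl(\|z_t-z\|_{A_t}^2-\|z_{t+1}-z\|_{A_t}^2\bigr).
\]
Lemma \ref{lem:inversebound} bounds the first two terms on the right by $\frac{1}{2\gamma}F_t(z_t)^\top A_t^{-1}F_t(z_t)$.

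\textbf{Step 3: telescope and cancel.} Sum over $t$. Because $A_t-A_{t-1}=M_s(z_t)$, the potential terms telescope to
\[
\tfrac{\gamma}{2}\|z_1-z\|_{A_0}^2+\tfrac{\gamma}{2}\sum_t\|z_t-z\|^2_{M_s(z_t)}.
\]
The second sum is cancelled by the $-\gamma\|z_t-z\|^2_{M_s(z_t)}$ terms from Step 1, with room to spare. The first is at most $\frac{\gamma}{2}\epsilon D^2=O(1/\gamma)$.

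\textbf{Step 4: the elliptical-potential term.} What remains is $\frac{1}{2\gamma}\sum_t F_t(z_t)^\top A_t^{-1}F_t(z_t)$. This is the main obstacle, since $A_t$ is built from the block-diagonal $M_s$ rather than from $F_tF_t^\top$. I would handle it as follows.
\begin{itemize}
\item Write $F_t(z_t)^\top A_t^{-1}F_t(z_t)=\sum_i F_{t,s_i}^\top (A_t)_{s_i}^{-1}F_{t,s_i}$, which holds because $A_t$ is block diagonal with the same split.
\item Apply Lemma 4.7 of \cite{hazan2016introduction} block by block, giving $\sum_i\log\frac{|(A_T)_{s_i}|}{|(A_0)_{s_i}|}=\log\frac{|A_T|}{|A_0|}$.
\item Bound the determinant by $\bigl((TL_0^2+\epsilon)/\epsilon\bigr)^d$, which gives $d\log T$ up to constants.
\end{itemize}

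\textbf{Step 5: constants.} Set $\frac{1}{\gamma}\le 2\bigl(\frac{1}{\alpha}+L_0D\bigr)$ as in Theorem \ref{thm:staticons}. This yields
\[
\sum_t\ang{F_t(z),z_t-z}\le\Bigl(\frac{1}{\alpha}+L_0D\Bigr)(d\log T+1).
\]
Two points need checking along the way: the indexing of $A_{t-1}$ versus $A_t$ in the algorithm, and whether $A_0$ must be $\epsilon I$ rather than $0$ for invertibility.
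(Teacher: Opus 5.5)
Your proposal is correct and follows essentially the same route as the paper's proof. Both arguments use the three-point identity from the $\gamma$-scaled VI update, Lemma~\ref{lem:inversebound} to absorb the linear term into $\frac{1}{2\gamma}F_t(z_t)^\top A_t^{-1}F_t(z_t)$, low-rank monotonicity to cancel the $M_s(z_t)$-weighted terms left after telescoping with $A_t-A_{t-1}=M_s(z_t)$, and a block-by-block determinant bound giving $d\log T$. Your reading of the update as $\ang{F_t(z_t)+\gamma A_t(z_{t+1}-z_t),z_{t+1}-z}\le 0$, with $A_0=\epsilon I$ and $\epsilon=1/(\gamma^2D^2)$, is exactly the indexing the paper's proof silently uses, despite how Algorithm~\ref{alg:mainalgvi} is written.
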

\begin{proof}

From the update, for any point $z=(x,y) \in \cx\times\cy$ we have,
$$\ang{F_t(z_t)+A_t(z_{t+1}-z_t),z_{t+1}-z}\leq 0$$
splitting we obtain,
$$(z - z_{t+1})^\top F_t(z_t) + \gamma(z - z_{t+1})^\top A_{t} (z_{t+1} - z_t) \geq 0$$
Observe that for any two vectors $a,b$ and matrix $M$ we have, $a^T M b = \frac{1}{2}[(a + b)^T M(a + b) - a^T M a - b^T M b]$. Applying this to the second term with $a = (z - z_{t+1})$, $b = (z_{t+1} - z_t)$ and $M=\gamma A_{t}$ we obtain,
\begin{align*}
    \gamma(z - z_{t+1})^\top A_{t} (z_{t+1} - z_t) &= 
\frac{\gamma}{2}[(z - z_t)^\top A_{t} (z - z_t) - (z - z_{t+1})^\top A_{t} (z - z_{t+1}) \\
&\quad- (z_{t+1} - z_t)^\top A_{t} (z_{t+1} - z_t)]
\end{align*}
Substituting this back into the original inequality:
\begin{align}
\frac{\gamma}{2} \Big[ &(z - z_t)^\top A_{t} (z - z_t) - (z - z_{t+1})^\top A_{t} (z - z_{t+1}) \notag \\
&- (z_{t+1} - z_t)^\top A_{t} (z_{t+1} - z_t) \Big] \geq \langle z_{t+1} - z, F_t(z_t) \rangle
\end{align}
Rearranging terms:
\begin{align}
\frac{\gamma}{2}(z - z_{t+1})^\top A_{t} (z - z_{t+1}) + \frac{\gamma}{2}(z - z_t)^\top A_{t} (z - z_t) 
&\leq (z - z_{t+1})^\top F_t(z_t) \notag \\
&\quad - \frac{\gamma}{2}(z_{t+1} - z_t)^\top A_{t} (z_{t+1} - z_t)
\end{align}
By splitting \((z - z_{t+1})^\top F_t(z_t)\) into \((z - z_t)^\top F_t(z_t) + (z_t - z_{t+1})^\top F_t(z_t)\), we get,
\begin{align*}
\frac{\gamma}{2}(z_{t+1} - z)^\top A_{t} (z_{t+1} - z) 
- \frac{\gamma}{2}(z_t - z)^\top A_t (z_t - z)&\leq (z_t - z_{t+1})^\top F_t(z_t)+(z - z_t)^\top F_t(z_t) \\
&\quad - \frac{\gamma}{2}(z_{t+1} - z_t)^\top A_{t} (z_{t+1} - z_t)\\
&\leq (z - z_t)^\top F_t(z_t) + \frac{1}{2\gamma} F_t(z_t)^\top (A_{t})^{-1} F_t(z_t) 
\end{align*}
where in the last step we use the results of Lemma \ref{lem:inversebound}.
Rearranging we obtain,
\begin{align}
    \frac{\gamma}{2}(z_{t+1} - z)^\top A_{t} (z_{t+1} - z) - \frac{\gamma}{2}(z_t - z)^\top A_{t-1} (z_t - z) \leq & \frac{\gamma}{2}
(z_t - z)^\top (A_{t} - A_{t-1})(z_t - z)\nonumber \\
&\quad\quad +(1/2\gamma)F_t(z_t)^\top (A_{t})^{-1} F_t(z_t)\nonumber\\
&\quad\quad +(z - z_t)^\top F_t(z_t)\nonumber
\end{align}
From the assumption we have,

$$\ang{z - z_t, F(z) - F_t(z_t)} \geq \frac{\gamma}{2}\|z-z_t\|_{M(z_t)}^2$$
combining the previous two we obtain,
\begin{align*}
    \frac{\gamma}{2}(z_t - z)^\top A_{t} (z_t - z) -\frac{\gamma}{2}(z_t - z)^\top A_{t-1} (z_t - z) \leq &
\frac{\gamma}{2}(z_t - z)^\top ((A_{t}-A_{t-1})-M(z_t))(z_t - z)\nonumber \\
&\quad\quad +\frac{1}{2\gamma}F_t(z_t)^\top (A_{t})^{-1} F_t(z_t)+(z- z_t)^\top F_t(z_t)
\end{align*}
summing over $k$ we have (terms on the LHS telescope),
\begin{align}
    \sum_{t=1}^T \ang{F_t(z_t),z_t-z} & \leq \sum_{t=1}^T \frac{1}{2\gamma}F_t(z_t)^\top (A_{t})^{-1} F_t(z_t) +\frac{\gamma}{2}\|z_{0}-z\|^2_{A_{0}} -\frac{\gamma}{2}\|z_{t}-z\|^2_{A_{T}}
\end{align}
We have that $\gamma \|z_0-z\|_{A_0}^2 \leq D^2 \varepsilon\gamma \leq \frac{1}{\gamma}$, where $D$ is the diameter and that $\|z_{t}-z\|_{A_{t}}$ is non-negative, which gives,
\begin{align}\label{eqn:onVI}
    \sum_{t=1}^T \ang{F_t(z),z_t-z} & \leq \sum_{t=1}^T \frac{1}{2\gamma}F_t(z_t)^\top (A_{t})^{-1} F_t(z_t) +\frac{1}{2\gamma}
\end{align}
Define $A_{s_i,t} = \sum_{r=1}^t F_{s_i,r}(z_{s_i,r})F_{s_i,r}(z_{s_i,r})^\top+\epsilon I_{s_i}$, we have, 
\begin{align*}
    \sum_{t=1}^T F_t(z_t)^\top (A_{t})^{-1} F_t(z_t) &=  \sum_{t=1}^T \sum_{i=1}^m(A_{t})^{-1}F_{s_i,t}(z_t)F_{s_i,t}(z_t)^\top\\
    &= \sum_{t=1}^T \sum_{i=1}^m (A_{s_i,t})^{-1}F_{s_i,t}(z_{s_i,t}) F_{s_i,t}(z_{s_i,t})^\top\\
    &= \sum_{t=1}^T \sum_{i=1}^m (A_{s_i,t})^{-1}(A_{s_i,t}-A_{s_i,t-1})\\
    &\leq \sum_{t=1}^T  \sum_{i=1}^m \log \frac{|A_{s_i,t}|}{|A_{s_i,t-1}|}\\
    &= \sum_{i=1}^{m}\log  \frac{|A_{s_i,T}|}{|A_{s_i,0}|}
\end{align*}
where the first equality follows since $a^\top b = Tr(ab^\top)$ (where $a$ and $b$ are two vectors and $Tr$ denotes the trace) and the last inequality follows from the fact that for two matrices $A$ and $B$, $A^{-1}(A-B) \leq \log \frac{|A|}{|B|})$ where $|.|$ denoted the determinant.

Since $A_{s_i,T} = \sum_{t=1}^T  F_{s_i,t}(z_{s_i,t})F_{s_i,t}(z_{s_i,t})^\top+\epsilon I_{s_i}$ and $\|F(z_{s_i,t})\|_2 \leq L_0$, the determinant of $A_{s_i,T}$ can be bounded by $|A_{s_i,T}| \leq (TL_0^2 + \varepsilon)^{s_i}$ (recall $A_{s_i,T} \in \mathbb{R}^{s_i\times s_i}$). Thus since, $\varepsilon = \frac{1}{\gamma^2D^2}$ and $\gamma = \frac{1}{2}\min\{\frac{1}{L_0D}, \alpha\}$, for $T > 4$,
\begin{align*}
\sum_{t=1}^T F_t(z_t)^\top A_T^{-1}F_t(z_t) 
&\leq \sum_{i=1}^m s_i\log(TL_0^2\gamma^2D^2 + 1) \\
&\leq d\log T.
\end{align*}

Plugging into Eq.~\eqref{eqn:onVI} and noting that $\frac{1}{\gamma} \leq 2(\frac{1}{\alpha}+L_0D)$ we obtain,
\begin{equation*}
 \sum_{t=1}^T \ang{F_t(z),z_t-z} \leq (\frac{1}{\alpha} + L_0D)(d\log T + 1)~\forall~z\in\cz
\end{equation*}
which implies the theorem.
\end{proof}

\section{Proofs and algorithm for time-varying games}\label{app:sec3n4}

\subsection{Proofs for AGDA and 2-sided PL}\label{app:agda-2PL}

\begin{restatable}{lemma}{localconLemma}
\label{lem:localcon}
Upon running the alternating gradient descent ascent algorithm with $\tau_1 = \frac{\mu_2}{18L_1^3}$ and $\tau_2 = \frac{1}{L_1}$ on a differentiable function satisfying the two-sided PL-inequality, we obtain:
$$\|F(x_t,y_t)\|^2 \leq \beta \rho^t c_0$$ 
where $\rho = (1-\frac{\mu_1\mu_2^2}{36L^3})$ and $\beta =  \max \Big\{
(4L_1^2\tau_1^2+1)\frac{2L^2}{\mu_1},
\;10(4L_1^2\tau_1^2+3)\frac{2L_1^2}{\mu_2}\Big\}$ all constants depend on $\mu_1, \mu_2$ and the Lipschitz constant $L_1$ of the operator $F$.
\end{restatable}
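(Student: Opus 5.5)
The plan is to reduce the claim to the potential-function analysis of AGDA under the two-sided PL condition from \cite{yang2020global}, and then convert decrease of the potential into decrease of the operator norm using smoothness. Define the Lyapunov function
\[
P_t \defeq a_t + \tfrac{1}{10} b_t,\qquad a_t = \Phi(x_t)-\Phi^*,\quad b_t = \Phi(x_t) - f(x_t,y_t),
\]
where $\Phi(x)=\max_y f(x,y)$ and $\Phi^*=\min_x\Phi(x)=\min_x\max_y f$. By Assumption \ref{asmp:2} these quantities are finite and nonnegative. This is exactly the quantity appearing in the denominator of $K_t$ in Algorithm \ref{alg:mainalg1}, which suggests it is the intended potential.

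\textbf{Step 1: linear contraction of the potential.} Invoke the main theorem of \cite{yang2020global}: with $\tau_2=1/L_1$ and $\tau_1=\mu_2^2/(18L_1^3)$, we get $P_{t+1}\le \rho P_t$ with $\rho = 1-\frac{\mu_1\mu_2^2}{36L_1^3}$, hence $P_t\le \rho^t P_0 =: \rho^t c_0$. If this is re-derived rather than cited, it would go as follows.
\begin{itemize}
\item $\Phi$ is $L_1(1+L_1/\mu_2)$-smooth, and $\Phi$ satisfies PL with constant $\mu_1$.
\item The $x$-step gives $a_{t+1}\le a_t - c\tau_1\|\nabla\Phi(x_t)\|^2 + \text{(error in } \|\nabla_x f-\nabla\Phi\|^2\text{)}$.
\item That error is controlled by $\tfrac{2L_1^2}{\mu_2} b_t$, via quadratic growth from the $y$-PL inequality.
\item The $y$-ascent step contracts $b$ by a factor $(1-\mu_2\tau_2)$, up to a drift term from the change in $x$.
\item Combining with weight $1/10$ yields the contraction.
\end{itemize}

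\textbf{Step 2: from potential to gradient norm.} Bound each block of $F$ separately.
\begin{itemize}
\item \emph{$y$-block.} By $L_1$-smoothness in $y$ and the standard inequality for smooth functions at their max, $\|\nabla_y f(x_t,y_t)\|^2\le 2L_1 b_t$.
\item \emph{$x$-block.} Write $\nabla_x f(x_t,y_t) = \nabla\Phi(x_t) + (\nabla_x f(x_t,y_t)-\nabla_x f(x_t,y^*(x_t)))$. The first term satisfies $\|\nabla \Phi(x_t)\|^2\le 2L_\Phi a_t$. The second is at most $L_1^2\|y_t-y^*(x_t)\|^2\le \frac{2L_1^2}{\mu_2}b_t$, using quadratic growth implied by PL.
\end{itemize}
To match the stated constant $\beta$, instead bound $\|F(x_t,y_t)\|$ through the iterate after one AGDA step. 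Since $x_{t,1}-x_t = -\tau_1\nabla_x f$ and $\nabla_y$ is evaluated at $x_{t+1}$, we have $\|\nabla_y f(x_t,y_t)\|^2 \le 2\|\nabla_y f(x_{t+1},y_t)\|^2 + 2L_1^2\tau_1^2\|\nabla_x f\|^2$. This is where the factors $(4L_1^2\tau_1^2+1)$ and $(4L_1^2\tau_1^2+3)$ arise. Summing gives
\[
\|F(x_t,y_t)\|^2 \le (4L_1^2\tau_1^2+1)\tfrac{2L_1^2}{\mu_1} a_t + (4L_1^2\tau_1^2+3)\tfrac{2L_1^2}{\mu_2} b_t \le \beta\,(a_t+\tfrac1{10}b_t),
\]
where the factor $10$ in $\beta$ absorbs the weight $1/10$ on $b_t$.

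\textbf{Step 3.} Combine the two steps to get $\|F(x_t,y_t)\|^2\le \beta P_t\le \beta\rho^t c_0$.

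The main obstacle is Step 1, specifically controlling the cross term in which the $x$-update perturbs $b_t$. That is what forces the small primal step $\tau_1\propto\mu_2^2/L_1^3$. I would rely on the published contraction from \cite{yang2020global} rather than redoing it, and spend the effort on tracking constants carefully in Step 2.
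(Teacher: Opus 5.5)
Your proposal is correct and follows essentially the same route as the paper. You take the linear contraction of $a_t+\tfrac{1}{10}b_t$ from Theorem 3.2 of \cite{yang2020global}, then bound $\|F(x_t,y_t)\|^2\le\beta\,(a_t+\tfrac{1}{10}b_t)$ by passing from $\nabla_y f(x_t,y_t)$ to $\nabla_y f(x_{t+1},y_t)$ and using Yang et al.'s bounds on $\|\nabla_y f(x_{t+1},y_t)\|^2$ and $\|\nabla_x f(x_t,y_t)\|^2$, which is exactly where the factors $(4L_1^2\tau_1^2+1)$ and $(4L_1^2\tau_1^2+3)$ come from.

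Three small points. The coefficient of $a_t$ must use the smoothness $L=L_1+L_1^2/\mu_2$ of $\Phi$, as in the paper's $\beta$, not $L_1$. Your step size $\tau_1=\mu_2^2/(18L_1^3)$ is the one Yang et al.'s theorem actually requires; the $\mu_2$ in the statement appears to be a typo. Finally, your $\rho$ implies the stated one because $L\ge L_1$.
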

\begin{proof}
     Let $\{x_t,y_t\}_{t=1}^T$ be the iterates of the alternating gradient descent ascent algorithm. Then we have,
    \begin{align}
        \|F(x_t,y_t)\|^2 = \|\nabla_x f(x_t,y_t)\|^2+\|\nabla_y f(x_t,y_t)\|^2
    \end{align}
    Note that, $\|\nabla_y f(x_{t},y_t)\|^2 \leq (\|\nabla_y f(x_{t},y_t)-\nabla_y f(x_{t+1},y_t)\|+\|\nabla_y f(x_{t+1},y_t)\|)^2 \leq (L_1\|x_{t+1}-x_t\|^2+\|\nabla_y f(x_{t+1},y_t)\|)^2 \leq 2L_1^2\tau_1^2\|\nabla_x f(x_t,y_t)\|^2+2\|\nabla_y f(x_{t+1},y_t)\|^2$. We have from equation (32) (\cite{yang2020global}) that,
    $$ \|\nabla_y f(x_{t+1},y_t) \|^2 \leq  \frac{2L_1^2}{\mu_2}b_t+L_1^2\tau_1^2\|\nabla_x f(x_t,y_t)\|^2.$$
    Also from (33) (in \cite{yang2020global}) we have,
    $$\|\nabla_x f(x_t,y_t)\|^2 \leq \frac{2L^2}{\mu_1}a_t+\frac{2L_1^2}{\mu_2}b_t.$$
    where $L=L_1+\frac{L_1^2}{\mu_2}$, $a_t = \max_y f(x_t,y)-\min_x \max_y f(x,y)$ and $b_t = \max_y f(x_t,y)-f(x_t,y_t)$. 
    
    This gives,
\begin{align*}
    \|\nabla_y f(x_{t},y_t)\|^2 &\leq 2L_1^2\tau_1^2\|\nabla_x f(x_t,y_t)\|^2+ \frac{4L_1^2}{\mu_2}b_t+2L_1^2\tau_1^2\|\nabla_x f(x_t,y_t)\|^2\\
    &=\frac{4L_1^2}{\mu_2}b_t+4L_1^2\tau_1^2\|\nabla_x f(x_t,y_t)\|^2
\end{align*}

    Thus we have,
    \begin{align*}
        \|F_t(z_t)\|^2 &\leq \frac{4L_1^2}{\mu_2}b_t+(4L_1^2\tau_1^2+1)\|\nabla_x f(x_t,y_t)\|^2.\\
        &\leq \frac{4L_1^2}{\mu_2}b_t+(4L_1^2\tau_1^2+1)(\frac{2L^2}{\mu_1}a_t+\frac{2L_1^2}{\mu_2}b_t)\\
        &\leq (4L_1^2\tau_1^2+1)(\frac{2L^2}{\mu_1}a_t)+(4L_1^2\tau_1^2+3)\frac{2L_1^2}{\mu_2}b_t\\
         &\leq \max \left\{
(4L_1^2\tau_1^2+1)\frac{2L^2}{\mu_1},
\;10(4L_1^2\tau_1^2+3)\frac{2L_1^2}{\mu_2}\right\}(a_t+\frac{b_t}{10}).
    \end{align*}
    
    Furthermore, we know from Theorem 3.2 (in \cite{yang2020global}),
    $$(a_t+\frac{b_t}{10}) \leq  (1-\frac{\mu_1\mu_2^2}{36L^3})^t (a_0+\frac{b_0}{10})$$
    This gives us,
    $$\|F_t(z_t)\|^2 \leq  \max \left\{
(4L_1^2\tau_1^2+1)\frac{2L^2}{\mu_1},
\;10(4L_1^2\tau_1^2+3)\frac{2L_1^2}{\mu_2}\right\}(1-\frac{\mu_1\mu_2^2}{36L^3})^t (a_0+\frac{b_0}{10})$$ 
    we represent the constants $c_0 = (a_0+\frac{b_0}{10})$, $\rho = (1-\frac{\mu_1\mu_2^2}{36L^3})$, $\beta =  \max \Big\{
(4L_1^2\tau_1^2+1)\frac{2L^2}{\mu_1},
\;10(4L_1^2\tau_1^2$\\
$+3)\frac{2L_1^2}{\mu_2}\Big\}$ and obtain the statement of the lemma. 
\end{proof}

\begin{restatable}{lemma}{innloopLemma}
\label{lem:innloop}
Upon running the inner loop in Algorithm \ref{alg:mainalg1} for $K = \max\{1,\lceil\log_\rho \frac{g_t^*(z_t)}{4\beta c_0}\rceil\}$ many steps, we obtain:
$$g^*_{t}(x_{t+1},y_{t+1}) \leq  \frac{1}{4}g^*_t(x_t,y_t).$$
\end{restatable}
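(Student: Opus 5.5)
The plan is to run the inner loop of Algorithm \ref{alg:mainalg1} as a warm-started AGDA on the fixed function $f_t$ from $(x_{t,0},y_{t,0})=(x_t,y_t)$. We then convert the geometric decay of Lemma \ref{lem:localcon} into a decay of the dynamic gap $g^*_t$. Here the constant $c_0$ is the Lyapunov quantity at the warm start, $c_0 = a_0 + \frac{b_0}{10}$, with $a_0=\max_y f_t(x_t,y)-\min_x\max_y f_t(x,y)$ and $b_0=\max_y f_t(x_t,y)-f_t(x_t,y_t)$; this matches the denominator in the definition of $K_t$.

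\textbf{Step 1: upper bound on the gap after $K$ steps.} We bound the gap at the output $(x_{t+1},y_{t+1})=(x_{t,K},y_{t,K})$ by the squared operator norm, using the two-sided PL condition (Definition \ref{def:2sidedePL}). Write
\[
g^*_t(x,y)=\bigl[\max_{y''} f_t(x,y'')-f_t(x,y)\bigr]+\bigl[f_t(x,y)-\min_{x''} f_t(x'',y)\bigr],
\]
so that
\[
g^*_t(x,y)\le \frac{1}{2\mu_2}\|\nabla_y f_t(x,y)\|^2+\frac{1}{2\mu_1}\|\nabla_x f_t(x,y)\|^2\le \frac{1}{2\min\{\mu_1,\mu_2\}}\|F_t(x,y)\|^2 .
\]
Lemma \ref{lem:localcon}, applied to the inner iterates, then gives
\[
g^*_t(x_{t+1},y_{t+1})\le \frac{\beta\rho^K c_0}{2\min\{\mu_1,\mu_2\}}.
\]
The factor $\frac{1}{2\min\{\mu_1,\mu_2\}}$ must be absorbed into the constant $4\beta$ of the statement. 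If it does not fit cleanly, I would redefine $\beta$ to include it, which is harmless because $\beta$ is only a problem constant.

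\textbf{Step 2: choice of $K$.} Since $\rho<1$, the condition $\rho^K\le \frac{g_t^*(z_t)}{4\beta c_0}$ holds as soon as $K\ge \log_\rho \frac{g^*_t(z_t)}{4\beta c_0}$. This is exactly the ceiling in $K_t$, whose numerator $\max_y f_t(x_t,y)-\min_x f_t(x,y_t)$ equals $g^*_t(x_t,y_t)$. Plugging this into Step 1 yields $g^*_t(x_{t+1},y_{t+1})\le \frac14 g^*_t(x_t,y_t)$, up to the constant bookkeeping above.

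\textbf{Step 3: the case $K=1$.} When the ceiling is nonpositive we have $\frac{g^*_t(z_t)}{4\beta c_0}\ge 1$. I would argue this is impossible or harmless. From the proof of Lemma \ref{lem:localcon} and PL, $g^*_t(z_t)\lesssim \|F_t(z_t)\|^2\le \beta c_0$, so $g^*_t(z_t)\le 4\beta c_0$, with equality only in degenerate cases. In those degenerate cases a single step still contracts by $\rho\le 1$, and we use $g^*_t\le \beta c_0 /(2\mu)$ directly.

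I expect the main obstacle to be the constant matching. It requires relating $g^*_t$ to $c_0$ in both directions: $g^*_t\ge$ something comparable to $a_0+b_0/10$, which is needed for the $K=1$ case, and the PL factor in Step 1. I would first try to prove $a_0+\frac{b_0}{10}\le g^*_t(z_t)$. This holds because $a_0\le \max_y f_t(x_t,y)-\min_x f_t(x,y_t)-b_0$ when $\min_x\max_y f_t\ge \min_x f_t(x,y_t)$, which is true by weak duality. It gives $c_0\le g^*_t(z_t)$ and closes the $K=1$ case. Otherwise I would enlarge $\beta$ accordingly.
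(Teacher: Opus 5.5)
Your Steps 1 and 2 are the paper's argument. You bound $g^*_t$ at the inner-loop output by a multiple of $\|F_t\|^2$ via two-sided PL, apply Lemma \ref{lem:localcon} to the warm-started inner loop with $c_0=a_0+\frac{b_0}{10}$, and pick $K$ so that $\rho^K\le \frac{g^*_t(z_t)}{4\beta c_0}$. Your handling of the PL factor is more careful than the paper's. Under Definition \ref{def:2sidedePL} the factor is $\frac{1}{2\min\{\mu_1,\mu_2\}}$; the paper writes $\max\{\mu_1,\mu_2\}$ and then drops it. So with $\beta$ as in Lemma \ref{lem:localcon}, the stated $K$ only gives $g^*_t(z_{t+1})\le\frac{1}{8\min\{\mu_1,\mu_2\}}\,g^*_t(z_t)$. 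Replacing $\beta$ by $\beta/(2\min\{\mu_1,\mu_2\})$ in the formula for $K$, as you suggest, is genuinely needed.

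Step 3, however, is unnecessary, and the inequality you propose there is false. Write $u=\frac{g^*_t(z_t)}{4\beta c_0}$. Since $\rho\in(0,1)$, $\rho^K\le u$ holds iff $K\ge\log_\rho u$, and $K=\max\{1,\lceil\log_\rho u\rceil\}\ge\log_\rho u$ always; when $u\ge1$ this is simply $\rho\le 1\le u$. So Step 2 already covers $K=1$, and you need no lower bound on $g^*_t(z_t)$ in terms of $c_0$.

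As for that lower bound, weak duality, $\min_x\max_y f_t\ge\min_x f_t(x,y_t)$, gives only $a_0\le g^*_t(z_t)$, not $a_0\le g^*_t(z_t)-b_0$. Combined with $b_0\le g^*_t(z_t)$, this yields $c_0\le\frac{11}{10}g^*_t(z_t)$, but not $c_0\le g^*_t(z_t)$. For instance, $f(x,y)=x^2+xy-y^2$ at $(x_t,y_t)=(1,0)$ has $g^*_t(z_t)=a_0=\frac54$ and $b_0=\frac14$, so $c_0=\frac{51}{40}>g^*_t(z_t)$.

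The only genuinely degenerate case is $g^*_t(z_t)=0$, where $K$ is undefined (then $c_0=0$ as well). There $z_t$ is already a saddle point of $f_t$, Lemma \ref{lem:localcon} forces $F_t=0$ along the inner loop, and the claim is trivial.
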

\begin{proof}
    For the second statement, since the function $f$ satisfies the 2-sided PL-inequality we have that $g^*(z_t) \leq \max\{\mu_1,\mu_2\} \|F(z_t)\|^2$. Let $\mu = \max\{\mu_1,\mu_2\}$, from Lemma \ref{lem:localcon} we get $g_t^*(z_{t+1}) = g_t^*(x_{t,K},y_{t,K}) \leq g_t^*(z_t) \frac{\beta (a_t+\frac{b_t}{10})}{g_t^*(z_t)}\rho^{K_t}$ where $a_t = \max_y f(x_t,y)-\min_x \max_y f(x,y)$, $b_t=\max_y f(x_t,y)- f(x_t,y_t)$ and after $K_t$ iterations, and we have $$g_t^*(z_{t+1}) \leq \frac{g_t^*(z_t)}{4},$$ for $K_t = \max\{1,\lceil\log_\rho \frac{g_t^*(z_t)}{4\beta (\max_y f(x_t,y)-\min_x\max_y f(x,y)+\frac{\max_y f(x_t,y)-f(x_t,y_t)}{10})))}\rceil\}$, where we plug in the values of $a_t$ and $b_t$.
\end{proof}

\agdaplTheorem*
\begin{proof}

    We have,
    \begin{align*}
        \sum_{t=1}^T g^*_t(z_t) &\stackrel{(a)}{\leq} g^*_1(z_a) + \sum_{t=2}^T \left(g^*_t(z_t)-g^*_{t-1}(z_t) + g^*_{t-1}(z_t) \right) \\
        &\stackrel{(b)}{=} g^*_1(z_a)-g^*_t(z_{t+1}) + U_T + \sum_{t=2}^{T} \left(g^*_{t-1}(z_t)\right) \\
        &\stackrel{(c)}{=} g^*_1(z_a)-g^*_t(z_{T+1}) + U_T + \sum_{t=1}^{T-1} \left(g^*_t(z_{t+1})\right) \\
        &\stackrel{(d)}{\leq} g^*_1(z_a)-g^*_t(z_{T+1}) + U_T + \frac{1}{4} \sum_{t=1}^{T-1} \left(g^*_t(z_t)\right)
    \end{align*}
    Where in $(a)$ we split the summation and add and subtract $g_{t-1}(z_t)$. In $(b)$ we define $U_T = \sum_{t=1}^T \max_{z} (g^*_t(z)-g^*_{t-1}(z))$. In $(c)$ we decrement the summation index and in $(d)$ we set $\gamma = 4$ and $K_t  = \max\{1,\lceil\log_\rho \frac{g_t^*(z_t)}{4\beta (\max_y f(x_t,y)-\min_x\max_y f(x,y)+\frac{\max_y f(x_t,y)-f(x_t,y_t)}{10})))}\rceil\}$. in lemma~\ref{lem:innloop}. Rearranging we obtain,
    $$\sum_{t=1}^T g^{*}_t(z_t)-\frac{1}{4}\sum_{t=1}^{T-1} g^*_t(z_t)  \leq U_T+(g^*_1(z_a)-g^*_T(z_{T+1}))$$
    Noting that $\frac{3}{4}\sum_{t=1}^T g^*_t(z_t) = \sum_{t=1}^T g^*_t(z_t)-\frac{1}{4}\sum_{t=1}^{T} g^*_t(z_t) \leq \sum_{t=1}^T g^*_t(z_t)-\frac{1}{4}\sum_{t=1}^{T-1} g^*_t(z_t)$ since $g_T(z_t)\geq 0$ and that $\frac{4}{3} \leq 2$ we obtain, $$\text{Dual-Gap}_T = \sum_{t=1}^T g^*_t(z_t)\leq 2U_T+2(g^*_1(z_a)-g^*_T(z_{T+1}))$$

    which is the first statement of the Theorem. For the second statement we have from (a) \dgap$\leq \sum_{t=1}^T \delta_t+2(g^*_1(z_a)-g^*_T(z_{T+1}))$ where,

\begin{align*}
\delta_t 
&= \big(\max_{y''} f_{t+1}(x_{t+1}, y'') - \min_{x''} f_{t+1}(x'', y_{t+1})\big)  \\
&\quad\quad + \big(\min_{x'} f_t(x', y_{t+1}) - \max_{y'} f_t(x_{t+1}, y')\big) \\[0.4em]
&= \max_{y''} \big( f_{t+1}(x_{t+1}, y'') - \max_{y'} f_t(x_{t+1}, y')\big)  \\
&\quad\quad + \big(\min_{x'} f_t(x', y_{t+1}) - \min_{x''} f_{t+1}(x'', y_{t+1})\big) \\[0.4em]
&= \big( f_{t+1}(x_{t+1}, y^*_{t+1}(x_{t+1}))
      - f_t(x_{t+1}, y^*_t(x_{t+1}))\big) \\
&\quad\quad + \big( f_t(x^*_t(y_{t+1}), y_{t+1})
      - f_{t+1}(x^*_t(y_{t+1}), y_{t+1})\big) \\[0.4em]
&= \big( f_{t+1}(x_{t+1}, y^*_{t+1}(x_{t+1}))
      - f_t(x_{t+1}, y^*_{t+1}(x_{t+1}))\big) \\
&\quad\quad + \big( f_t(x_{t+1}, y^*_{t+1}(x_{t+1}))
      - f_t(x_{t+1}, y^*_t(x_{t+1}))\big) \notag \\
&\quad\quad + \big( f_{t+1}(x^*_{t+1}(y_{t+1}), y_{t+1})
      - f_t(x^*_{t+1}(y_{t+1}), y_{t+1})\big) \\
&\quad\quad + \big( f_t(x^*_{t+1}(y_{t+1}), y_{t+1})
      - f_t(x^*_t(y_{t+1}), y_{t+1})\big) \\[0.4em]
&\leq 2 \max_{z}\big( f_{t+1}(z) - f_t(z)\big) \\
&\quad\quad + \big( f_t(x^*_{t+1}(y_{t+1}), y_{t+1})
      - f_t(x^*_t(y_{t+1}), y_{t+1})\big) \notag \\
&\quad\quad + \big( f_{t+1}(x_{t+1}, y^*_{t+1}(x_{t+1}))
      - f_t(x_{t+1}, y^*_{t+1}(x_{t+1}))\big) \\[0.4em]
&\leq 2 \max_{z}\big( f_{t+1}(z) - f_t(z)\big) \\
&\quad\quad + L \big\| y^*_{t+1}(x_{t+1}) - y^*_t(x_{t+1}) \big\| \\
&\quad\quad + L \big\| x^*_{t+1}(y_{t+1}) - x^*_t(y_{t+1}) \big\|.
\end{align*}

and, $y^*_{t+1}(x) = \argmax_{y'} f_{t+1}(x_t,y')$, $y^*_t(x) = \argmin_{y''} f_t(x,y'')$ and $x^*_{t+1}(y) = \argmin_{x''} f_{t+1}(x'',y)$, and $x^*_{t}(y) = \argmin_{x'} f_{t+1}(x',y)$. 
Moreover, we have:
\begin{align*}
    f_t(x_{t+1}, y^*_{t+1}(x_{t+1})) - f_t(x_{t+1}, y^*_t(x_{t+1})) 
    &= f_t(x_{t+1}, y^*_{t+1}(x_{t+1})) - f(x_t, y^*_t(x_t)) \\
    &\quad + \big( f(x_t, y^*_t(x_t)) - f_t(x_{t+1}, y^*_t(x_{t+1}))\big) \\
    &\leq L \|y^*_{t+1}(x_{t+1}) - y^*_t(x_t)\| 
    + L \|x_{t+1} - x_t\| + L' \|x_{t+1} - x_t\|.
\end{align*}
    where $L' = L+\frac{L^2}{\mu}$ from Lemma A.2 in \cite{yang2020global}. Thus we have by symmetry,
    \begin{align*}
        \delta_t &\leq 2\max_{z}(f_{t+1}(z)-f_{t}(z))+L\|y^*_{t+1}(x_{t+1})-y^*_t(x_t)\|+L\|x^*_{t+1}(y_{t+1})-x^*_t(y_t)\|\\
        &\quad\quad+(L'+L)\|z_{t+1}-z_t\|
    \end{align*}
    Thus overall we have,
    $$\text{D-Gap}_T \leq O(\min \{U_T,\{V_T+\sum_{t=1}^T \Delta_t+C_T\},\{V_T+C_T'\}\}).$$
\end{proof}

\subsection{Examples of 2-sided PL}\label{app:2sidedPL}

To motivate our setting, we recall some examples of the 2-sided PL condition based on LQR, as established in previous works~\citep{Cai2019OnTG, Zhang2019PolicyOP, Fazel2018GlobalCO}.

\textbf{Example 1.} The generative adversarial imitation learning for LQR can be formulated as 
\[
\min_K \max_\theta\, m(K, \theta),
\]
where $m$ is strongly concave in terms of $\theta$ and satisfies the PL~condition in terms of $K$ (see \cite{Cai2019OnTG} Lemma 5.2 for more details), thus satisfying the two-sided PL~condition. 

\textbf{Example 2.} In a zero-sum linear quadratic (LQ) game, the system dynamics are characterized by 
\[
x_{t+1} = A x_t + B u_t + C v_t,
\]
where $x_t$ is the system state and $u_t, v_t$ are the control inputs from two players. After parameterizing the policies of the two players by $u_t = -K x_t$ and $v_t = -L x_t$, the value function is
\[
C(K,L) = \mathbb{E}_{x_0 \sim \mathcal{D}} \!\left[ \sum_{t=0}^{\infty} 
\big( x_t^\top Q x_t + (K x_t)^\top R^u (K x_t) - (L x_t)^\top R^v (L x_t) \big) \right],
\]
where $\mathcal{D}$ is the distribution of the initial state $x_0$ and $R^u, R^v$ are positive definite matrices (see \cite{Zhang2019PolicyOP} for the formulation details). Player~1 (player~2) wants to minimize (maximize) $C(K,L)$, and the game is formulated as $\min_K \max_L C(K,L)$. Fixing $L$ (or $K$), $C(\cdot, L)$ (or $-C(K, \cdot)$) becomes an objective of an LQR problem, and therefore satisfies the PL~condition (Corollary~5 in \cite{Fazel2018GlobalCO}) when $\arg\min_K C(K,L)$ and $\arg\max_L C(K,L)$ are well-defined.

Note that Example 1 is nonconvex strongly-concave, while Example 2 is nonconvex-nonconcave.

\subsection{Impossibility result and relation between \dsp~and \dgap}\label{app:impossible}
Following the approach of \cite{cardoso2019competing}, we show that there exists an instance of a strongly monotone online min-max optimization problem for which the static NE regret does not allow us to bound the individual regrets. Furthermore, since this instance (by Theorem \ref{thm:smt2combined}) also satisfies the \ec~and two-sided PL inequality, we know that there is an instance in each of these classes of functions for which low static NE-Regret does not imply low individual regrets for each player.

\begin{theorem}
\label{thm:imp1}
 When the domain $\cx\times\cy$ is a two-dimensional simplex, there exists a sequence of functions, all either strongly convex or \ec, for which there is no sequence $\{x_t,y_t\}_{t=1}^T$ that satisfies $\textit{SNE-}Reg_T \leq o(T)$, $Reg_T^1 \leq o(T)$, and $Reg_T^2 \leq o(T)$ simultaneously.
\end{theorem}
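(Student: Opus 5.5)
The plan is to reduce to the bilinear impossibility instance of \cite{cardoso2019competing} by a small strongly convex-strongly concave perturbation. This works because every quantity involved ($\text{SNE-Reg}_T$, $Reg^1_T$, $Reg^2_T$) changes by at most $O(\lambda T)$ under a uniformly bounded perturbation of size $O(\lambda)$.

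Concretely, let $\{\tilde f_t(x,y) = x^\top A_t y\}_{t=1}^T$ be the bilinear sequence on the two-dimensional simplex used in \cite{cardoso2019competing}. It consists of two phases with different payoff matrices, chosen so that the cumulative saddle value differs from the per-phase equilibrium values. For that instance there is a constant $c>0$ such that every action sequence satisfies
\[
\text{SNE-Reg}_T(\tilde f) + Reg^1_T(\tilde f) + Reg^2_T(\tilde f) \geq cT .
\]
I then define
\[
f_t(x,y) = x^\top A_t y + \frac{\lambda}{2}\|x\|^2 - \frac{\lambda}{2}\|y\|^2,
\]
which is $\lambda$-strongly convex-strongly concave in the sense of Definition \ref{def:strcon}. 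Since the simplex has diameter at most $\sqrt 2$ and $\|x\|,\|y\|\le 1$, we have $|f_t - \tilde f_t| \le \lambda/2$ uniformly. The operator norm is also bounded by some $L_0 = \max_t\|A_t\| + \lambda$. By Theorem \ref{thm:smt2combined}, $f_t$ is then \ec~with parameter $\lambda/L_0^2$, so a single sequence covers both classes in the statement.

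The key step is the perturbation estimate. For any fixed action sequence $\{(x_t,y_t)\}$, I compare each regret for $f$ with the same regret for $\tilde f$. The term $\sum_t f_t(x_t,y_t)$ moves by at most $\lambda T/2$. The term $\min_x\sum_t f_t(x,y_t)$ also moves by at most $\lambda T/2$, because a min of functions that differ uniformly by $\le\lambda T/2$ differs by at most that amount. The same holds for $\max_y \sum_t f_t(x_t,y)$ and for $\min_x\max_y\sum_t f_t(x,y)$. Hence
\[
|Reg^i_T(f) - Reg^i_T(\tilde f)| \le \lambda T
\quad\text{and}\quad
|\text{SNE-Reg}_T(f) - \text{SNE-Reg}_T(\tilde f)| \le \lambda T .
\]
The second bound uses that $|\,|a|-|b|\,|\le|a-b|$. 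Summing the three, for every sequence
\[
\text{SNE-Reg}_T(f) + Reg^1_T(f) + Reg^2_T(f) \ge (c - 3\lambda)T .
\]
Fixing any constant $\lambda < c/3$ (for instance $\lambda = c/6$) makes the right-hand side $\Omega(T)$. So no sequence can make all three quantities $o(T)$ simultaneously.

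The main obstacle is extracting the linear lower bound $cT$ from \cite{cardoso2019competing} in the additive, sequence-uniform form used above. Their result is phrased as the impossibility of all three quantities being $o(T)$. I would first try to re-derive it directly for the two-phase instance. If all individual regrets are small, then, using the known equilibria of each phase game, the realized payoff $\sum_t f_t(x_t,y_t)$ must be within $o(T)$ of $\tfrac{T}{2}(v_1+v_2)$, where $v_1,v_2$ are the phase values. By construction, $\tfrac{T}{2}(v_1+v_2)$ differs from the cumulative minimax value by a constant times $T$. Once the three quantities are written as explicit functions of the phase-wise action averages, the constant $c$ drops out, and the perturbation argument applies verbatim.
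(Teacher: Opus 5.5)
\textbf{Gap: the single-instance lower bound you rely on is false.} Your reduction needs a single convex-concave instance on which every action sequence satisfies $\text{SNE-Reg}_T+\text{Reg}^1_T+\text{Reg}^2_T\ge cT$. No such instance exists on compact convex sets. Let $(x',y')$ be a saddle point of $\sum_{t=1}^T f_t$, and let both players play it in every round. Then $\text{Reg}^1_T=\sum_t f_t(x',y')-\min_x\sum_t f_t(x,y')=0$, likewise $\text{Reg}^2_T=0$, and $\text{SNE-Reg}_T=0$ because $\sum_t f_t(x',y')=\min_x\max_y\sum_t f_t(x,y)$. This holds for the bilinear instance and for your perturbed one alike.

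The same play breaks your planned re-derivation. It has zero individual regrets, yet its payoff is the cumulative value, which by your own construction is $\Theta(T)$ away from $\frac{T}{2}(v_1+v_2)$. Static individual regrets compare against one fixed comparator over the whole horizon, so they do not force equilibrium play within each phase. For the same reason, the statement cannot be proved in its literal ``no sequence $\{x_t,y_t\}_{t=1}^T$'' form, which is false. What is true, as in \cite{cardoso2019competing}, is an impossibility for \emph{online} algorithms, whose action at round $t$ cannot depend on $f_t,\dots,f_T$.

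\textbf{Missing ingredient: the two-instance indistinguishability argument.} Take two sequences that coincide on a prefix, so any online algorithm plays the same actions there. Then show that ``all three quantities are $o(T)$'' cannot hold on both instances at once. This is what the paper's proof does when it writes the inequalities for the instances built from $A^1_t$ and $A^2_t$ with one and the same play $\{\rho_t\}$ and adds them to reach $\frac{3T}{2}\le o(T)$. For that step to be legitimate, the two instances must agree on the rounds where the shared actions are chosen.

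\textbf{Your perturbation step survives this repair.} Suppose you prove, for two bilinear instances sharing a prefix, that every pair of action sequences agreeing on that prefix makes one of the three quantities at least $cT$ on one of the instances. Adding the same term $\frac{\lambda}{2}\|x\|^2-\frac{\lambda}{2}\|y\|^2$ to every round keeps the prefix shared. By your uniform estimates it moves each quantity by at most $\lambda T$. So any $\lambda<c$ gives strongly convex-strongly concave (hence min-max EC) instances with the same conclusion. That would be a more modular route than the paper's direct computation with $\|x\|^2-\frac14\|y\|^2+x^\top A^j_t y$ at $\beta=0$. It only works, however, once the base bound is this online two-instance statement rather than a single-instance bound over all action sequences.
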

\begin{proof}
    We prove this by contradiction, i.e. we assume that for all functions there exists an algorithm that achieves sub-linear bounds on the individual regrets and the static Nash equilibrium regret simultaneously functions $f_1(x,y) = \|x\|^2-\frac{1}{4}\|y\|^2+x^\top A_t^1y$ and $f_2(x,y) = \|x\|^2-\frac{1}{4}\|y\|^2+x^\top A_t^2y$ where,
\[
\begin{aligned}
A_t^1 &=
\begin{cases} 
\begin{bmatrix}
    1 & -1\\
    -1 & 1
\end{bmatrix}, & \text{if } 0< t \leq \frac{T}{2}, \\[1em]
\begin{bmatrix}
    0 & 0\\
    0 & 0
\end{bmatrix}, & \text{if } \frac{T}{2} < t < T.
\end{cases}
\quad
A_t^2 &=
\begin{cases} 
\begin{bmatrix}
    -1 & 1\\
    1 & -1
\end{bmatrix}, & \text{if } 0< t \leq \frac{T}{2}, \\[1em]
\begin{bmatrix}
    -1 & 1\\
    -1 & 1
\end{bmatrix}, & \text{if } \frac{T}{2} < t < T.
\end{cases}
\end{aligned}
\]

Since $\text{SNE-Reg}_T$ and $\text{Reg}^1_T$ are sublinear we have,
\begin{align*}
    |\sum_{t=1}^T f_t^j(x_t,y_t)-\min_x \max_y \sum_{t=1}^T  f_t^j(x,y)| \leq o(T)\\
    \max_y \sum_{t=1}^T f_t^j(x_t,y)-\sum_{t=1}^T  f_t^j(x_t,y_t) \leq o(T)
\end{align*}

opening the modulus with a negative sign and combining we obtain for both functions,
\begin{align}\label{eqn:master}
   \max_y \sum_{t=1}^T f_t^j(x_t,y)-\min_x \max_y \sum_{t=1}^T  f_t^j(x,y) = \max_y \sum_{t=1}^\frac{T}{2}  f_t^j(x_t,y) \leq o(T)~\forall~j\in\{1,2\}
\end{align}

where the second equality follows since we have $\min_x \max_y f_t^{j}(x,y) = 0~\forall~j\in\{1,2\}$. 

We assume that $x$ and $y$ are two-dimensional and lie in a simplex, and parameterize them as $x = [\rho, 1-\rho]$ and $y = [\beta, 1-\beta]$. For $j=1$ we have from \eqref{eqn:master},
\begin{align}\label{eqn:imp1}
    \max_\beta [\sum_{t=1}^T (\rho_t^2+(1-\rho_t)^2-\frac{1}{4}(\beta^2+(1-\beta)^2))+(\sum_{t=1}^\frac{T}{2}4\rho_t\beta -2\rho_t -2\beta+1)] \leq o(T)\nonumber\\
    \sum_{t=1}^T (\rho_t^2+(1-\rho_t)^2)-\frac{T}{4}-2\sum_{t=1}^\frac{T}{2} \rho_t+\frac{T}{2} \leq o(T) \nonumber\\
    \sum_{t=1}^T (\rho_t^2+(1-\rho_t)^2)-2\sum_{t=1}^\frac{T}{2} \rho_t+\frac{T}{4} \leq o(T)
\end{align}

where the second inequality follows by setting $\beta = 0$. Similarly for $j=2$ we have,

\begin{align}\label{eqn:imp2}
    \max_\beta [\sum_{t=1}^T (\rho_t^2+(1-\rho_t)^2-\frac{1}{4}(\beta^2+(1-\beta)^2))-(\sum_{t=1}^\frac{T}{2}4\rho_t\beta -2\rho_t -2\beta+1)-2\beta T+T] \leq o(T)\nonumber\\
    \sum_{t=1}^T (\rho_t^2+(1-\rho_t)^2)-\frac{T}{4}+2\sum_{t=1}^\frac{T}{2} \rho_t=\frac{T}{2} +T\leq o(T) \nonumber\\
    \sum_{t=1}^T (\rho_t^2+(1-\rho_t)^2)+2\sum_{t=1}^\frac{T}{2} \rho_t+\frac{T}{4} \leq o(T) 
\end{align}

summing the equations \eqref{eqn:imp1} and \eqref{eqn:imp2}, and using the fact that the arithmetic mean is greater than the geometric mean we obtain
\begin{align*}
    o(T) \geq 2\sum_{t=1}^T (\rho_t^2+(1-\rho_t)^2)+\frac{T}{2} \geq \frac{3T}{2} 
\end{align*}

which is a contradiction. From Theorem \ref{thm:smt2combined} we have that our functions are exponentially-concave and satisfy the two-sided PL-inequality and thus there is an instance for each of these classes where it is not possible to have sub-linear bounds on the individual regrets as well as the static Nash Equilibrium regret.
\end{proof}

\newpage

\end{document}